\def\eqref#1{Eq.~(\ref{#1})}
\def\1{\bm{1}}
\def\vx{{\bm{x}}}
\DeclareMathAlphabet{\mathsfit}{\encodingdefault}{\sfdefault}{m}{sl}
\SetMathAlphabet{\mathsfit}{bold}{\encodingdefault}{\sfdefault}{bx}{n}
\def\gG{{\mathcal{G}}}
\def\gL{{\mathcal{L}}}
\def\gO{{\mathcal{O}}}
\def\gU{{\mathcal{U}}}
\DeclareMathOperator*{\argmin}{arg\,min}
\DeclareMathOperator{\sign}{sign}
\newcommand\norm[1]{\left\lVert#1\right\rVert}
\newcommand*{\Scale}[2][4]{\scalebox{#1}{$#2$}}%
\newcommand{\raisemath}[1]{\mathpalette{\raisem@th{#1}}}
\newcommand{\raisem@th}[3]{\raisebox{#1}{$#2#3$}}
\title{GLAD: Learning Sparse Graph Recovery}
\author{\hspace{-10mm}
  Harsh Shrivastava$^1$~
  Xinshi Chen$^2$~
  Binghong Chen$^1$~
  Guanghui Lan$^3$~
  Srinivas Aluru$^1$~
  Han Liu$^4$~
  Le Song$^{1,5}$ \\
  \hspace{-20mm}
  \begin{tabular}{c}
      $\{\prescript{1}{}{\text{School of Computational Science \& Engineering}}, \prescript{2}{}{\text{School of Mathematics}},$\\
  $\prescript{3}{}{\text{School of Industrial and Systems Engineering }} \} \text{at Georgia Institute of Technology},$\\ $\prescript{4}{}{\text{Computer Science Department at Northwestern University}}, \prescript{5}{}{\text{Ant Financial Services Group}}$
  \end{tabular}
  %Georgia Institute of Technology
  % examples of more authors
  % \And
  % Coauthor \\
  % Affiliation \\
  % Address \\
  % \texttt{email} \\
  % \AND
  % Coauthor \\
  % Affiliation \\
  % Address \\
  % \texttt{email} \\
  % \And
  % Coauthor \\
  % Affiliation \\
  % Address \\
  % \texttt{email} \\
  % \And
  % Coauthor \\
  % Affiliation \\
  % Address \\
  % \texttt{email} \\
}
\begin{document}
\maketitle

\begin{abstract}

Recovering sparse conditional independence graphs from data is a fundamental problem in machine learning with wide applications. A popular formulation of the problem is an $\ell_1$ regularized maximum likelihood estimation. Many convex optimization algorithms have been designed to solve this formulation to recover the graph structure. Recently, there is a surge of interest to learn algorithms directly based on data, and in this case, learn to map empirical covariance to the sparse precision matrix. 
However, it is a challenging task in this case, since the symmetric positive definiteness (SPD) and sparsity of the matrix are not easy to enforce in learned algorithms, and a direct mapping from data to precision matrix may contain many parameters. 
We propose a deep learning architecture, \texttt{GLAD}, which uses an Alternating Minimization (AM) algorithm as our model inductive bias, and learns the model parameters via supervised learning. We show that \texttt{GLAD} learns a very compact and effective model for recovering sparse graphs from data.
%\harsh{Mention about the theory results?}
\end{abstract}

%\harsh{Title suggestion
%GLAD: Graphical Lasso Algorithm using Data-driven learning
%}
% \harsh{Interesting paper : Equivalence of Graphical Lasso and Thresholding for Sparse Graphs \citep{sojoudi2016equivalence}. In this work, they show that plain thresholding of the sample covariance matrix without doing the optimization is also good enough under some sparsity conditions. I will have to read it properly to understand their arguments.}

% \harsh{Major concern:  What if one argues that they do early stopping for the traditional methods? Since the curve obtained is average over 1000 graphs, one can roughly expect the same behavior for new unseen graph from the traditional methods.}

% \harsh{Notes from the high-dimensional covariance estimation paper:
% \begin{enumerate}
%     \item Only off-diagonal elements are $l_1$ regularized
%     \item 
% \end{enumerate}

% Data generation condition checks
% \begin{enumerate}
%     \item The diagonals of the sample covariance matrix $\widehat{\sum^n}$ should be strictly positive. [put a check in the code]
% \end{enumerate}

% }

% \section{Framework of the paper}

% Key selling points of our method
% \begin{enumerate}
%     \item 
% \end{enumerate}{}

\vspace{-4mm}
\section{Introduction}
\vspace{-1mm}

\setlength{\abovedisplayskip}{3pt}
\setlength{\abovedisplayshortskip}{3pt}
\setlength{\belowdisplayskip}{3pt}
\setlength{\belowdisplayshortskip}{3pt}
\setlength{\jot}{2pt}
\setlength{\floatsep}{1ex}
\setlength{\textfloatsep}{1ex}
\setlength{\intextsep}{1ex}

Recovering sparse conditional independence graphs from data is a fundamental problem in high dimensional statistics and time series analysis, and it has found applications in diverse areas.
% , such as computational biology and finance. The recovery of the sparse graph structure is very useful in practical applications. 
In computational biology, a sparse graph structure between gene expression data may be used to understand gene regulatory networks; in finance, a sparse graph structure between financial time-series may be used to understand the relationship between different financial assets. A popular formulation of the problem is an $\ell_1$ regularization log-determinant estimation of the precision matrix. Based on this convex formulation, many algorithms have been designed to solve this problem efficiently, and one can formally prove that under a list of conditions, the solution of the optimization problem is guaranteed to recover the graph structure with high probability. 

However, convex optimization based approaches have their own limitations. The hyperparameters, such as the regularization parameters and learning rate, may depend on unknown constants, and need to be tuned carefully to achieve the recovery results. Furthermore, the formulation uses a single regularization parameter for all entries in the precision matrix, which may not be optimal. It is intuitive that one may obtain better recovery results by allowing the regularization parameters to vary across the entries in the precision matrix. However, such flexibility will lead to a quadratic increase in the number of hyperparameters, but it is hard for traditional approaches to search over a large number of hyperparameters. Thus, a new paradigm may be needed for designing more effective sparse recovery algorithms. 

Recently, there has been a surge of interest in a new paradigm of algorithm design, where algorithms are augmented with learning modules trained directly with data, rather than prescribing every step of the algorithms. This is meaningful because very often a family of optimization problems needs to be solved again and again, similar in structures but different in data. A data-driven algorithm may be able to leverage this distribution of problem instances, and learn an algorithm which performs better than traditional convex formulation. 
In our case, the sparse graph recovery problem may also need to be solved again and again, where the underlying graphs are different but have similar degree distribution, the magnitude of the precision matrix entries, etc.
For instance, gene regulatory networks may be rewiring depending on the time and conditions, and we want to estimate them from gene expression data. Company relations may evolve over time, and we want to estimate their graph from stock data. Thus, we will also explore data-driven algorithm design in this paper. 

Given a task (e.g. an optimization problem), an algorithm will solve it and provide a solution. Thus we can view an algorithm as a function mapping, where the input is the task-specific information (i.e. the sample covariance matrix in our case) and the output is the solution (i.e. the estimated precision matrix in our case). However, it is very challenging to design a data-driven algorithm for precision matrix estimation. First, the input and output of the problem may be large. A neural network parameterization of direct mapping from the input covariance matrix to the output precision matrix may require as many parameters as the square of the number of dimensions. Second, there are many structure constraints in the output. The resulting precision matrix needs to be positive definite and sparse, which is not easy to enforce by a simple deep learning architecture. Third, direct mapping may result in a model with lots of parameters, and hence may require lots of data to learn. Thus a data-driven algorithm needs to be designed carefully to achieve a better bias-variance trade-off and satisfy the output constraints. 

In this paper, we propose a deep learning model `\texttt{GLAD}' with following attributes:
\begin{itemize}[leftmargin=*,nolistsep]
\vspace{-0.5mm}
\item Uses an unrolled Alternating Minimization (AM) algorithm as an inductive bias.
\item The regularization and the square penalty terms are parameterized as entry-wise functions of intermediate solutions, allowing \texttt{GLAD} to learn to perform entry-wise regularization update.
\item Furthermore, this data-driven algorithm is trained with a collection of problem instances in a supervised fashion, by directly comparing the algorithm outputs to the ground truth graphs.
\vspace{-0.5mm}
\end{itemize}
% In this paper,
% \begin{itemize}
%     \item we use an unrolled Alternating Minimization (AM) algorithm as our inductive bias, and propose a deep learning model `\texttt{GLAD}' based on it.
%     \item In our deep model, the regularization parameters and the square penalty terms are parameterized as entry-wise functions of intermediate solutions, allowing the model to learn to perform entry-wise regularization and solution update.
% \end{itemize}
% Furthermore, this data driven algorithm is trained with a collection of problem instances in a supervised fashion, by directly comparing the algorithm outputs to the ground truth graphs. \Le{Glad somewhere, and itemize contribution.}

In our experiments, we show that the AM architecture provides very good inductive bias, allowing the model to learn very effective sparse graph recovery algorithm with a small amount of training data. In all cases, the learned algorithm can recover sparse graph structures with much fewer data points from a new problem, and it also works well in recovering gene regulatory networks based on realistic gene expression data generators. 

{\bf Related works.} \cite{belilovsky2017learning} considers CNN based architecture that directly maps empirical covariance matrices to estimated graph structures. Previous works have parameterized optimization algorithms as recurrent neural networks or policies in reinforcement learning. For instance,~\citet{andrychowicz2016learning} considered directly parameterizing optimization algorithm as an RNN based framework for learning to learn.~\citet{li2016learning} approach the problem of automating algorithm design from reinforcement learning perspective and represent any particular optimization algorithm as a policy.~\citet{khalil2017learning} learn combinatorial optimzation over graph via deep Q-learning. 
These works did not consider the structures of our sparse graph recovery problem. Another interesting line of approach is to develop deep neural networks based on unfolding an iterative algorithm~\cite{gregor2010learning, chen2018theoretical, liu2018alista}.~\citet{liu2018alista} developed ALISTA which is based on unrolling the Iterative Shrinkage Thresholding Algorithm (ISTA).~\citet{sun2016deep} developed `ADMM-Net', which is also developed for compressive sensing of MRI data. Though these seminal works were primarily developed for compressive sensing applications, they alluded to the general theme of using unrolled algorithms as inductive biases. We thus identify a suitable unrolled algorithm and leverage its inductive bias to solve the sparse graph recovery problem.
%and leveraged the advantages of unrolled algorithm techniques and apply it for the sparse graph recovery problem.
%\bh{Say something about whether they can be used to solve graph recovery problems? If yes, compare with our alg?}

% \Le{
% Restricted to learning to learn literature. 

% These are generic optimization, and do not consider the additional constraints in graph recovery problems. 

% The learning sparse recovery papers by Wotao Yin. It is about compressive sensing but not graph recovery. 

% ADMM-net etc. 

% Other learning algorithm work. Learning combinatorial optimzation over graph, pointer network.  
% }

\vspace{-3.5mm}
\section{Sparse Graph Recovery Problem and Convex Formulation}
\vspace{-2mm}

Given $m$ observations of a $d$-dimensional multivariate Gaussian random variable $X=[X_1,\ldots,X_d]^\top$, the sparse graph recovery problem aims to estimate its covariance matrix $\Sigma^*$ and precision matrix $\Theta^* = (\Sigma^*)^{-1}$. 
%The basic model assumes that $X$ is multivariate Gaussian. 
The $ij$-th component of $\Theta^*$ is zero if and only if $X_i$ and $X_j$ are conditionally independent given the other variables $\cbr{X_k}_{\raisemath{1.5pt}{k\neq i,j}}$. Therefore, it is popular to impose an $\ell_1$ regularization for the estimation of $\Theta^*$ to increase its sparsity and lead to easily interpretable models. Following \citet{banerjee2008model}, the problem is formulated as the $\ell_1$-regularized maximum likelihood estimation
\begin{align}
    \label{eq:sparse_concentration}
\Scale[0.95]{\widehat{\Theta} =    \argmin\nolimits_{\Theta \in \Scal_{++}^d} ~~- \log(\det{\Theta}) +  \text{tr}(\widehat{\Sigma}\Theta) + \rho\norm{\Theta}_{1,\text{off}} },
\vspace{-1mm}
\end{align}
where $\Scale[0.92]{\widehat{\Sigma}}$ is the empirical covariance matrix based on $m$ samples, $\Scale[0.92]{\Scal_{++}^d}$ is the space of $d\times d$ symmetric positive definite matrices (SPD), and $\Scale[0.92]{\norm{\Theta}_{1,\text{off}}=\sum_{i\neq j}|\Theta_{ij}|}$ is the off-diagonal $\ell_1$ regularizer with regularization parameter $\rho$. This estimator is sensible even for non-Gaussian $X$, since it is minimizing an $\ell_1$-penalized log-determinant Bregman divergence~\cite{ravikumar2011high}. The sparse precision matrix estimation problem in \eqref{eq:sparse_concentration} is a convex optimization problem which can be solved by many algorithms. We give a few canonical and advanced examples which
are compared in our experiments: 

{\bf G-ISTA.} G-ISTA is a proximal gradient method, and it updates the precision matrix iteratively
    \begin{align}
       \Scale[0.92]{ \Theta_{k+1} \leftarrow \eta_{\xi_k \rho}(\Theta_k - \xi_k (\widehat{\Sigma}- \Theta_k^{-1})), \quad} \text{where}~~ \Scale[0.92]{[\eta_{\rho}(X)]_{ij}:=\text{sign}(X_{ij})(|X_{ij}|-\rho)_{+}}.
    \end{align}
    The step sizes $\xi_k$ is determined by line search such that $\Theta_{k+1}$ is SPD matrix~\cite{rolfs2012iterative}.

 {\bf ADMM.} Alternating direction methods of multiplier \citep{boyd2011distributed} transform the problem into an equivalent constrained form, decouple the log-determinant term and the $\ell_1$ regularization term, and result in the following augmented Lagrangian form with a penalty parameter $\lambda$:
    \begin{align}
    -\log(\det{\Theta}) +  \text{tr}(\widehat{\Sigma}\Theta) + \rho\norm{Z}_1 + \langle \beta,  \Theta-Z \rangle + {\textstyle\frac{1}{2}\lambda }\|Z-\Theta\|_F^2.
    \end{align}
    Taking $U := \beta/\lambda$ as the scaled dual variable, the update rules for the ADMM algorithm are
    \begin{align}
        \Scale[0.95]{\Theta_{k+1}} & \leftarrow  \Scale[0.95]{\big(
        -Y + \sqrt{Y^\top Y + (4/\lambda) I} 
        \big)/2},~\text{where}~\Scale[0.95]{Y = \widehat{\Sigma}/\lambda - Z_k + U_k} \label{eq:sadmm-1} \\
        \Scale[0.95]{Z_{k+1}} & \leftarrow \eta_{\rho/\lambda} \Scale[0.95]{(\Theta_{k+1}+U_k) , \quad U_{k+1} \leftarrow U_k + \Theta_{k+1} - Z_{k+1}}\label{eq:sadmm-2}
    \end{align}
{\bf BCD.} Block-coordinate decent methods~\cite{friedman2008sparse} updates each column (and the corresponding row) of the precision matrix iteratively by solving a sequence of lasso problems. The algorithm is very efficient for large scale problems involving thousands of variables. 

%\harsh{
%HARSH: \{We need to change this to \cite{rothman2008sparse} paper conditions? As we are no longer using the \cite{ravikumar2011high} paper results??? Argue for room for improvement. \}
Apart from various algorithms, rigorous statistical analysis has also been provided for the optimal solution of the convex formulation in~\eqref{eq:sparse_concentration}. \citet{ravikumar2011high} established consistency of the estimator $\widehat{\Theta}$ in \eqref{eq:sparse_concentration} in terms of both Frobenius and spectral norms, at rate scaling roughly as $\|\widehat{\Theta}-\Theta^*\|=\gO\big(\rbr{(d+s)\log d /m}^{1/2}\big)$ with high probability, where $s$ is the number of nonzero entries in $\Theta^*$. This statistical analysis also reveal certain {\bf limitations} of the convex formulation:

%\Le{no need to say something about the regularization?}
%\begin{itemize}[leftmargin=*,nolistsep,nosep]
    %\item
    The established {consistency} is based on a set of {\bf carefully chosen conditions}, including the lower bound of sample size, the sparsity level of $\Theta^*$, the degree of the graph, the magnitude of the entries in the covariance matrix, and the strength of interaction between edge and non-edge in the precision matrix (or mutual incoherence on the Hessian $\Gamma^*:=\Sigma^*\otimes\Sigma^*$) . In practice, it may be hard to a problem to satisfy these recovery conditions. % \bh{no need for the bullet point here anymore since we only have one point}
    % related to the scaling of the regularization parameters,
%    \item The consistency also relies on choosing {\bf a specific regularization parameter $\rho$} to formulate the correct convex optimization problem. The choice of $\rho$ depends on the tail behavior of the maximum deviation $\max_{i,j}|\widehat{\Sigma}_{ij}-\Sigma^*_{ij}|$, which may not be accessible in practice. Furthermore, a set of parameters also need to be chosen to ensure the optimization problem is solved.
    % \item Besides, it requires solving the convex optimization exactly, which require careful tuning hyperparameters in order for the optimization algorithms to converge. For instance, G-ISTA require setting step size $\xi_t$, the parameters in the line search to achieve the convergence guarantee. In ADMM, the augmented lagrangian parameter needs to chosen properly for the algorithm to converge. 
%\end{itemize}

Therefore, it seems that there is still room for improving the above convex optimization algorithms for recovering the true graph structure.
Prior to the data-driven paradigm for sparse recovery, since the target parameter $\Theta^*$ is unknown, the best precision matrix recovery method is to resort to a surrogate objective function (for instance,  equation~\ref{eq:sparse_concentration}). Optimally tuning the unknown parameter $\rho$ is a very challenging problem in practice. Instead, we can leverage the large amount of simulation or real data and design a learning algorithm that directly optimizes the loss in equation~\ref{eq:loss-glad}.

Furthermore,
since the log-determinant estimator in \eqref{eq:sparse_concentration} is NOT directly optimizing the recovery objective $\|\widehat{\Theta}-\Theta^*\|_F^2$, there is also a {\bf mismatch} in the optimization objective and the final evaluation objective (refer to the first experiment in section~\ref{sec:data-driven}). This increase the hope one may improve the results by directly optimizing the recovery objective with the algorithms learned from data.

% \Le{
% Actually there are work showing that non-convex regularization can actually provide better statistical guarantee. See Jian Qing Fan from princeton about SCAD regularization. Not sure whether we should bring up this or whether we should compare in experiments. 
% }
\vspace{-1mm}
\section{Learning Data-Driven Algorithm for Graph Recovery}
\vspace{-1mm}

In the remainder of the paper, we will present a data-driven method to learn an algorithm for precision matrix estimation, and we call the resulting algorithm \texttt{GLAD} (stands for {\bf G}raph recovery {\bf L}earning {\bf A}lgorithm using {\bf D}ata-driven training). We ask the question of 
\begin{quote}
    \vspace{-1.5mm}
    Given a family of precision matrices, is it possible to improve recovery results for sparse graphs by learning a data-driven algorithm? 
    \vspace{-1.5mm}
\end{quote}

% In many practical settings, graph recovery problems needs to be solved repetitively. Each instance in this family of problems is different from each other, yet they share some similarities, such as the corresponding graphs have similar graph statistics, such as the degree distribution, the magnitude of the matrix entries, etc.
% For instance,
% \begin{itemize}[leftmargin=*,nolistsep,nosep]
%     \item Gene regulatory networks may be rewiring all the time, and one wants to estimate them from gene expression data, which may be different depending on the time and conditions of the measurement. However, these gene regulatory networks under different time and conditions may share similar feature since the function of networks needs to remain stable. 
%     \item Companies may have different interactions at different time, which may be manifested in the conditional independence structures of their stock prices. Since there are an inherent set of economic principles governing the operation of these companies. The networks may vary but still follow some normal patterns. 
% \end{itemize}

More formally, suppose we are given $n$ precision matrices $\{\Theta^{*(i)}\}_{i=1}^n$ from a family $\gG$ of graphs and $m$ samples $\{\vx^{(i,j)}\}_{j=1}^m$ associated with each $\Theta^{*(i)}$. These samples can be used to form $n$ sample covariance matrices $\{\widehat{\Sigma}^{(i)}\}_{i=1}^n$. We are interested in learning an algorithm for precision matrix estimation by solving a supervised learning problem, 
$\min\nolimits_{f}{\frac{1}{n}\sum_{i=1}^n}\gL(\texttt{GLAD}_f(\widehat{\Sigma}^{(i)}), \Theta^{*(i)})$, 
where $f$ is a set of parameters in $\texttt{GLAD}(\cdot)$ and the output of $\texttt{GLAD}_f(\widehat{\Sigma}^{(i)})$ is expected to be a good estimation of $\Theta^{*(i)}$ in terms of an interested evaluation metric $\Lcal$. The benefit is that it can directly optimize the final evaluation metric which is related to the desired structure or
graph properties of a family of problems.
However, it is a challenging task to design a good parameterization of $\texttt{GLAD}_f$ for this graph recovery problem. We will explain the challenges below and then present our solution.

\vspace{-1mm}
\subsection{Challenges in Designing Learning Models}
\vspace{-1mm}

\begin{wrapfigure}[21]{R}{0.14\textwidth}
\vspace{-3mm}
\includegraphics[width=0.99\textwidth]{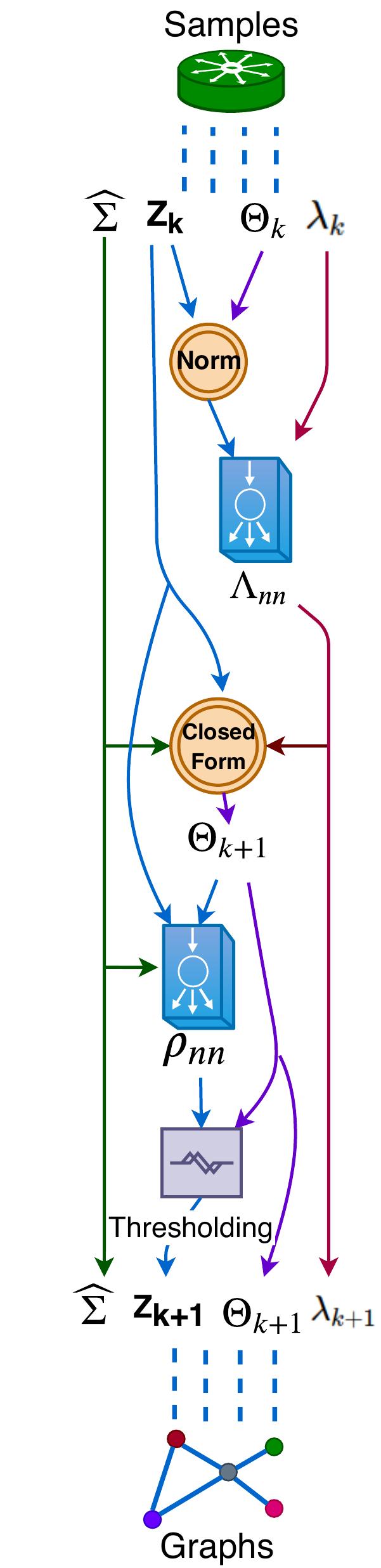}
\caption{\small A recurrent unit \texttt{GLADcell}.} 
\vspace{-3mm}
\label{fig:architecture}
\end{wrapfigure}
In the literature on learning data-driven algorithms, most models are designed using traditional deep learning architectures, such as fully connected DNN or recurrent neural networks. But, for graph recovery problems, directly using these architectures does not work well due to the following reasons.

First, using a fully connected neural network is not practical. Since both the input and the output of graph recovery problems are matrices, the number of parameters scales at least quadratically in $d$. Such a large number of parameters will need many input-output training pairs to provide a decent estimation. Thus some structures need to be imposed in the network to reduce the size of parameters and sample complexity.

Second, structured models such as convolution neural networks (CNNs) have been applied to learn a mapping from $\widehat{\Sigma}$ to $\Theta^*$~\citep{belilovsky2017learning}. Due to the structure of CNNs,
the number of parameters can be much smaller than fully connected networks. However, a recovered graph should be permutation invariant with respect to the matrix rows/columns, and this constraint is very hard to be learned by CNNs, unless there are lots of samples. Also, the structure of CNN is a bias imposed on the model, and there is no guarantee why this structure may work.
% in order to run this algorithm, an arbitrary order of the row/column of the matrix needs to be imposed, which is not natural. This requires the model to also learn the permutation invariance of the matrix row/column, which typically requires lots of samples. 

Third, the intermediate results produced by both fully connected networks and CNNs are not interpretable, making it hard to diagnose the learned procedures and progressively output increasingly improved precision matrix estimators.%and regularize the learned algorithm in intermediate steps. 
% For instance, it is not possible for previous learned algorithms to progressively output inverse covariance estimators with increasingly improved solutions. 

Fourth, the SPD constraint is hard to impose in traditional deep learning architectures.

Although, the above limitations do suggest a list of desiderata when designing learning models: Small model size; Minimalist learning; Interpretable architecture; Progressive improvement; and SPD output.
These desiderata will motivate the design of our deep architecture using unrolled algorithms.

\vspace{-2mm}
\subsection{\texttt{GLAD}: Deep Learning Model based on Unrolled Algorithm}
\vspace{-1mm}

To take into account the above desiderata, we will use an unrolled algorithm as the template for the architecture design of \texttt{GLAD}. The unrolled algorithm already incorporates some problem structures, such as permutation invariance and interpretable intermediate results; but this unrolled algorithm does not traditionally have a learning component, and is typically not directly suitable for gradient-based approaches. We will leverage this inductive bias in our architecture design and augment the unrolled algorithm with suitable and flexible learning components, and then train these embedded models with stochastic gradient descent.

\texttt{GLAD} model is based on a reformulation of the original optimization problem in \eqref{eq:sparse_concentration} with a squared penalty term, and an alternating minimization (AM) algorithm for it. More specifically, we consider a modified optimization with a quadratic penalty parameter $\lambda$: 
\begin{equation}
 \widehat{\Theta}_{\lambda}, \widehat{Z}_{\lambda} := { \textstyle\argmin_{\Theta, Z \in \mathcal{S}_{++}^{d}}}  - \log(\det{\Theta}) + \text{tr}(\widehat{\Sigma}\Theta) + \rho \norm{Z}_1 + {\textstyle \frac{1}{2}\lambda}\norm{Z-\Theta}^2_F
   \label{eq:qpenalty_formulation}
\end{equation}
and the alternating minimization (AM) method for solving it:
\begin{align}
    &\Theta_{k+1}^{\text{AM}} \leftarrow {\textstyle\frac{1}{2} }\big(
        -Y + \sqrt{Y^\top Y + {\textstyle\frac{4}{\lambda} } I} 
        \big),~\text{where}~Y ={\textstyle\frac{1}{\lambda} }\widehat{\Sigma} - Z_k^{\text{AM}}; \label{eq:qpenalty_optstep1}\\
    &Z_{k+1}^{\text{AM}} \leftarrow \eta_{\rho/\lambda}(\Theta_{k+1}^{\text{AM}}), 
    \label{eq:qpenalty_optstep2}
\end{align}
where $\eta_{\rho/\lambda}(\theta):= \text{sign}(\theta)\max(|\theta|-\rho/\lambda, 0)$. The derivation of these steps are given in Appendix~\ref{app:derive}.
We replace the penalty constants $(\rho,\lambda)$ by problem dependent neural networks, $\rho_{nn}$ and $\Lambda_{nn}$. These neural networks are minimalist in terms of the number of parameters as the input dimensions are mere $\{3, 2\}$ for $\{\rho_{nn}, \Lambda_{nn}\}$ and outputs a single value. Algorithm~\ref{algo:glad} summarizes the update equations for our unrolled AM based model, \texttt{GLAD}. Except for the parameters in $\rho_{nn}$ and $\Lambda_{nn}$, the constant $t$ for initialization is also a learnable scalar parameter. This unrolled algorithm with neural network augmentation can be viewed as a highly structured recurrent architecture as illustrated in Figure~\ref{fig:architecture}. %Section~\ref{sec:exp} and Appendix~\ref{apx:glad-architecture-details} contain more details about the exact structure of these neural networks used in our experiments.%\Le{I didn't see the detail in setion 5. please refer to appendix?}\harsh{fixing}

\begin{wrapfigure}[19]{R}{0.38\textwidth}
    %\vspace{-5mm}
    \begin{algorithm}[H]
      \DontPrintSemicolon
      \SetKwFunction{Grad}{Grad}
      \SetKwProg{Fn}{Function}{:}{}
      \SetKwFor{uFor}{For}{do}{}
      \SetKwFor{ForPar}{For all}{do in parallel}{}
      \SetKwFunction{GLADcell}{GLADcell}
      \SetKwFunction{GLAD}{GLAD}
        \Fn{\GLADcell{$\widehat{\Sigma},\Theta,Z,\lambda$}}{
          $\lambda \gets \Lambda_{nn}(\norm{Z-\Theta}^2_F,\lambda)$\;
          $Y \gets  \lambda^{-1}\widehat{\Sigma} - Z$\;
          $\Theta \gets  {\textstyle\frac{1}{2}}\big(
        -Y + \sqrt{Y^\top Y + {{\textstyle\frac{4}{\lambda}}I }})$\;%\label{eq:gladcell_2}
          \uFor{all $i,j$}{
                $\rho_{ij} = \rho_{nn}(\Theta_{ij}, \widehat{\Sigma}_{ij}, Z_{ij})$\;
                $Z_{ij}\gets \eta_{\rho_{ij}}(\Theta_{ij})$\;
            }
         \KwRet $\Theta,Z,\lambda$
        }
        \Fn{\GLAD{$\widehat{\Sigma}$}}{
            $\Theta_0 \gets (\widehat{\Sigma}+tI)^{-1}$, $\lambda_0\gets1$\;
            \uFor{$k=0$ to $K-1$}{
                $\Theta_{k+1},Z_{k+1},\lambda_{k+1}$\; $\gets$\GLADcell{$\widehat{\Sigma},\Theta_{k},Z_{k},\lambda_{k}$}
            }
            \KwRet $\Theta_K,Z_K$
        }
    \caption{GLAD }\label{algo:glad}
    \end{algorithm}
\end{wrapfigure}

There are many traditional algorithms for solving graph recovery problems. We choose AM as our basis because: First, empirically, we tried models built upon other algorithms including G-ISTA, ADMM, etc, but AM-based model gives consistently better performances. Appendix \ref{apx-sec: compare-neuralized} \& \ref{apx-sec:different-designs} discusses different parameterizations tried. Second, and more importantly, the AM-based architecture has a nice property of maintaining $\Theta_{k+1}$ as a SPD matrix throughout the iterations as long as $\lambda_k < \infty$. Third, as we prove later in Section~\ref{sec:theory}, the AM algorithm has linear convergence rate, allowing us to use a fixed small number of iterations and still achieve small error margins.

\vspace{-2mm}
\subsection{Training algorithm}
\vspace{-1mm}

To learn the parameters in \texttt{GLAD} architecture, we will directly optimize the recovery objective function rather than using log-determinant objective. A nice property of our deep learning architecture is that each iteration of our model will output a valid precision matrix estimation. This allows us to add auxiliary losses to regularize the intermediate results of our \texttt{GLAD} architecture, guiding it to learn parameters which can generate a smooth solution trajectory. 

Specifically, we will use Frobenius norm in our experiments, and design an objective which has some resemblance to the discounted cumulative reward in reinforcement learning:  
\begin{align}\label{eq:loss-glad}
    \min_f~~ 
    &\text{loss}_{f} := \frac{1}{n}\sum_{i = 1}^n \sum_{k=1}^K \gamma^{K-k} \norm{\Theta_{k}^{(i)} -\Theta^{*}}^2_F,
    %\widehat \Sigma^{(i)}),~~\Theta_{1}^{(i)}=\texttt{GLADcell}_f(\Theta_0, \widehat \Sigma^{(i)})
\end{align}
where $\Scale[0.92]{(\Theta_{k}^{(i)}, Z_{k}^{(i)}, \lambda_{k}^{(i)})=\texttt{GLADcell}_f(\widehat{\Sigma}^{(i)}, \Theta_{k-1}^{(i)}, Z_{k-1}^{(i)}, \lambda_{k-1}^{(i)})}$ is the output of the recurrent unit $\texttt{GLADcell}$ at $k$-th iteration, $K$ is number of unrolled iterations, and $\gamma\leq 1$ is a discounting factor. 

We will use stochastic gradient descent algorithm to train the parameters $f$ in the \texttt{GLADcell}. A key step in the gradient computation is to propagate gradient through the matrix square root in the \texttt{GLADcell}. To do this efficiently, we make use of the property of SPD matrix that $X=X^{1/2}X^{1/2}$, and the product rule of derivatives to obtain 
\begin{align}
    \label{eq:derivative_squareroot}
    dX=d(X^{1/2})X^{1/2}+X^{1/2}d(X^{1/2}). 
\end{align}
The above equation is a Sylvester's equation for $d(X^{1/2})$. 
Since the derivative $dX$ for $X$ is easy to obtain, then the derivative of $d(X^{1/2})$ can be obtained by solving the Sylvester's equation in (\ref{eq:derivative_squareroot}).

The objective function in equation~\ref{eq:loss-glad} should be understood in a similar way as in \cite{gregor2010learning, belilovsky2017learning, liu2018alista} where deep architectures are designed to directly produce the sparse outputs.

For \texttt{GLAD} architecture, a collection of input covariance matrix and ground truth sparse precision matrix pairs are available during training, either coming from simulated or real data. Thus the objective function in equation~\ref{eq:loss-glad} is formed to directly compare the output of \texttt{GLAD} with the ground truth precision matrix. The goal is to train the deep architecture which can perform well for a family/distribution of input covariance matrix and ground truth sparse precision matrix pairs. The average in the objective function is over different input covariance and precision matrix pairs such that the learned architecture is able to perform well over a family of problem instances.

Furthermore, each layer of our deep architecture outputs an intermediate prediction of the sparse precision matrix. The objective function takes into account all these intermediate outputs, weights the loss according to the layer of the deep architecture, and tries to progressively bring these intermediate layer outputs closer and closer to the target ground truth. 

\subsection{A note on \texttt{GLAD} architecture's expressive ability}
We note that the designed architecture, is more flexible than just learning the regularization parameters. The component in GLAD architecture corresponding to the regularization parameters are entry-wise and also adaptive to the input covariance matrix and the intermediate outputs. GLAD architecture can adaptively choose a matrix of regularization parameters. This task will be very challenging if the matrix of regularization parameters are tuned manually using cross-validation. A recent theoretical work~\cite{sun2018graphical} also validates the choice of \texttt{GLAD}'s design.

\section{Theoretical Analysis} \label{sec:theory}
\vspace{-1mm}

Since \texttt{GLAD} architecture is obtained by augmenting an unrolled optimization algorithm by learnable components, the question is what kind of guarantees can be provided for such learned algorithm, and whether learning can bring benefits to the recovery of the precision matrix. In this section, we will first analyze the statistical guarantee of running the AM algorithm in~\eqref{eq:qpenalty_optstep1} and ~\eqref{eq:qpenalty_optstep2} for $k$ steps with a fixed quadratic penalty parameter $\lambda$, and then interpret its implication for the learned algorithm. First, we need some standard assumptions about the true model from the literature~\cite{rothman2008sparse}:

\begin{restatable}{assumption}{aone}
    \label{ass:theta-true1}
    Let the set $S=\{(i, j):\Theta^*_{ij} \neq 0, i\neq j\}$. Then card$(S) \leq s$.
\end{restatable}

\begin{restatable}{assumption}{atwo}
\label{ass:theta-true2}
    $\Lambda_{\min}(\Sigma^*)\geq \epsilon_1 > 0$ (or equivalently $\Lambda_{\max}(\Theta^*)\leq 1/\epsilon_1$), $\Lambda_{\max}(\Sigma^*)\leq \epsilon_2$ and an upper bound on $\|\widehat{\Sigma}\|_2 \leq c_{\widehat{\Sigma}}$.
\end{restatable}

% \begin{restatable}{assumption}{athree}
% \label{ass:theta-true3}
%     $\Lambda_{\max}(\Sigma^*)\leq \epsilon_2$
% \end{restatable}

% \begin{restatable}{assumption}{afour}
% \label{ass:theta-hat}
%     $\norm{\widehat{\Sigma}}_2 \leq c_{\widehat{\Sigma}}$
% \end{restatable}

%
The assumption~\ref{ass:theta-true2} guarantees that $\Theta^*$ exists. Assumption~\ref{ass:theta-true1} just upper bounds the sparsity of $\Theta^*$ and does not stipulate anything in particular about $s$. 
These assumptions characterize the fundamental limitation of the sparse graph recovery problem, beyond which recovery is not possible. Under these assumptions, we prove the linear convergence of AM algorithm (proof is in Appendix~\ref{app:thm-proof}). 
\begin{restatable}{theorem}{thm}
\label{eq:thm-bound}
Under the assumptions \ref{ass:theta-true1} \& \ref{ass:theta-true2}, if $\rho\asymp \sqrt{\frac{\log d}{m}}$, where $\rho$ is the $l_1$ penalty, $d$ is the  dimension of problem and $m$ is the number of samples, the Alternate Minimization algorithm has linear convergence rate for optimization objective defined in (\ref{eq:qpenalty_formulation}). The $k^{th}$ iteration of the AM algorithm satisfies,
\begin{equation}
\begin{split}
    \norm{\Theta_k^{\text{AM}} - \Theta^*}_F \leqslant 
    C_{\lambda}\norm{\Theta_{k-1}^{\text{AM}} - \widehat{\Theta}_{\lambda}}_F + \mathcal{O}_{\mathbb{P}}\left(\sqrt{\frac{(\log d) /m}{\min(\frac{1}{(d+s)}, \frac{\lambda}{d^2})}}\right),
\end{split}
\end{equation}
where $0 < C_\lambda < 1$ is a constant depending on $\lambda$.
\end{restatable}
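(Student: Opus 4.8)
The plan is to route the error through the penalized optimum of \eqref{eq:qpenalty_formulation} by the triangle inequality,
\[
\norm{\Theta_k^{\text{AM}} - \Theta^*}_F \leq \norm{\Theta_k^{\text{AM}} - \widehat{\Theta}_{\lambda}}_F + \norm{\widehat{\Theta}_{\lambda} - \Theta^*}_F ,
\]
and to control the two pieces separately. The first is an \emph{optimization} error measuring how fast the AM iterates approach the minimizer $\widehat{\Theta}_\lambda$ of the fixed-$\lambda$ objective; I will show it contracts geometrically, producing the $C_\lambda\norm{\Theta_{k-1}^{\text{AM}} - \widehat{\Theta}_\lambda}_F$ term. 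The second is a \emph{statistical} error measuring the bias of the penalized estimator relative to the ground truth; I will show it equals $\mathcal{O}_{\mathbb{P}}$ of the stated rate and is independent of $k$.

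\textbf{Optimization contraction.} I would view \eqref{eq:qpenalty_optstep1} and \eqref{eq:qpenalty_optstep2} as a fixed-point iteration with fixed point $(\widehat{\Theta}_\lambda, \widehat{Z}_\lambda)$. The $Z$-step is the soft-thresholding map $\eta_{\rho/\lambda}$, which is $1$-Lipschitz (non-expansive) in Frobenius norm. The $\Theta$-step is the proximal operator of the smooth convex function $g(\Theta) = -\log\det\Theta + \text{tr}(\widehat{\Sigma}\Theta)$; differentiating its optimality condition $\Theta^{-1} = \lambda\Theta + \widehat{\Sigma} - \lambda Z$ in a direction $H = dZ$ gives the Sylvester relation $\lambda H = \lambda\,(d\Theta) + \Theta^{-1}(d\Theta)\Theta^{-1}$, whose eigenvalues in the eigenbasis of $\Theta$ are $\lambda + \mu_i\mu_j$ with $\mu_i$ the eigenvalues of $\Theta^{-1}$. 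Hence the $\Theta$-step is Lipschitz in $Z$ with constant $\lambda/(\lambda + \mu_{\min}^2)$, where $\mu_{\min} = 1/\Lambda_{\max}(\Theta)$. Composing the non-expansive $Z$-step with this strictly contractive $\Theta$-step yields the one-step bound with $C_\lambda = \lambda/(\lambda + \mu_{\min}^2) < 1$. To make $C_\lambda$ a genuine constant I must first establish uniform eigenvalue bounds on the iterates: from the closed form $\Theta_{k+1}^{\text{AM}} = \tfrac12(-Y + \sqrt{Y^\top Y + \tfrac{4}{\lambda} I})$ the spectrum is automatically positive, and Assumption~\ref{ass:theta-true2} together with $\norm{\widehat{\Sigma}}_2 \le c_{\widehat{\Sigma}}$ lets me bound $\norm{Y}_2$ (via a uniform control on the thresholded iterate $Z_k$ inherited from the previous step) and hence $\Lambda_{\max}(\Theta_{k+1}^{\text{AM}})$ uniformly in $k$, closing the induction.

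\textbf{Statistical error of $\widehat{\Theta}_\lambda$.} For the second term I would use the classical deviation-versus-curvature comparison for penalized log-det estimators. The driving randomness is the entrywise deviation $\norm{\widehat{\Sigma} - \Sigma^*}_\infty = \mathcal{O}_{\mathbb{P}}(\sqrt{(\log d)/m})$, matching the prescribed scale $\rho \asymp \sqrt{(\log d)/m}$. Writing $\Delta = \widehat\Theta_\lambda - \Theta^*$ and using optimality of $(\widehat\Theta_\lambda, \widehat Z_\lambda)$, the first-order (deviation) term is of order $\rho\norm{\Delta}_1$, while the second-order term inherits curvature from the log-determinant Hessian $\Theta^{-1}\otimes\Theta^{-1}$, bounded below by a constant of order $1/\Lambda_{\max}(\Theta^*)^2$ on the solution path (Assumption~\ref{ass:theta-true2}). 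On the $(d+s)$-dimensional effective support (Assumption~\ref{ass:theta-true1}) one has $\norm{\Delta}_1 \le \sqrt{d+s}\,\norm{\Delta}_F$, giving the classical branch $\norm{\Delta}_F^2 \lesssim (d+s)\rho^2$. When $\lambda$ is small, however, the coupling $\tfrac{\lambda}{2}\norm{Z-\Theta}_F^2$ is the only curvature tying the \emph{unconstrained} block $Z$ (all $d^2$ entries) to $\Theta$, which inflates the effective dimension to $d^2$ and divides the curvature by $\lambda$, giving the second branch $\norm{\Delta}_F^2 \lesssim (d^2/\lambda)\rho^2$. Retaining the larger of the two branches is exactly $\rho^2/\min(1/(d+s),\lambda/d^2)$, yielding the stated $\mathcal{O}_{\mathbb{P}}$ rate.

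\textbf{Main obstacle and conclusion.} I expect the statistical step to be the crux: unlike the single-block analyses of \citet{rothman2008sparse} and \citet{ravikumar2011high}, the estimator here is a \emph{two-block} penalized minimizer, and the quadratic coupling $\tfrac{\lambda}{2}\norm{Z-\Theta}_F^2$ perturbs the familiar $\ell_1$-penalized log-det solution in a $\lambda$-dependent way. The delicate part is tracking how the un-thresholded block $Z$ contributes curvature over all $d^2$ entries while the log-det curvature effectively lives on the $(d+s)$-dimensional support, so as to extract the correct factor $\min(1/(d+s),\lambda/d^2)$; the natural sanity check is that as $\lambda\to\infty$ the bound collapses to the Rothman rate $\sqrt{(d+s)(\log d)/m}$. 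Adding the geometric optimization bound to this statistical bound through the triangle inequality reproduces exactly the claimed recursion, completing the proof.
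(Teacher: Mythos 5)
Your overall architecture is sound and the optimization half is essentially right, but you take a genuinely different --- and in one place incomplete --- route on the statistical half. For the contraction, the paper works directly with the closed form $\Theta_{k+1}^{\text{AM}}=\tfrac12\big(-Y+\sqrt{Y^\top Y+\tfrac{4}{\lambda}I}\big)$: it proves a scalar inequality for $x\mapsto\sqrt{x^2+k}$, lifts it through the eigendecomposition to show $\norm{A(X)-A(Y)}_F\le\alpha_\lambda\norm{X-Y}_F$ with $\alpha_\lambda<1$, and combines this with the triangle inequality and the non-expansiveness of $\eta_{\rho/\lambda}$ to get $C_\lambda=\tfrac12(1+\alpha_\lambda)<1$. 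Your implicit differentiation of the optimality condition $\Theta^{-1}=\widehat{\Sigma}+\lambda\Theta-\lambda Z$ yields an equivalent contraction factor and is a serviceable alternative, provided you integrate the derivative bound along the segment between the two inputs and keep the eigenvalue control uniform in $k$ --- a bookkeeping step both arguments need and which the paper supplies by bounding $\Lambda_{max}(Y_{k+1})$ and $\Lambda_{max}(Y_{\lambda})$ under Assumption~\ref{ass:theta-true2}.

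The real divergence is the statistical term. You attack $\norm{\widehat{\Theta}_\lambda-\Theta^*}_F$ head-on with a basic-inequality/curvature argument for the two-block estimator, and the ``two branches'' step ($\norm{\Delta}_F^2\lesssim(d+s)\rho^2$ versus $\norm{\Delta}_F^2\lesssim(d^2/\lambda)\rho^2$) is exactly the part you flag as the crux but never establish; as written it is a plausibility argument, not a proof. The paper sidesteps this entirely with a \emph{three}-term triangle inequality routed through the graphical lasso minimizer $\widehat{\Theta}_{\mathcal{G}}$: (i) the inequality $f(\widehat{\Theta}_\lambda,\widehat{Z}_\lambda)\le \mathcal{G}(\widehat{\Theta}_{\mathcal{G}};\rho)$ together with the fixed-point identity $\widehat{Z}_\lambda=\eta_{\rho/\lambda}(\widehat{\Theta}_\lambda)$ gives $0\le\mathcal{G}(\widehat{\Theta}_\lambda;\rho)-\mathcal{G}(\widehat{\Theta}_{\mathcal{G}};\rho)\le\rho^2 d^2/\lambda$, and strong convexity of $\mathcal{G}$ converts this into $\norm{\widehat{\Theta}_\lambda-\widehat{\Theta}_{\mathcal{G}}}_F=O(\rho d\sqrt{1/\lambda})$; (ii) the remaining piece $\norm{\widehat{\Theta}_{\mathcal{G}}-\Theta^*}_F=\mathcal{O}_{\mathbb{P}}(\sqrt{(d+s)\log d/m})$ is quoted from Theorem~1 of \cite{rothman2008sparse}. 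Summing the two pieces with $\rho\asymp\sqrt{(\log d)/m}$ reproduces the $\min(1/(d+s),\lambda/d^2)$ rate without ever analyzing the coupled estimator. To complete your version, I recommend adopting this intermediate comparison: it replaces the delicate two-block curvature accounting with one line of strong convexity plus a citation, and it makes your $\lambda\to\infty$ sanity check automatic.
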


From the theorem, one can see that by optimizing the quadratic penalty parameter $\lambda$, one can adjust the $C_{\lambda}$ in the bound. 
%trade-off between the first two terms $C_{\lambda}$ and $\lambda^{-1/2}$ in the bound.\bh{no longer a $\lambda^{-1/2}$ in the second term} 
We observe that at each stage $k$, an optimal penalty parameter $\lambda_k$ can be chosen depending on the most updated value $C_{\lambda}$. An adaptive sequence of penalty parameters $(\lambda_1,\ldots,\lambda_K)$ should achieve a better error bound compared to a fixed $\lambda$. 
Since $C_{\lambda}$ is a very complicated function of $\lambda$, the optimal $\lambda_k$ is hard to choose manually. %thus requires learning.

Besides, the linear convergence guarantee in this theorem is based on the sparse regularity parameter $\rho \asymp \sqrt{\frac{\log d}{m}}$. However, choosing a good $\rho$ value in practice is tedious task as shown in our experiments.

%However, the term $\bar{\delta}_h(m,d^{\tau})$  characterizing the tail behavior may not be accessible in practice. 

In summary, the implications of this theorem are:
\begin{itemize}[leftmargin=*,nolistsep]
    \item An adaptive sequence $(\lambda_1,\ldots,\lambda_K)$ should lead to an algorithm with better convergence than a fixed $\lambda$, but the sequence may not be easy to choose manually. 
    \item Both $\rho$ and the optimal $\lambda_k$ depend on the corresponding error $\|\Theta^{\text{AM}}-\widehat{\Theta}_\lambda\|_F$, which make these parameters hard to prescribe manually.
    \item Since, the AM algorithm has a fast linear  convergence rate, we can run it for a fixed number of iterations $K$ and still converge with a reasonable error margin.
\end{itemize}
% Furthermore, one can design a sequence of $L$ problem parameters $(\lambda_1,\rho_1),\ldots,(\lambda_L,\rho_L)$ and solve the sparse graph recovery problem in $L$ stages, where the intialization of the optimization problem in $\ell$-th stage will be the result of the optimization in $\ell-1$-th stage. Furthermore, for each stage, we only run $k_\ell$ step of AM updates for a fixed $\lambda$ and $\rho$ (in the extreme case $k_\ell = 1$). Theorem~\ref{th:bound} suggests that one can compose the error in each stage together and obtain an error bound of this multiple stage optimization procedure. 

%A recent work by \citet{sun2018graphical} considered minimizing the graphical lasso objective with a general nonconvex penalty. They showed that by iteratively solving a sequence of adaptive convex programs they are able to achieve even better error margins (refer their Algorithm 1 \& Theorem 3.5). We thus hypothesize that we can further improve our error margin if we adaptively choose the penalty parameter $\rho$ based on the most recent solution.

% further improves the error bounds 
% \harsh{Can we cite the \cite{sun2018graphical} work here and say that better convergence rate is achieved by solving multiple lasso problems in an iterative manner with non-convex penalties. We hope that adaptively choosing of $\rho$ and $\lambda$ can help to that cause? }
Our learning augmented deep architecture, \texttt{GLAD}, can tune these sequence of $\lambda_k$ and $\rho$ parameters jointly using gradient descent.
Moreover, we refer to a recent work by \citet{sun2018graphical} where they considered minimizing the graphical lasso objective with a general nonconvex penalty. They showed that by iteratively solving a sequence of adaptive convex programs one can achieve even better error margins (refer their Algorithm 1 \& Theorem 3.5). In every iteration they chose an adaptive regularization matrix based on the most recent solution and the choice of nonconvex penalty. 
We thus hypothesize that we can further improve our error margin if we make the penalty parameter $\rho$ nonconvex and problem dependent function.  We choose $\rho$ as a function depending on the most up-to-date solution ($\Theta_k, \widehat{\Sigma}, Z_k$), and allow different regularizations for different entries of the precision matrix. Such flexibility potentially improves the ability of \texttt{GLAD} model to recover the sparse graph. 

%\harsh{INCLUDE THIS ?? Although the above analysis provides good intuition on how learning can help to improve the recovery guarantee, the error bound is loose because it is derived based on existing analytical results which assumes a fixed $\rho$ and $\lambda$. A tighter analysis may be obtained by directly bounding the output of $\texttt{GLAD}$ step from $\Theta^*$, which will be an interesting future work. }

\vspace{-1mm}

%%%%%%%%%%%%%%%%%%%%%%%%%%%%%%%%%%%%%%%%%%%%%%%%%%%%%%%%%%%%%%%%%%%%%%%%
%%%%%%%%%%%%%%%% END: Theory results %%%%%%%%%%%%%%%%%%%%%%%%%%%%%%%%%%%

\vspace{-1mm}
\section{Experiments}
\label{sec:exp}
\vspace{-1mm}

In this section, we report several experiments to compare $\texttt{GLAD}$ with traditional algorithms and other data-driven algorithms. The results validate the list of desiderata mentioned previously. Especially, it shows the potential of pushing the boundary of traditional graph recovery algorithms by utilizing data. Python implementation (tested on P100 GPU) is available\footnote{code: https://drive.google.com/open?id=16POE4TMp7UUieLcLqRzSTqzkVHm2stlM}. Exact experimental settings details are covered in Appendix~\ref{apx:experimental-settings}.
%\footnote{https://github.com/nips2019paper1227/glad}
\textbf{Evaluation metric.} We use normalized mean square error (NMSE) and probability of success (PS) to evaluate the algorithm performance. NMSE is $10 \log_{10}(\mathbb{E}\norm{\Theta^p-\Theta^*}_F^2/\mathbb{E}\norm{\Theta^*}^2_F)$ and PS is the probability of correct signed edge-set recovery, i.e., $\PP\sbr{\sign(\Theta^p_{ij}) = \sign(\Theta^*_{ij}),\forall(i, j)\in\mathbf{E}(\Theta^*)}$, where $\mathbf{E}(\Theta^*)$ is the true edge set.
\textbf{Notation.} In all reported results, D stands for dimension $d$ of the random variable, M stands for sample size and N stands for the number of graphs (precision matrices) that is used for training.
%\Le{Where do you cite C.10 in the main text? Has the comments on experiments from nips reviewers been addressed?}

% The Normalized Mean Square Error (NMSE) between the predicted precision matrix $\Theta_p$ and the corresponding true precision matrix $\Theta_*$ was chosen as the primary metric for the structure recovery evaluation. The NMSE is defined in the decibel(dB) unit as 
% \begin{equation}
%     \text{NMSE}_{\text{dB}}(\Theta_p, \Theta_*) = 10 \log_{10}(E\norm{\Theta_p-\Theta_*}_F^2/E\norm{\Theta_*}^2_F)
% \end{equation}
% NMSE has been previously used as an evaluation metric for sparse coding problems~\cite{liu2018alista}. 

\vspace{-2mm}
\subsection{Benefit of data-driven gradient-based algorithm}
\label{sec:data-driven}
\vspace{-1mm}

\begin{wrapfigure}[10]{R}{0.5\textwidth}
\vspace{-3.5mm}
\includegraphics[width=0.49\textwidth,height=30mm]{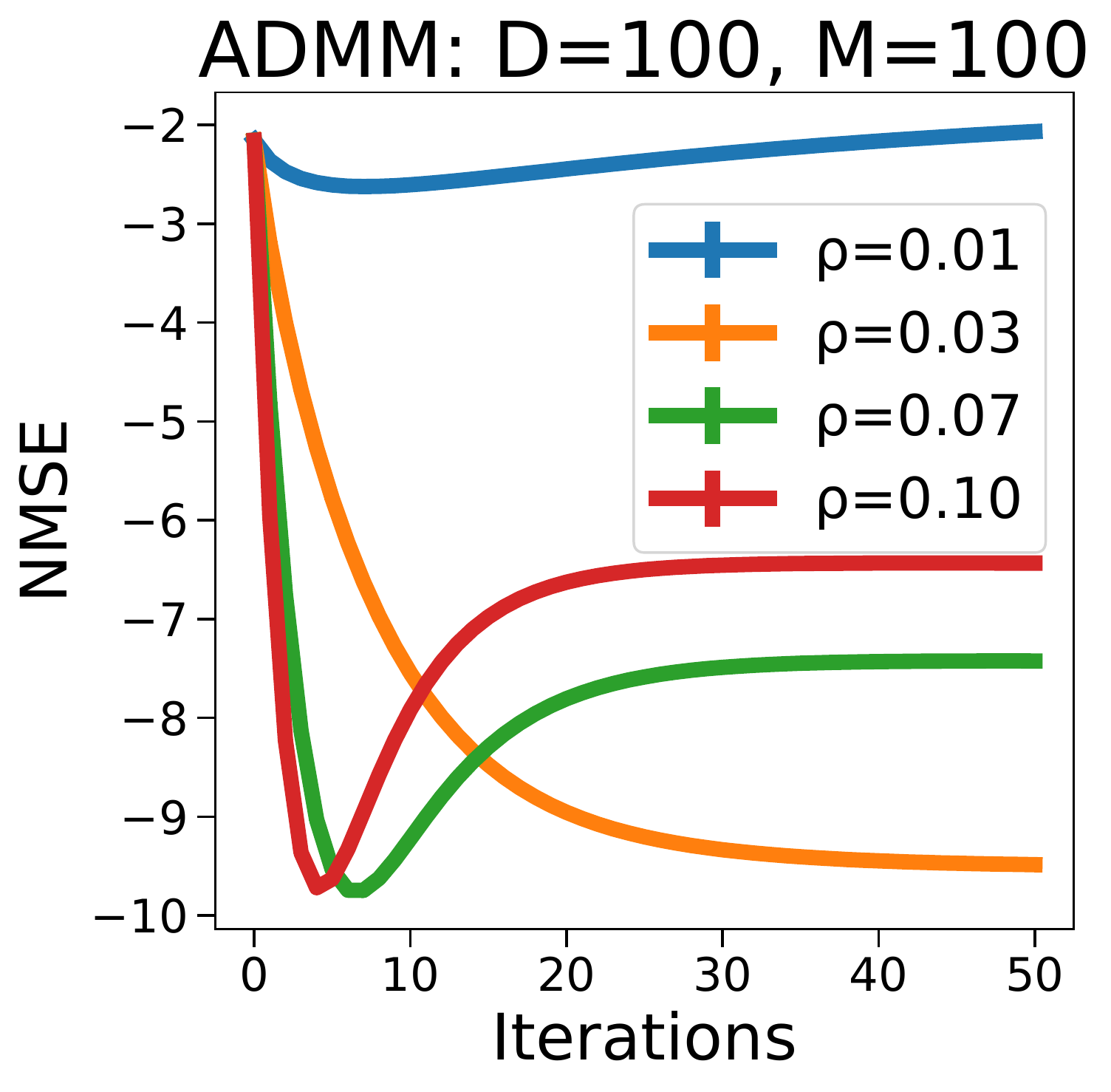}~\includegraphics[width=0.49\textwidth,height=30mm]{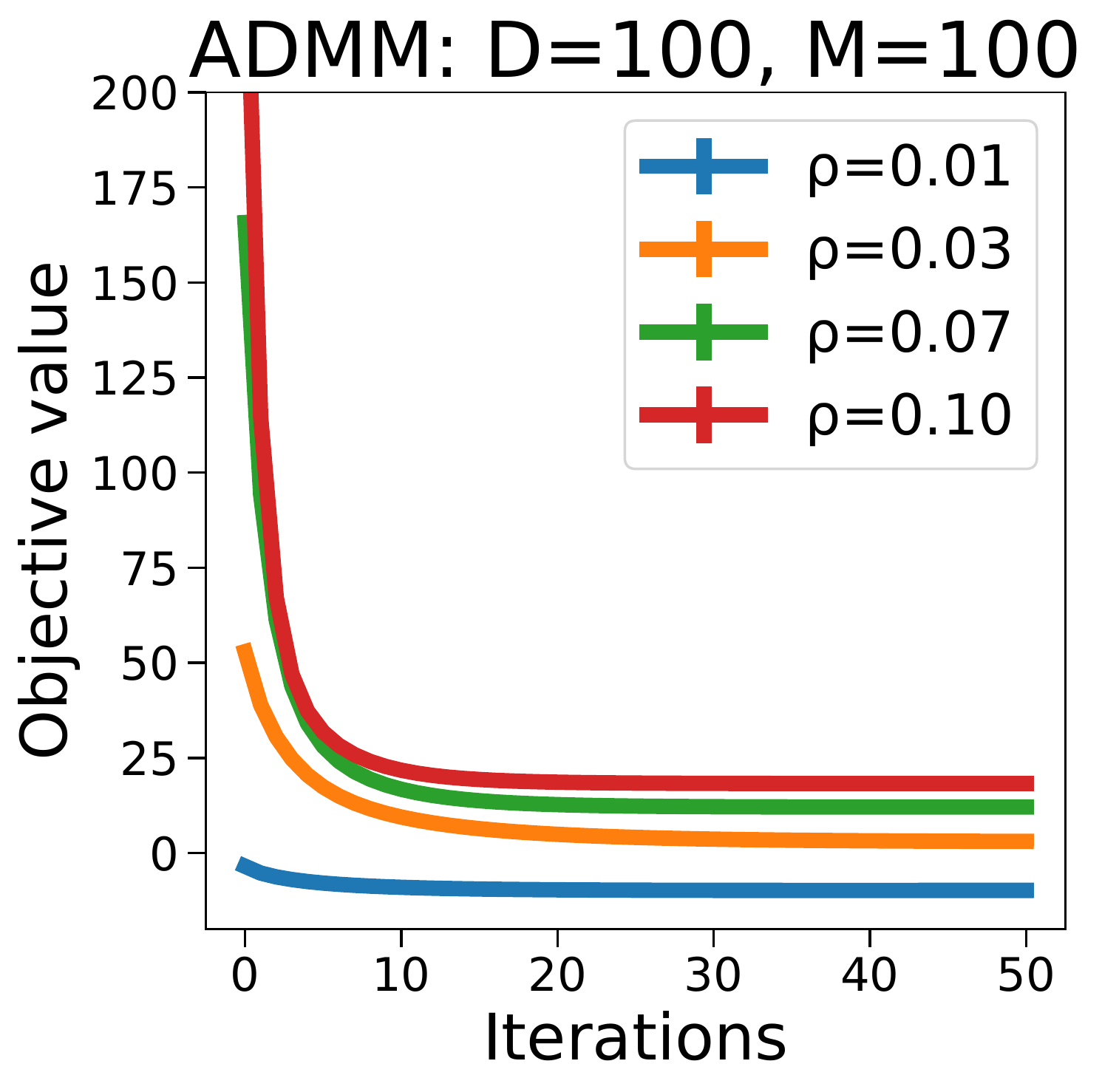}
\vspace{-6mm}
\caption{\small Convergence of ADMM in terms of NMSE and optimization objective. (Refer to Appendix~\ref{apx:benefit-data-driven}).}
\label{fig:benefit-data-driven}
\end{wrapfigure}

\textbf{Inconsistent optimization objective.} Traditional algorithms are typically designed to optimize the $\ell_1$-penalized log likelihood. Since it is a convex optimization, convergence to optimal solution is usually guaranteed. However, this optimization objective is different from the true error. Taking ADMM as an example, it is revealed in Figure~\ref{fig:benefit-data-driven} that, although the optimization objective always converges, errors of recovering true precision matrices measured by NMSE have very different behaviors given different regularity parameter $\rho$, which indicates the necessity of directly optimizing NMSE and hyperparameter tuning.

\begin{wraptable}[7]{R}{0.4\textwidth}
\vspace{-2mm}
\resizebox{\textwidth}{!}{
      \begin{tabular}{|@{\hspace*{2mm}}c|*{5}{c}|}\hline
      \backslashbox{$\bf \rho$}{$\bf \lambda$}
      &\makebox[1em]{5}&\makebox[1em]{1}&\makebox[1em]{0.5}
      &\makebox[1em]{\bf 0.1}&\makebox[1em]{0.01}\\\hline%\hline
      $0.01$	&-2.51	&-2.25	&-2.06	&-2.06	&-2.69 \\
      $\bf 0.03$	&-5.59	&-9.05	&\-9.48	&\bf{-9.61}	&-9.41\\
      $0.07$	&-9.53	&-7.58	&-7.42	&-7.38	&-7.46\\
      $0.1$	&-9.38	&-6.51	&-6.43	&-6.41	&-6.50\\
      $0.2$	&-6.76&	-4.68&	-4.55&	-4.47&	-4.80\\
      \hline
      \end{tabular}}
      \vspace{-2mm}
      \caption{\small NMSE results for ADMM.}
\label{table:efficiency-gradient-search}
\end{wraptable}
\textbf{Expensive hyperparameter tuning.} Although hyperparameters of traditional algorithms can be tuned if the true precision matrices are provided as a validation dataset, we want to emphasize that hyperparamter tuning by {\bf grid search} is a tedious and hard task. Table~\ref{table:efficiency-gradient-search} shows that the NMSE values are very sensitive to both $\rho$ and the quadratic penalty $\lambda$ of ADMM method. For instance, the optimal NMSE in this table is $-9.61$ when $\lambda=0.1$ and $\rho=0.03$. However, it will increase by a large amount to $-2.06$ if $\rho$ is only changed slightly to $0.01$. There are many other similar observations in this table, where slight changes in parameters can lead to significant NMSE differences, which in turns makes grid-search very expensive. G-ISTA and BCD follow similar trends.

For a fair comparison against $\texttt{GLAD}$ which is data-driven,
in all following experiments, all hyperparameters in traditional algorithms are {\bf fine-tuned} using validation datasets, for which we spent extensive efforts (See more details in Appendix~\ref{apx:grid-search},~\ref{apx-sec: hyperparameter-tuning}). In contrast, the gradient-based training of $\texttt{GLAD}$ turns out to be much easier.
% \begin{figure}[h]
% \vspace{-2mm}
% \floatbox[{\capbeside\thisfloatsetup{capbesideposition={left,center},capbesidewidth=4cm}}]{figure}[0.999\FBwidth]
% %\centering 
% {
% \subfigure{\includegraphics[width=88mm]{ICLR-2020/nn-architecture.pdf}}
% }
% {\caption{Neural network architecture details for synthetic data experiments for section(\ref{sec:convergence-performance}).}\label{fig:nn-design}}
% \end{figure}

\vspace{-2mm}
\subsection{Convergence}\label{sec:convergence-performance}
\vspace{-1mm}

\begin{wrapfigure}[8]{R}{0.6\textwidth}
\vspace{-10mm}
\includegraphics[width=80mm, height=25mm]{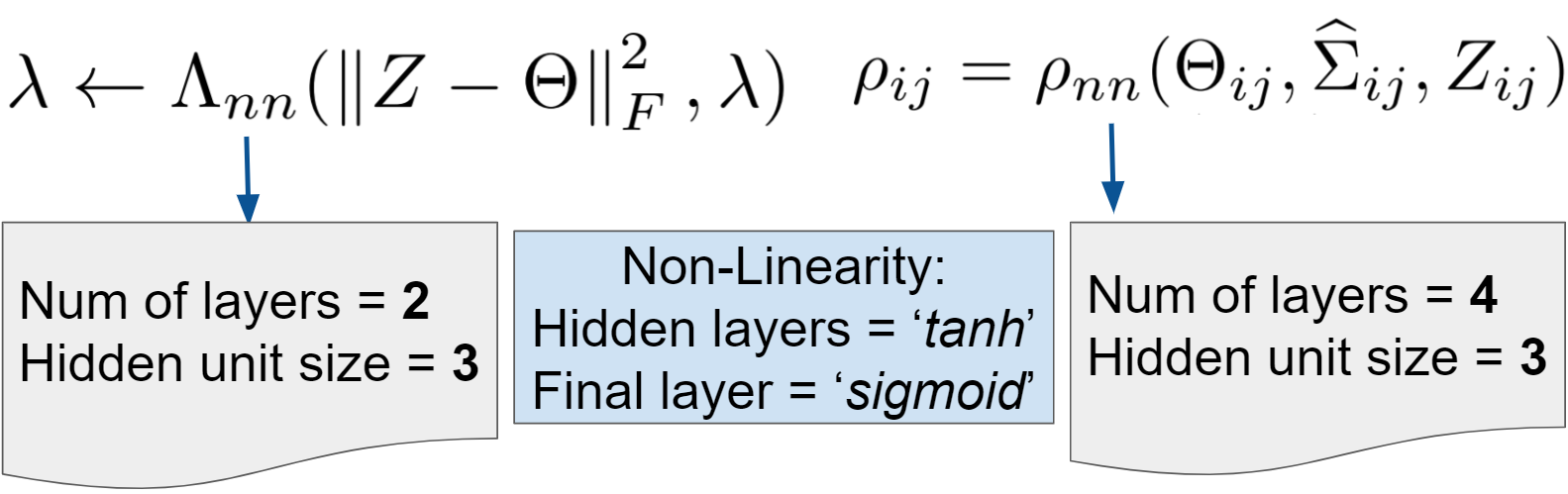}
\vspace{-2mm}
\caption{\small {\bf Minimalist} neural network architectures designed for \texttt{GLAD} experiments in sections(\ref{sec:convergence-performance}, \ref{sec:tolerance-of-noise}, \ref{sec:data-efficiency}, \ref{sec:gene-results}). Refer Appendix~\ref{apx:glad-architecture-details} for further details about the architecture parameters.}
\label{fig:nn-design}
\end{wrapfigure}
We follow the experimental setting in \citep{rolfs2012iterative,mazumder2011flexible,lu2010adaptive} to generate data and perform synthetic experiments on multivariate Gaussians. Each off-diagonal entry of the precision matrix is drawn from a uniform distribution, i.e., $\Theta_{ij}^*\sim \gU(-1,1)$, and then set to zero with probability $p=1-s$, where $s$ means the sparsity level. Finally, an appropriate multiple of the identity matrix was added to the current matrix, so that the resulting matrix had the smallest eigenvalue as $1$ (refer to Appendix~\ref{apx:syn-dataset-explanation}). We use 30 unrolled steps for \texttt{GLAD} (Figure~\ref{fig:nn-design}) and compare it to G-ISTA, ADMM and BCD. All algorithms are trained/finetuned using 10 randomly generated graphs and tested over 100 graphs. 
\begin{wraptable}[5]{R}{0.28\textwidth}
\centering
\vspace{-3mm}
\caption{\small ms per iteration.} 
\resizebox{\textwidth}{!}{
\begin{tabular}{|c|c|c|}
\hline
Time/itr & D=25 & D=100 \\ \hline%\vspace{1pt}
ADMM     & 1.45    &16.45       \\ \hline%\vspace{1pt}
G-ISTA    & 37.51    &41.47       \\ \hline%\vspace{1pt}
GLAD     & 2.81    &20.23       \\ \hline
\end{tabular}}
\label{table:runtimes}
\vspace{-3mm}
\end{wraptable}

Convergence results and average runtime of different algorithms on Nvidia's P100 GPUs are shown in Figure~\ref{fig:trad-vs-learned} and Table~\ref{table:runtimes} respectively. $\texttt{GLAD}$ consistently converges faster and gives lower NMSE. Although the fine-tuned G-ISTA also has decent performance, the computation time in each iteration is much longer than $\texttt{GLAD}$ because it requires line search steps. Besides, we could also see a progressive improvement of \texttt{GLAD} across its iterations.

\begin{figure}[h]
\begin{center}
    \includegraphics[width=\textwidth]{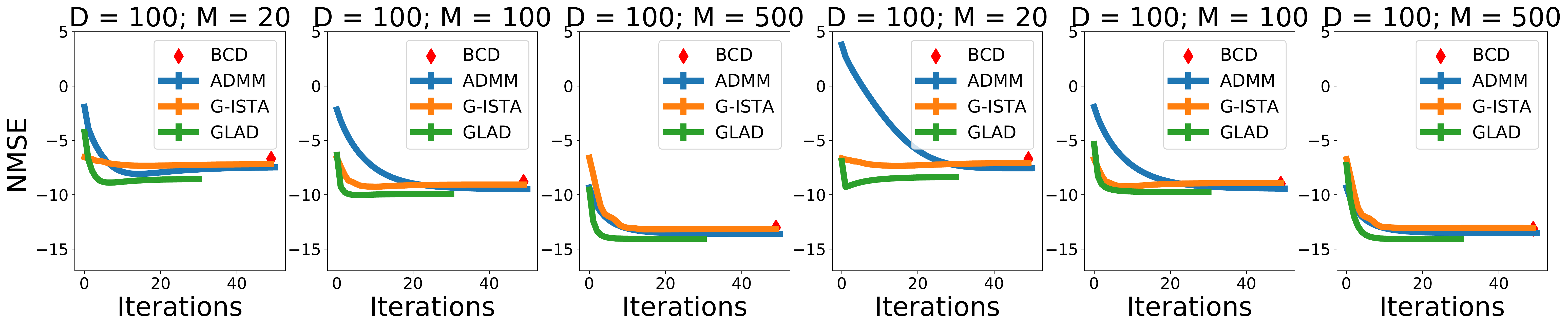}
\end{center}
\vspace{-4mm}
\caption{\small GLAD vs traditional methods. {\it Left 3 plots:} Sparsity level is fixed as $s=0.1$. {\it Right 3 plots:} Sparsity level of each graph is randomly sampled as $s\sim\mathcal{U}(0.05, 0.15)$. Results are averaged over 100 test graphs where each graph is estimated 10 times using 10 different sample batches of size $M$. Standard error is plotted but not visible. Intermediate steps of BCD are not evaluated because we use sklearn package\cite{scikit-learn} and can only access the final output. Appendix~\ref{apx:conv-syn-data},~\ref{apx:glad-architecture-details} explains the experiment setup and \texttt{GLAD} architecture.}
\label{fig:trad-vs-learned}
\end{figure}
\begin{wrapfigure}[17]{R}{0.6\textwidth}
\vspace{-3mm}
\includegraphics[width=40mm, height=30mm]{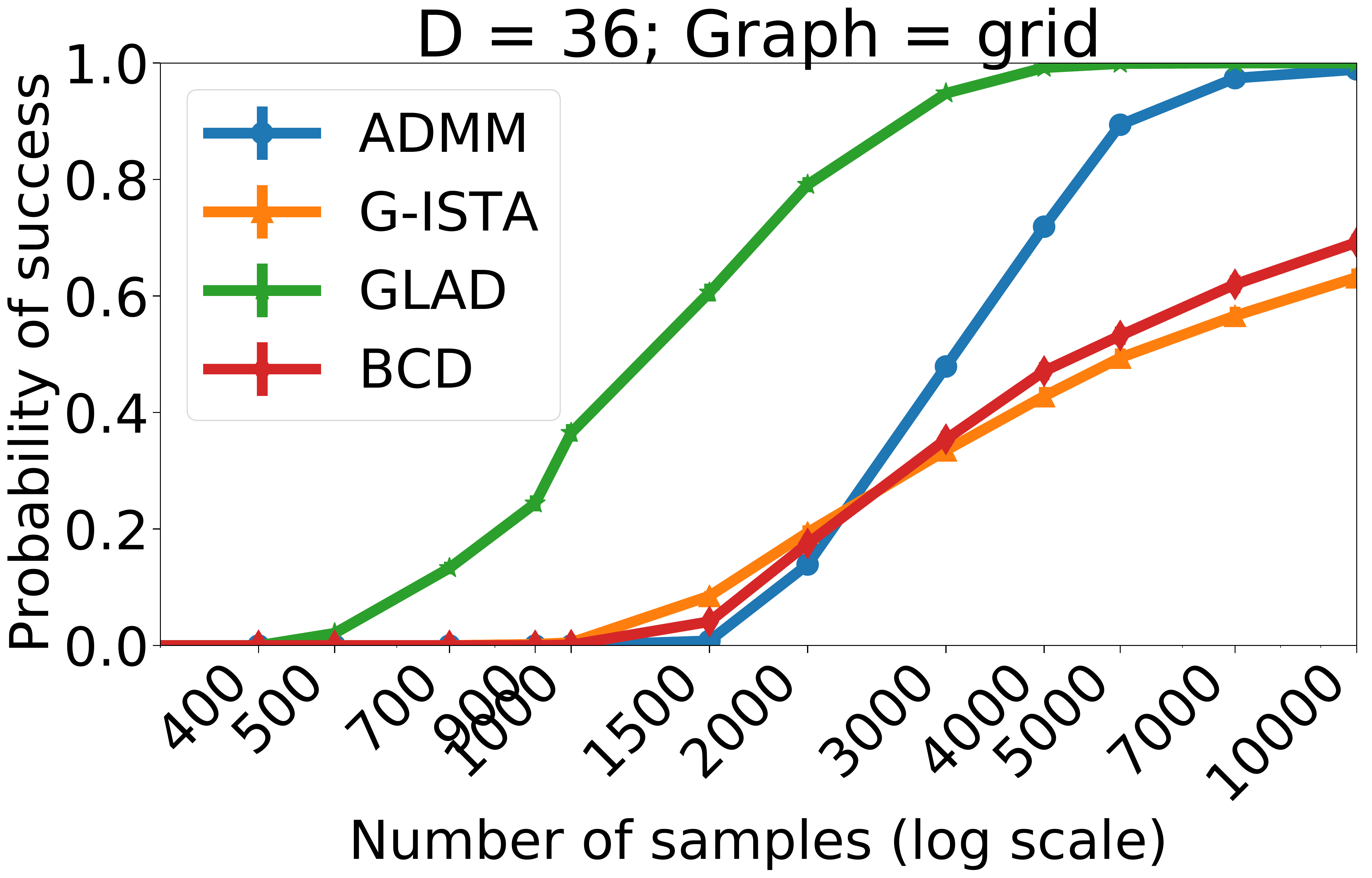}\vspace{1mm}
\includegraphics[width=40mm, height=30mm]{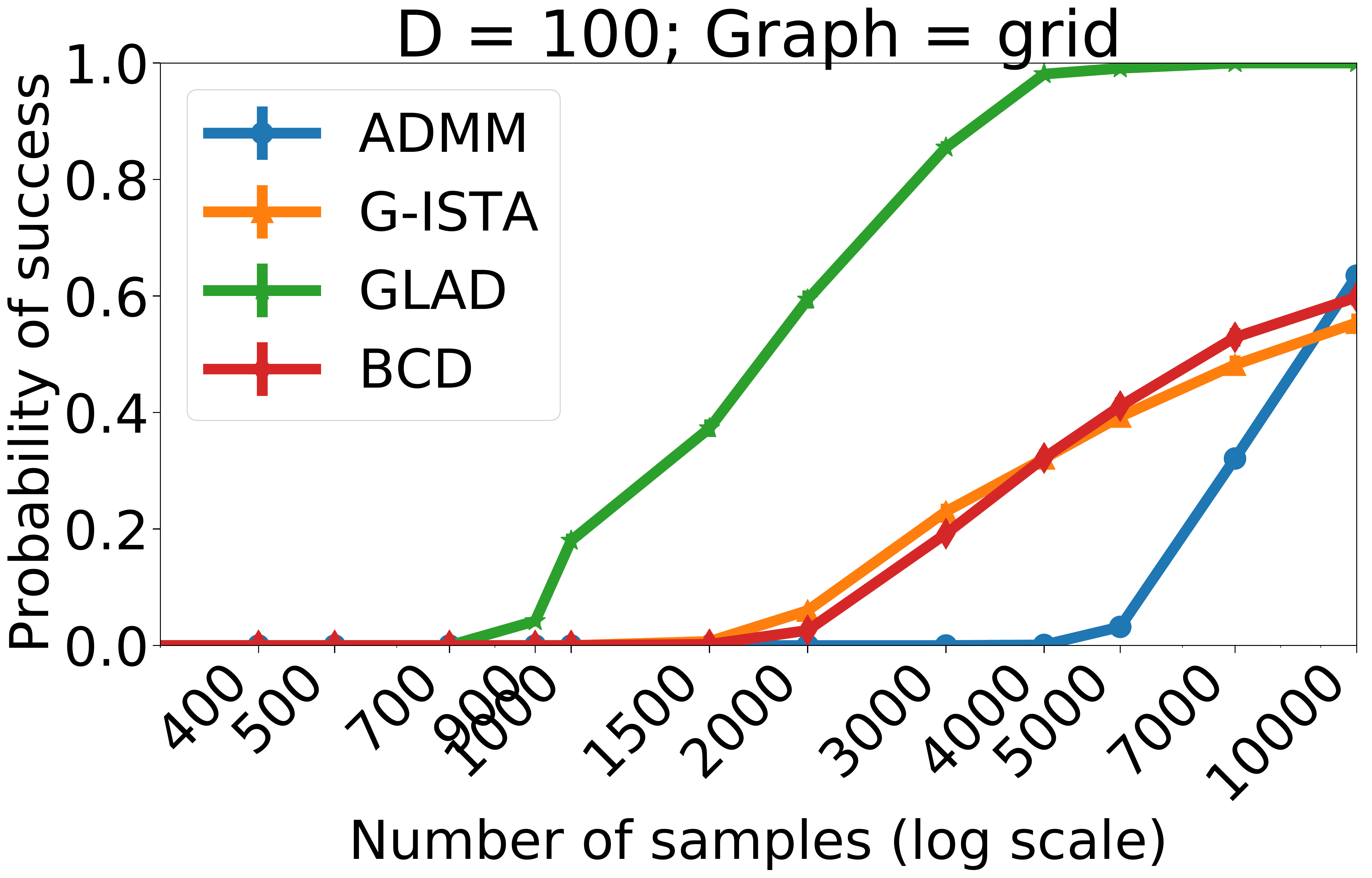}\vspace{1mm}
\includegraphics[width=40mm, height=30mm]{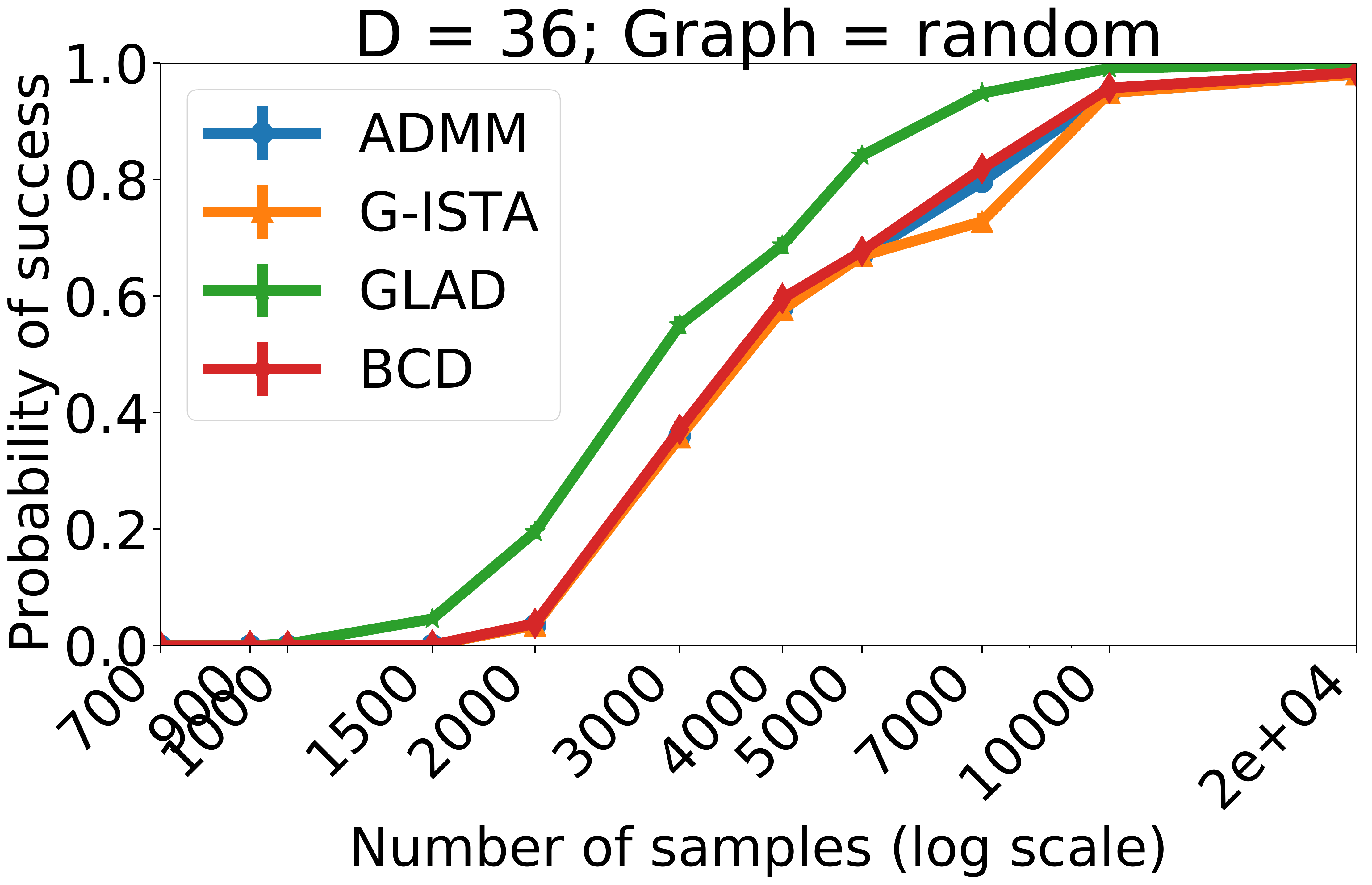}
\includegraphics[width=40mm, height=30mm]{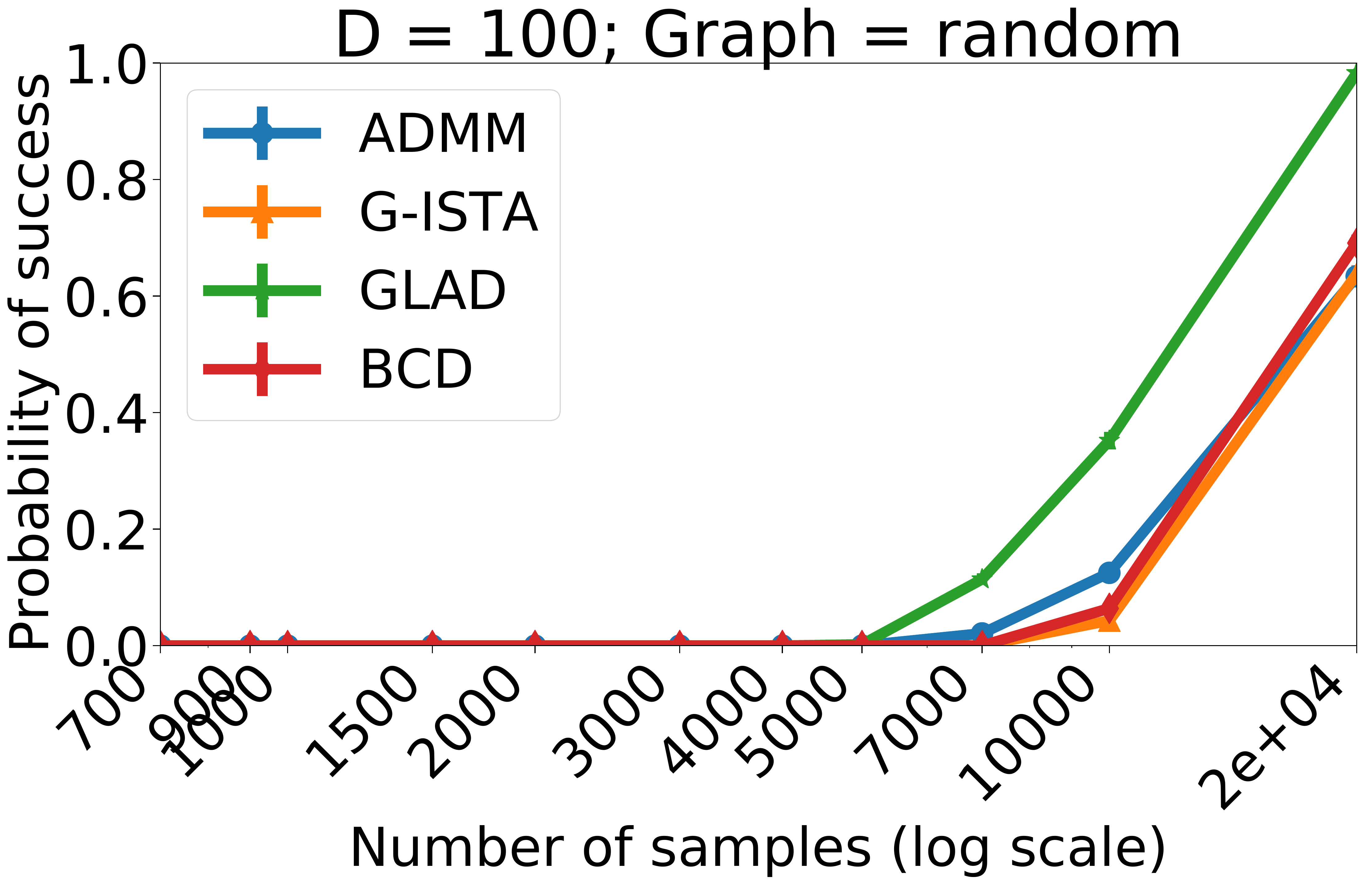}
\vspace{-2mm}
\caption{\small Sample complexity for model selection consistency.}
\label{fig:ps-plots}
\end{wrapfigure}
\vspace{-1mm}
\subsection{Recovery probability}\label{sec:tolerance-of-noise}
\vspace{-1mm}

As analyzed by \citet{ravikumar2011high}, the recovery guarantee (such as in terms of Frobenius norm) of the $\ell_1$ regularized log-determinant optimization significantly depends on the sample size and other conditions. Our \texttt{GLAD} directly optimizes the recovery objective based on data, and it has the potential of pushing the sample complexity limit. We experimented with this and found the results positive.

We follow \citet{ravikumar2011high} to conduct experiments on GRID graphs, which satisfy the conditions required in \citep{ravikumar2011high}. Furthermore, we conduct a more challenging task of recovering restricted but randomly constructed graphs (see Appendix~\ref{apx:tolerance-expt-settings} for more details). The probability of success (PS) is non-zero only if the algorithm recovers all the edges with correct signs, plotted in Figure~\ref{fig:ps-plots}. $\texttt{GLAD}$ consistently outperforms traditional methods in terms of sample complexity as it recovers the true edges with considerably fewer number of samples.

\vspace{-2mm}
\subsection{Data Efficiency}\label{sec:data-efficiency}
\vspace{-1mm}

Having a good inductive bias makes \texttt{GLAD}'s architecture quite data-efficient compared to other deep learning models. For instance, the state-of-the-art `DeepGraph' by~\cite{belilovsky2017learning} is based on CNNs. It contains orders of magnitude more parameters than \texttt{GLAD}. Furthermore, it takes roughly $100,000$ samples, and several hours for training their DG-39 model. In contrast, \texttt{GLAD} learns well with less than $25$ parameters, within $100$ training samples, and notably less training time. Table \ref{table:dl-comparison} also shows that \texttt{GLAD} significantly outperforms DG-39 model in terms of AUC (Area under the ROC curve) by just using $100$ training graphs, typically the case for real world settings. Fully connected DL models are unable to learn from such small data and hence are skipped in the comparison. 

\begin{table}[h]
\floatbox[{\capbeside\thisfloatsetup{capbesideposition={left, center},capbesidewidth=6.2cm}}]{table}[0.999\FBwidth]
{\caption{AUC on $100$ test graphs for D=39: For experiment settings, refer Table~1 of~\cite{belilovsky2017learning}. Gaussian Random graphs with sparsity $p=0.05$ were chosen and edge values sampled from $\sim\mathcal{U}(-1, 1)$. (Refer appendix(\ref{apx-sec:dl-comparison})) }\label{table:dl-comparison}}
{
\centering
\resizebox{0.50\textwidth}{!}{
\centering
\begin{tabular}{|c|c|c|c|}
\hline
Methods & M=15            & M=35            & M=100           \\ \hline%\vspace{1pt}
BCD & 0.578$\pm$0.006 & 0.639$\pm$0.007 & 0.704$\pm$0.006 \\ \hline
%BDGraph & 0.630$\pm$0.007 & 0.715$\pm$0.007 & 0.776$\pm$0.007 \\ \hline%\vspace{1pt}
DG-39   & 0.664$\pm$0.008 & 0.738$\pm$0.006 & 0.759$\pm$0.006 \\ \hline%\vspace{1pt}
DG-39+P & 0.672$\pm$0.008 & 0.740$\pm$0.007 & 0.771$\pm$0.006 \\ \hline%\vspace{1pt}
\texttt{GLAD}    & \bf{0.788$\pm$0.003} & \bf{0.811$\pm$0.003} & \bf{0.878$\pm$0.003} \\ \hline
\end{tabular}}
}
\end{table}

% \begin{figure}[b]
% \centering 
% \subfigure{\label{fig:r2c1}\includegraphics[width=40mm, height=30mm]{stats_v1/ps_D36_grid.pdf}}\hspace{20pt}
% \subfigure{\label{fig:r2c2}\includegraphics[width=40mm, height=30mm]{stats_v1/ps_D64_grid.pdf}}\hspace{20pt}
% \subfigure{\label{fig:r2c3}\includegraphics[width=40mm, height=30mm]{stats_v1/ps_D100_grid.pdf}}\hspace{20pt}
% \subfigure{\label{fig:r2c1}\includegraphics[width=40mm, height=30mm]{stats_v2/ps_D36_random.pdf}}\hspace{20pt}
% \subfigure{\label{fig:r2c2}\includegraphics[width=40mm, height=30mm]{stats_v2/ps_D64_random.pdf}}\hspace{20pt}
% \subfigure{\label{fig:r2c3}\includegraphics[width=40mm, height=30mm]{stats_v2/ps_D100_random.pdf}}
% \caption{The panels show the probability of success plots for different graph types, `grid' and `random'. Refer appendix(\ref{apx:tolerance-expt-settings}) for experimental setup. 
%  }
% \label{fig:ps-plots}
% \end{figure}

\vspace{-1mm}
\subsection{Gene regulation data}\label{sec:gene-results}
\vspace{-1mm} 

% Describe the SynTReN generator.
The SynTReN~\citep{van2006syntren} is a synthetic gene expression data generator specifically designed for analyzing the sparse graph recovery algorithms. It models different types of biological interactions and produces biologically plausible synthetic gene expression data. Figure \ref{fig:gene-nmse} shows that \texttt{GLAD} performs favourably for structure recovery in terms of NMSE on the gene expression data. As the governing equations of the underlying distribution of the SynTReN are unknown, these experiments also emphasize the ability of \texttt{GLAD} to handle non-Gaussian data. 
% Describe the data generation process and mention that the underlying distribution is not gaussian
% For performance evaluation, a connected Erdos-Renyi graph was generated with probability as $p=0.05$. The precision matrix entries were sampled from $\Theta_{ij}\sim\mathcal{U}(0.1, 0.2)$ and the minimum eigenvalue was adjusted to $1$.
% The SynTReN simulator then generated samples from these graphs by incorporating biological noises, correlation noises and other input noises. All these noise levels were sampled uniformly from $\sim\mathcal{U}(0.01, 0.1)$. The figure(\ref{fig:gene-nmse}) shows the NMSE comparisons for a fixed dimension $D=25$ and varying number of samples $M=[10, 25, 100]$. The number of training/validation graphs were set to 20/20 and the results are reported on $100$ test graphs. In these experiments, only $1$ batch of $M$ samples were taken per graph to better mimic the real world setting. 

Figure \ref{fig:gene-ecoli} visualizes the edge-recovery performance of \texttt{GLAD} models trained on a sub-network of true Ecoli bacteria data. We denote, TPR: True Positive Rate, FPR: False Positive Rate, FDR: False Discovery Rate.
%which contains $30$ edges and $D=43$ nodes. The Ecoli subnetwork graph was fed to the SynTReN simulator and $M$ samples were obtained. 
%SynTReN's noise levels were set to $0.05$ and the precision matrix edge values were set to $w=0.15$. 
%For the \texttt{GLAD} models, the training was done on the same settings as the NMSE evaluation with $D=25$ and on corresponding number of samples $M$. 
The number of simulated training/validation graphs were set to 20/20. One batch of $M$ samples were taken per graph (details in {\bf Appendix~\ref{apx:syntren-simulator-details}}). % to better mimic the real world setting.
%We denote TPR: True Positive Rate, FPR: False Positive Rate, FDR: False Discovery Rate. 
%which contains $30$ edges and $D=43$ nodes. The Ecoli subnetwork graph was fed to the SynTReN simulator and $M$ samples were obtained. 
%SynTReN's noise levels were set to $0.05$ and the precision matrix edge values were set to $w=0.15$. 
%For the \texttt{GLAD} models, the training was done on the same settings as the NMSE evaluation with $D=25$ and on corresponding number of samples $M$. 
Although, \texttt{GLAD} was trained on graphs with $D=25$, it was able to robustly recover a higher dimensional graph $D=43$ structure.

Appendix~\ref{apx-sec:real-data} contains details of the experiments done on real E.Coli data. The \texttt{GLAD} model was trained using the SynTReN simulator.  

Appendix~\ref{apx-sec:scaling} explains our proposed approach to scale for larger problem sizes.

%Although, \texttt{GLAD} was trained on lower dimensional graphs $D=25$, it was able to robustly recover the structure of a higher dimensional graph $D=43$. This also showed that the learned \texttt{GLAD} is able to generalize to larger graphs. 

% Describe the results 
% Send the NMSE comparison plots to appendix
% We create a connected erdos-renyi random graph with sparsity $0.05$. Then the off-diagonal entries are uniformly sampled from $w\sim\mathcal{U}(0.1, 0.2)$. To obtain the corresponding precision matrix, a multiple of an identity matrix is added such that the minimum eigenvalue is $1$. The graph is now fed to the synthetic simulator `SynTReN' \cite{van2006syntren} which generates samples that closely mimic true biological behaviour of gene networks. Figure(\ref{fig:gene-nmse}) shows the NMSE comparison. 
% \begin{figure}%[b]
% \centering 
% \subfigure{\label{fig:r2c1}\includegraphics[width=30mm, height=30mm]{gene/gene_D25_M5.pdf}}\hspace{10pt}
% \subfigure{\label{fig:r2c2}\includegraphics[width=30mm, height=30mm]{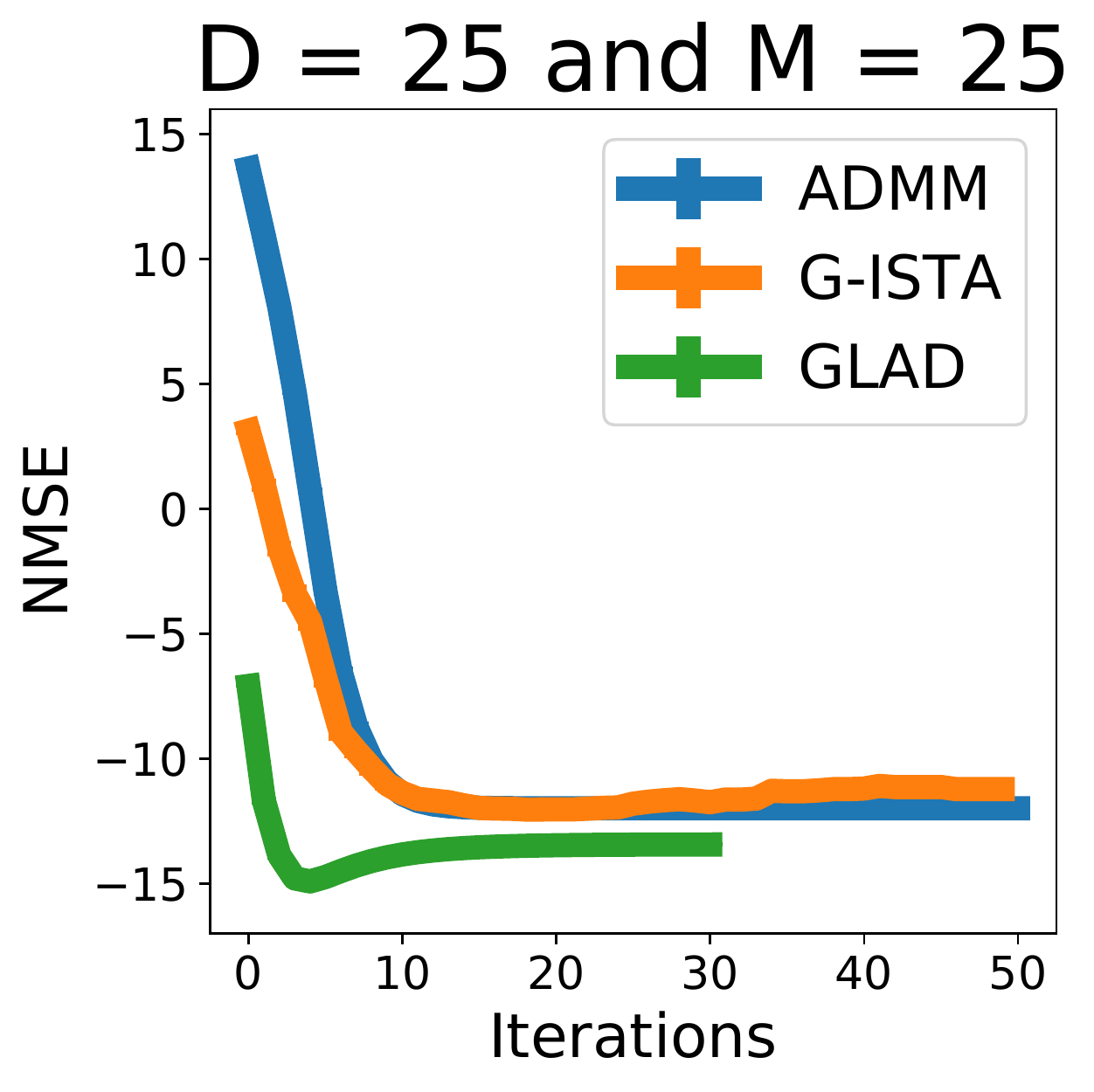}}\hspace{10pt}
% \subfigure{\label{fig:r2c3}\includegraphics[width=30mm, height=30mm]{gene/gene_D25_M125.pdf}}
% \caption{\texttt{GLAD} vs the traditional methods on SynTReN simulator. Connected Erdos-renyi graph with sparsity around $0.05$. The edge weights were uniformly sampled $w\sim\mathcal{U}[0.1, 0.2]$. These results are for non-gaussian setting as the underlying distribution used by the simulator are unknown.} 
% \label{fig:gene-nmse}
% \end{figure}

\begin{figure}[h!]
\floatbox[{\capbeside\thisfloatsetup{capbesideposition={left, center},capbesidewidth=3.5cm}}]{figure}[0.999\FBwidth]
{%\centering 
\subfigure{\includegraphics[width=30mm, height=30mm]{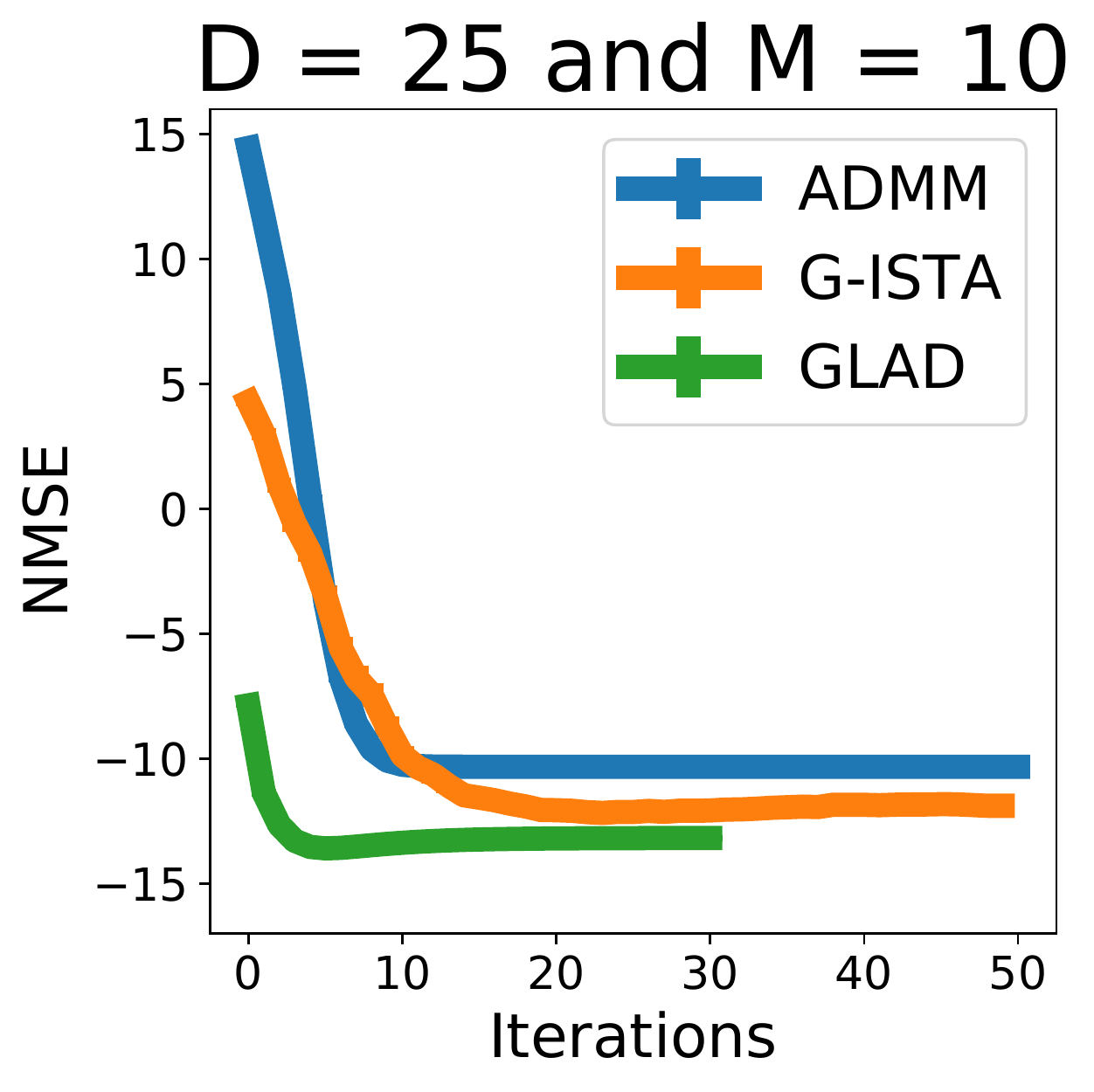}}%\hspace{10pt}
\subfigure{\includegraphics[width=30mm, height=30mm]{gene/gene_D25_M25.pdf}}%\hspace{10pt}
\subfigure{\includegraphics[width=30mm, height=30mm]{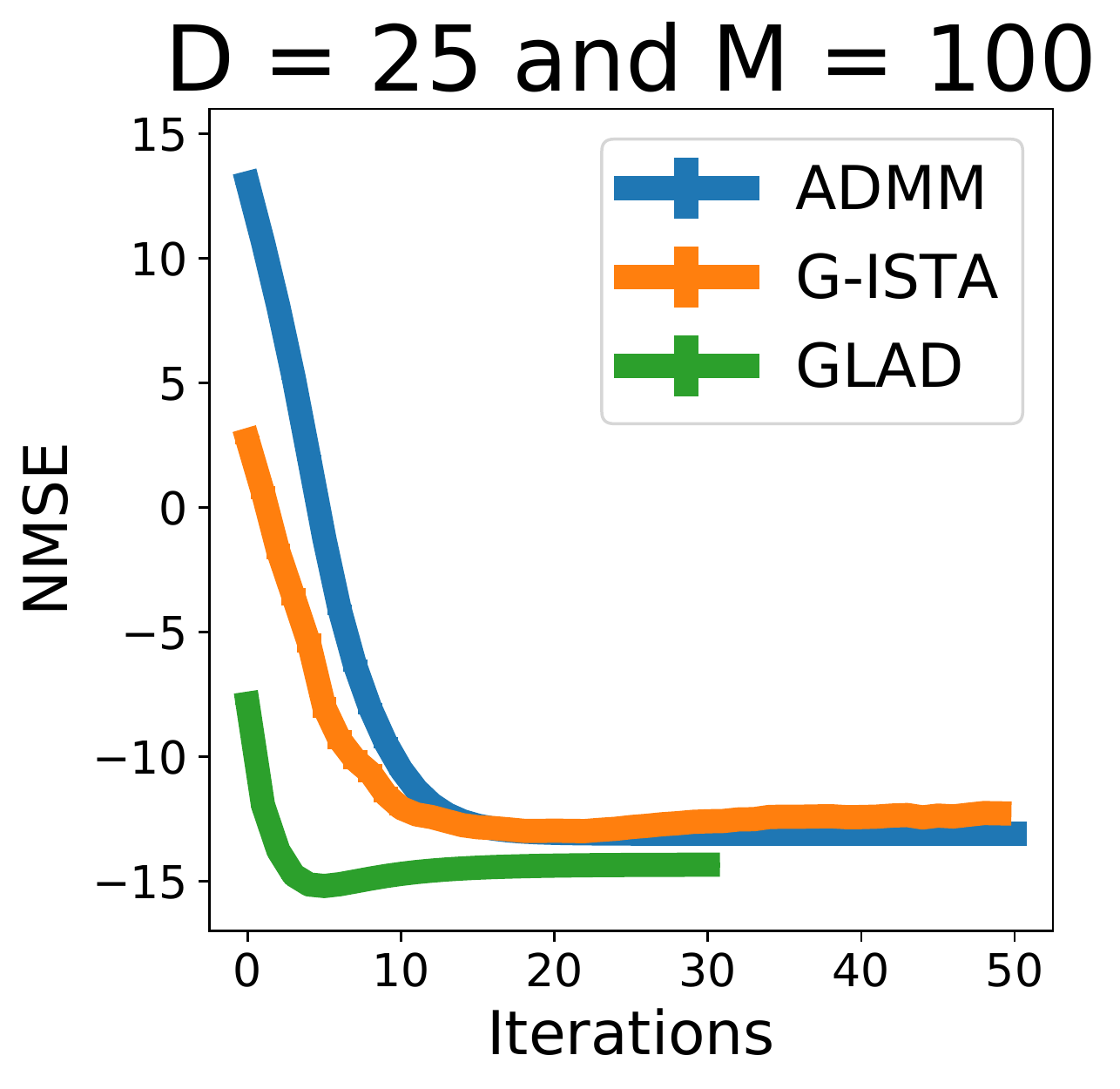}}}
{\caption{Performance on the SynTReN generated gene expression data with graph as Erdos-renyi having sparsity $p=0.05$. Refer appendix(\ref{apx:syntren-simulator-details}) for experiment details.}\label{fig:gene-nmse}}
\vspace{-3mm}
\end{figure}

\begin{figure}[h]
\vspace{-2mm}
\floatbox[{\capbeside\thisfloatsetup{capbesideposition={left,center},capbesidewidth=4cm}}]{figure}[0.999\FBwidth]
%\centering 
{
\subfigure[True graph]{\label{fig:r2c1}\includegraphics[width=28mm]{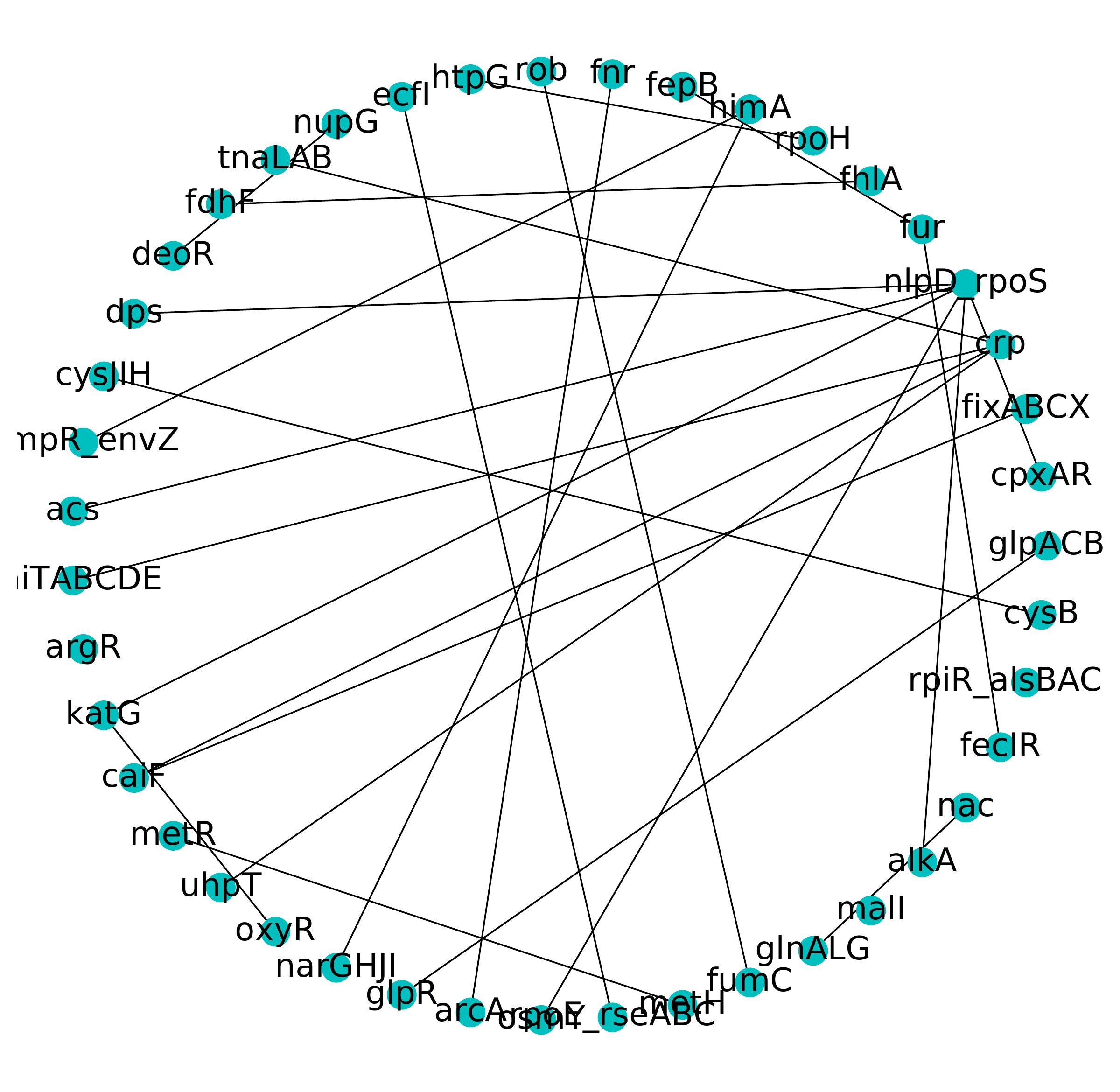}}
\subfigure[\textbf{M=10}, {\color{red}fdr=0.613}, {\color{green!60!black}tpr=0.913}, fpr=0.114 ]{\label{fig:r2c1}\includegraphics[width=28mm]{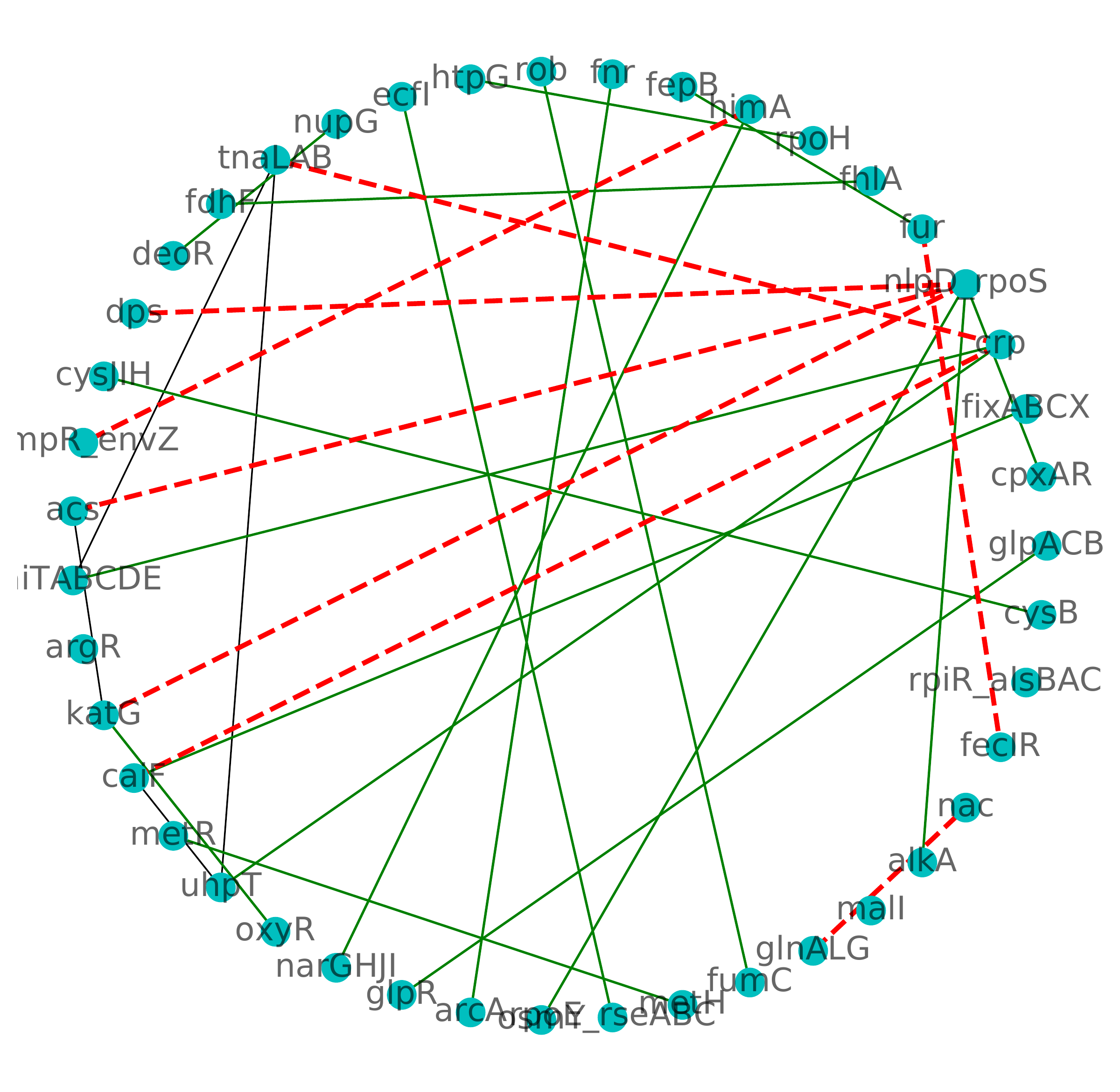}}\hspace{10pt}
\subfigure[\textbf{M=100}, {\color{red}fdr=0.236}, {\color{green!60!black}tpr=0.986}, fpr=0.024]{\label{fig:r2c1}\includegraphics[width=29mm]{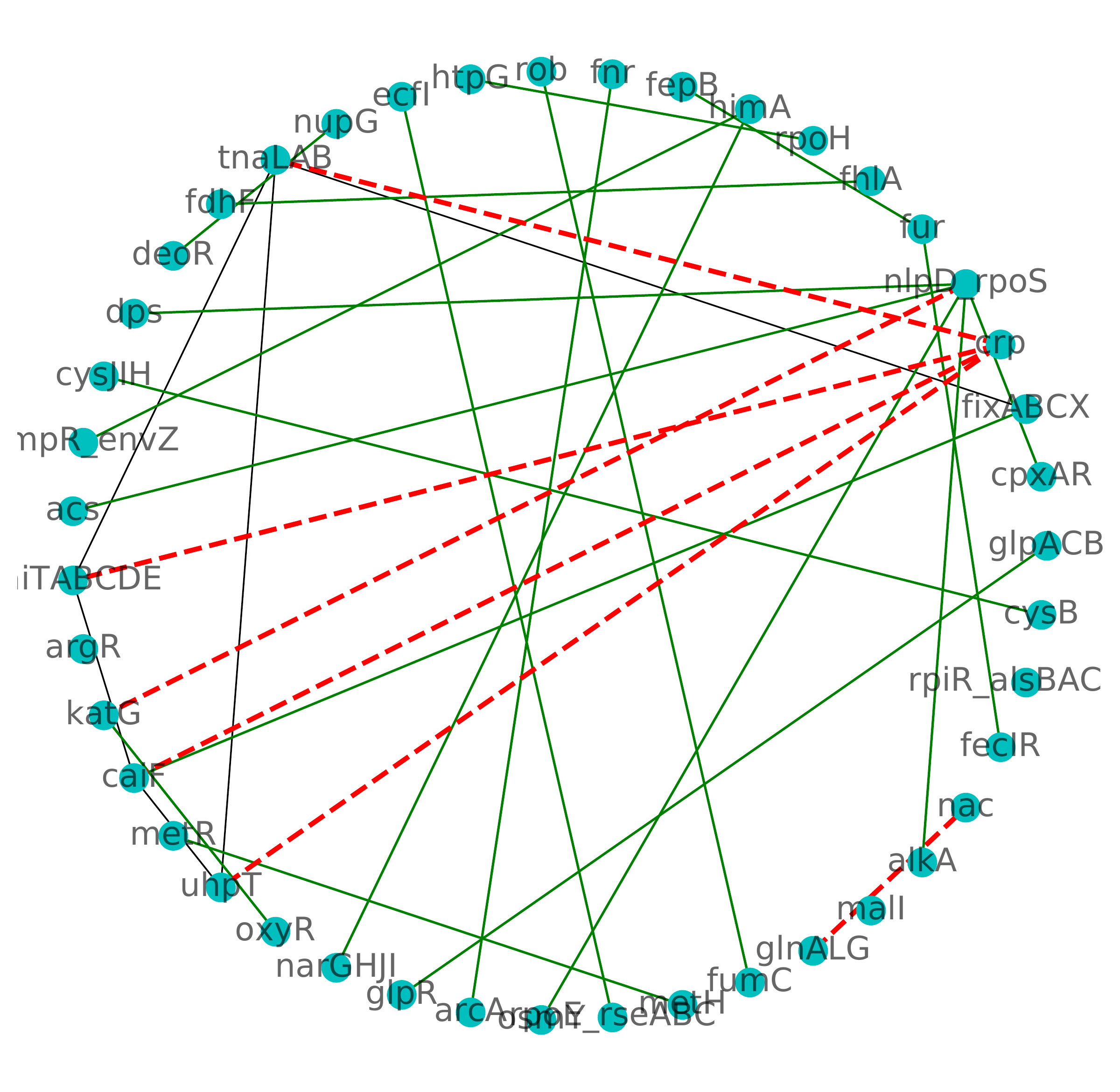}}
}
{\caption{Recovered graph structures for a sub-network of the {\it E. coli} consisting of $43$ genes and $30$ interactions with increasing samples. Increasing the samples reduces the fdr by discovering more true edges.}\label{fig:gene-ecoli}}
\end{figure}

% \subsection{Training and test on different distributions.}

% Training on random graph and test on simpler grid graph. 

% Training on longer tail distribution and test on simpler distribution such as Gaussian. 
%333333
% One expect that training on more difficult distribution should learn algorithms which can still work on simpler distributions. 
\vspace{-4mm}
\section{Conclusion \& Future work}
\vspace{-2mm}
% \begin{itemize}[leftmargin=*,nolistsep]
%     \item Presented a novel neural network based parameterization of an unrolled algorithm for the sparse recovery problem based on Alternate Minimization optimization. 
%     \item We theoretically as well as empirically show that learning can improve the statistical recovery guarantees. 
%     \item The learned model is able to push the sample complexity limits thereby highlighting the potential of using inductive bias for deep learning architectures.
%     \item Finally, we test out \texttt{GLAD} on a real data simulator and found that data-driven learning can also help adapt our model to non-Gaussian distributions and potentially perform well for different test graph settings. 
% \end{itemize}
We presented a novel neural network, \texttt{GLAD}, for the sparse graph recovery problem based on an unrolled Alternating Minimization algorithm. We theoretically prove the linear convergence of AM algorithm as well as empirically show that learning can further improve the sparse graph recovery. The learned \texttt{GLAD} model is able to push the sample complexity limits thereby highlighting the potential of using algorithms as inductive biases for deep learning architectures. Further development of theory is needed to fully understand and realize the potential of this new direction. 

% Finally, we tested out \texttt{GLAD} on a real data simulator and found that data-driven learning can also help generalize to different test graph settings and potentially perform well on non-Gaussian distributions.

% \harsh{Please cross-check the conclusion part.}
\section*{Acknowledgement}
We thank our colleague Haoran Sun for his helpful comments. This research was supported in part through research cyberinfrastructure resources and services provided by the Partnership for an Advanced Computing Environment (PACE) at the Georgia Institute of Technology, Atlanta, Georgia, USA \citep{PACE}. This research was also partly supported by XSEDE Campus Champion Grant GEO150002.  %This work used the Extreme Science and Engineering Discovery Environment (XSEDE), which is supported by NSF grant number ACI-1053575.
%\newpage

%\bibliography{references}
%\bibliography{arxivGLAD}
\bibliography{iclr2020_conference.bib}
\bibliographystyle{iclr2020_conference}

\clearpage
\appendix
\section*{Appendix}
\section{Derivation of Alternating Minimization Steps}\label{app:derive}
Given the optimization problem 
\begin{equation}
 \widehat{\Theta}_{\lambda}, \widehat{Z}_{\lambda} := { \textstyle\argmin_{\Theta, Z \in \mathcal{S}_{++}^{d}}}  - \log(\det{\Theta}) + \text{tr}(\widehat{\Sigma}\Theta) + \rho \norm{Z}_1 + {\textstyle \frac{1}{2}\lambda}\norm{Z-\Theta}^2_F,
\end{equation}
Alternating Minimization is performing
\begin{align}
    \Theta^{\text{AM}}_{k+1} & \gets \argmin_{\Theta \in \mathcal{S}_{++}^{d}}  - \log(\det{\Theta}) + \text{tr}(\widehat{\Sigma}\Theta) +  {\textstyle \frac{1}{2}\lambda}\norm{Z^{\text{AM}}_k-\Theta}^2_F \\
    Z^{\text{AM}}_{k+1} & \gets \argmin_{Z\in \mathcal{S}_{++}^{d}} \text{tr}(\widehat{\Sigma}\Theta^{\text{AM}}_{k+1}) + \rho \norm{Z}_1 + {\textstyle \frac{1}{2}\lambda}\norm{Z-\Theta^{\text{AM}}_{k+1}}^2_F.
\end{align}
Taking the gradient of the objective function with respect to $\Theta$ to be zero, we have
\begin{align}
    -\Theta^{-1} + \widehat{\Sigma} + \lambda(\Theta-Z)=0.
\end{align}
Taking the gradient of the objective function with respect to $Z$ to be zero, we have
\begin{align}
\rho \partial \ell_1(Z) +  \lambda(Z-\Theta) = 0,
\end{align}
where
\begin{align}
    \partial \ell_1(Z_{ij}) =\begin{cases}
    1 & Z_{ij}>0,\\
    -1 & Z_{ij} < 0,\\
    [-1,1] & Z_{ij} = 0.
    \end{cases}
\end{align}
Solving the above two equations, we obtain:
\begin{align}
    {\frac{1}{2} }\big(
        -Y + \sqrt{Y^\top Y + {\frac{4}{\lambda} } I} 
        \big) =& \argmin_{\Theta \in \mathcal{S}_{++}^{d}}  - \log(\det{\Theta}) + \text{tr}(\widehat{\Sigma}\Theta) +  { \frac{1}{2}\lambda}\norm{Z-\Theta}^2_F,\\
        &\text{where}~Y ={\frac{1}{\lambda} }\widehat{\Sigma} - Z ,\\
\eta_{\rho/\lambda}(\Theta)=& \argmin_{Z\in \mathcal{S}_{++}^{d}} \text{tr}(\widehat{\Sigma}\Theta) + \rho \norm{Z}_1 + { \frac{1}{2}\lambda}\norm{Z-\Theta}^2_F .
\end{align}

\section{Linear Convergence Rate Analysis}
\textbf{Proof of Theorem}
\label{app:thm-proof} 
\thm*
We will reuse the following notations in the appendix:
\begin{align}
    \widehat{\Sigma}^m :~~ & \text{sample covariance matrix based on $m$ samples},\\
    \Gcal(\Theta;\rho) :=& -\log(\det{\Theta}) + \text{tr}(\hat{\Sigma}^m\Theta) + \rho\norm{\Theta}_{1,\text{off}}, \\
    \widehat{\Theta}_\Gcal  :=&\argmin\nolimits_{\Theta \in \Scal_{++}^d ~~}\Gcal(\Theta;\rho),\\
    f(\Theta,Z;\rho,\lambda) :=& -\log(\det{\Theta}) +  \text{tr}(\widehat{\Sigma}\Theta) + \rho\norm{Z}_1 + \frac{1}{2}\lambda\norm{Z-\Theta}^2_F,\label{eq:obj_def}\\
\widehat{\Theta}_{\lambda}, \widehat{Z}_{\lambda} :=& \argmin_{\theta, Z \in \mathcal{S}_{++}^{d}} f(\Theta, Z;\rho,\lambda),\\
f^*(\rho,\lambda) :=& \min_{\theta, Z \in \mathcal{S}_{++}^{d}} f(\Theta, Z;\rho,\lambda) = f(\widehat{\Theta}_{\lambda},\widehat{Z}_{\lambda};\rho,\lambda),\\
\eta_{\rho/\lambda}(\theta):=& \text{sign}(\theta)\max(|\theta|-\rho/\lambda, 0).
\end{align}

The update rules for Alternating Minimization are:
\begin{align}
    &\Theta_{k+1}^{\text{AM}} \leftarrow {\textstyle\frac{1}{2} }\big(
        -Y + \sqrt{Y^\top Y + {\textstyle\frac{4}{\lambda} } I} 
        \big),~\text{where}~Y ={\textstyle\frac{1}{\lambda} }\widehat{\Sigma} - Z_k^{\text{AM}}; \label{eq:qpenalty_optstep1-apx}\\
    &Z_{k+1}^{\text{AM}} \leftarrow \eta_{\rho/\lambda}(\Theta_{k+1}^{\text{AM}}), 
    \label{eq:qpenalty_optstep2-apx}
\end{align}

\textbf{Assumptions:} With reference to the theory developed in \cite{rothman2008sparse}, we make the following assumptions about the true model. ($\mathcal{O}_{\mathbb{P}}(\cdot)$ is used to denote bounded in probability.)
% \begin{assumption}\label{ass:theta-true1}
%     Let the set $S=\{(i, j):\Theta^*_{ij} \neq 0, i\neq j\}$. Then card$(S) \leq s$.
% \end{assumption}

% \begin{assumption}\label{ass:theta-true2}
%     $\Lambda_{\min}(\Sigma^*)\geq c_1 > 0$, or equivalently $\Lambda_{\max}(\Theta^*)\leq 1/c_1$
% \end{assumption}

% \begin{assumption}\label{ass:theta-true3}
%     $\Lambda_{\max}(\Sigma^*)\leq c_2$
% \end{assumption}

% \begin{assumption}\label{ass:theta-hat}
%     $\norm{\widehat{\Sigma}}_2 \leq c_{\widehat{\Sigma}}$
% \end{assumption}

\aone*
\atwo*
%\athree*
%\afour*

We now proceed towards the proof:

\begin{lemma}
For any $x,y,k \in \mathbb{R}$, $k > 0$, $x\neq y$,
\begin{equation}
    \frac{
        (\sqrt{x^2 + k} - \sqrt{y^2 + k})^2
    }{
        (x - y)^2
    } \leq 1 - \frac{1}{
        \sqrt{
            (\frac{x^2}{k} + 1) (\frac{y^2}{k} + 1)
        }
    }.
\end{equation}
\label{lemma:ineq_xyk}
\end{lemma}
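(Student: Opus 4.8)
The plan is to first reduce the left-hand side to a clean closed form by rationalizing. Writing $\sqrt{x^2+k}-\sqrt{y^2+k}=\frac{x^2-y^2}{\sqrt{x^2+k}+\sqrt{y^2+k}}$, I get
\[
\frac{\sqrt{x^2+k}-\sqrt{y^2+k}}{x-y}=\frac{x+y}{\sqrt{x^2+k}+\sqrt{y^2+k}},
\]
so the left-hand side equals $\frac{(x+y)^2}{\big(\sqrt{x^2+k}+\sqrt{y^2+k}\big)^2}$. On the right-hand side, clearing the nested fraction gives $1-\frac{k}{\sqrt{(x^2+k)(y^2+k)}}$. It is worth noting that a naive mean-value-theorem estimate only yields the crude bound $\le 1$ (since the derivative of $t\mapsto\sqrt{t^2+k}$ has modulus $<1$); the whole content of the lemma is the sharper deficit $k/\sqrt{(x^2+k)(y^2+k)}$, so the rationalized form must be used.

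Next I abbreviate $a=\sqrt{x^2+k}$ and $b=\sqrt{y^2+k}$, both $\ge\sqrt{k}>0$, with $ab>k$ because $x\ne y$ forbids $x=y=0$. Cross-multiplying the target $\frac{(x+y)^2}{(a+b)^2}\le\frac{ab-k}{ab}$ and substituting $x^2=a^2-k$, $y^2=b^2-k$, the terms $ab(a^2+b^2)$ and $2abk$ cancel on both sides, and the whole inequality collapses to the single statement
\[
2x^2y^2+k(x^2+y^2)\ \ge\ 2xy\sqrt{(x^2+k)(y^2+k)}.
\]
I expect this bookkeeping — tracking the cross terms so that the cubic and quartic pieces in $a,b$ cancel correctly — to be the most tedious (though entirely routine) part of the argument, and the place where an error is most likely to creep in.

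Finally, to establish this reduced inequality: if $xy\le 0$ it is immediate, since the left side is nonnegative while the right side is $\le 0$. If $xy>0$, I complete the square. Using $(x^2+k)(y^2+k)=x^2y^2+k(x^2+y^2)+k^2$, one verifies the identity
\[
\big(\sqrt{(x^2+k)(y^2+k)}-xy\big)^2-k^2=2x^2y^2+k(x^2+y^2)-2xy\sqrt{(x^2+k)(y^2+k)},
\]
so the claim is equivalent to $\sqrt{(x^2+k)(y^2+k)}-xy\ge k$. This in turn follows from the factorization $(x^2+k)(y^2+k)=(|xy|+k)^2+k(|x|-|y|)^2\ge(|xy|+k)^2$, which gives $\sqrt{(x^2+k)(y^2+k)}\ge|xy|+k\ge xy+k$. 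The weak inequality suffices for the lemma; strictness, when wanted, comes from the $k(|x|-|y|)^2$ term (and from the $xy\le 0$ branch), reflecting that equality would force $x=y$, which is excluded.
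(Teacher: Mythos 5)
Your proof is correct, but it takes a different algebraic route from the paper's. The paper expands $(\sqrt{x^2+k}-\sqrt{y^2+k})^2$ directly, recognizes the numerator as $(x-y)^2-2\bigl(\sqrt{(x^2+k)(y^2+k)}-(xy+k)\bigr)$, and multiplies by the conjugate of \emph{that} difference to obtain the exact identity
\[
\frac{(\sqrt{x^2+k}-\sqrt{y^2+k})^2}{(x-y)^2}=1-\frac{2k}{\sqrt{(x^2+k)(y^2+k)}+xy+k},
\]
after which a single application of $xy+k\le\sqrt{(x^2+k)(y^2+k)}$ finishes the job. You instead rationalize the difference of square roots itself, getting the closed form $\frac{(x+y)^2}{(\sqrt{x^2+k}+\sqrt{y^2+k})^2}$ for the left-hand side, and then cross-multiply and complete a square to reduce to $\sqrt{(x^2+k)(y^2+k)}\ge xy+k$ --- which is exactly the same key estimate the paper uses, and which you prove cleanly via the factorization $(x^2+k)(y^2+k)=(|xy|+k)^2+k(|x|-|y|)^2$. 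The paper's route buys an explicit closed form for the deficit before any estimation occurs (making the slack in the bound visible), and its cancellation $\bigl(\sqrt{(x^2+k)(y^2+k)}\bigr)^2-(xy+k)^2=k(x-y)^2$ is lighter than your cross-multiplication bookkeeping; your route buys a more transparent starting expression for the left-hand side and isolates the single inequality doing all the work. Both are complete and correct.
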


\begin{proof}
\begin{align}
    \frac{
        (\sqrt{x^2 + k} - \sqrt{y^2 + k})^2
    }{ (x - y)^2 }
    &= \frac{
        (x^2 + k) + (y^2 + k) - 2 \sqrt{x^2 + k} \sqrt{y^2 + k}
    }{ (x - y)^2 } \\
    &= \frac{
        (x - y)^2 - 2(\sqrt{x^2 + k} \sqrt{y^2 + k} - (xy+k))
    }{ (x - y)^2 } \\
    &= 1 - 2 \frac{
        (\sqrt{x^2 + k} \sqrt{y^2 + k} - (xy+k)) (\sqrt{x^2 + k} \sqrt{y^2 + k} + (xy+k))
    }{ (x - y)^2 (\sqrt{x^2 + k} \sqrt{y^2 + k} + (xy+k)) } \\
    &= 1 - 2 \frac{
        k (x-y)^2
    }{ (x - y)^2 (\sqrt{x^2 + k} \sqrt{y^2 + k} + (xy+k)) } \\
    &= 1 - \frac{
        2 k
    }{ \sqrt{x^2 + k} \sqrt{y^2 + k} + (xy+k) } \\
    &\leq 1 - \frac{1}{
        \sqrt{
            (\frac{x^2}{k} + 1) (\frac{y^2}{k} + 1)
        }
    }
\end{align}
\end{proof}

\begin{lemma}
For any $X, Y \in \Scal^d$, $\lambda > 0$, $A(Y) = \sqrt{Y^\top Y + \frac{4}{\lambda} I}$ satisfies the following inequality,
\begin{equation}
    \norm{ A(X) - A(Y) }_F \leq \alpha_\lambda \norm{ X - Y }_F,
    \label{eq:lemma_contraction}
\end{equation}
where $0 < \alpha_\lambda = 1 - \frac{1}{2} 
        (\frac{\lambda}{4} \Lambda_{max}(X)^2 + 1)^{-1/2}
        (\frac{\lambda}{4} \Lambda_{max}(Y)^2 + 1)^{-1/2} < 1$, $\Lambda_{max}(X)$ is the largest eigenvalue of $X$ in absolute value.
\label{lemma:contraction}
\end{lemma}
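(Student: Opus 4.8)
The plan is to reduce the matrix inequality to the scalar inequality of Lemma~\ref{lemma:ineq_xyk} by diagonalizing $X$ and $Y$ and expanding both Frobenius norms in a common basis so that the scalar bound can be applied entry by entry.

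First I would observe that since $X,Y$ are symmetric, $Y^\top Y = Y^2$, so with $k := 4/\lambda > 0$ we have $A(Y) = \sqrt{Y^2 + kI}$; as $Y^2 + kI$ is symmetric positive definite, this principal square root is well defined and equals the scalar map $f(t) = \sqrt{t^2+k}$ applied to the spectrum of $Y$, i.e. $A(Y)v_j = f(y_j)v_j$ for each eigenpair $(y_j,v_j)$. Writing orthonormal eigendecompositions $X = \sum_i x_i u_i u_i^\top$ and $Y = \sum_j y_j v_j v_j^\top$, the key computation is the general identity $\norm{M}_F^2 = \sum_{i,j}|\langle u_i, M v_j\rangle|^2$, valid for any matrix $M$ and orthonormal bases $\{u_i\},\{v_j\}$. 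Applying it with $M = A(X)-A(Y)$ and using $A(X)u_i = f(x_i)u_i$ gives
\begin{equation}
\norm{A(X) - A(Y)}_F^2 = \sum_{i,j}\big(f(x_i) - f(y_j)\big)^2 \,|\langle u_i, v_j\rangle|^2, \qquad \norm{X-Y}_F^2 = \sum_{i,j}(x_i - y_j)^2\,|\langle u_i, v_j\rangle|^2 .
\end{equation}
The weights $|\langle u_i, v_j\rangle|^2$ are identical in both expansions, which is precisely what allows a termwise comparison.

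Next I would invoke Lemma~\ref{lemma:ineq_xyk} with $x = x_i$, $y = y_j$: when $x_i \neq y_j$ it yields $(f(x_i)-f(y_j))^2 \leq \big(1 - [(\tfrac{x_i^2}{k}+1)(\tfrac{y_j^2}{k}+1)]^{-1/2}\big)(x_i-y_j)^2$, and when $x_i = y_j$ both sides vanish so the same inequality holds trivially. Since $|x_i| \leq \Lambda_{max}(X)$ and $|y_j| \leq \Lambda_{max}(Y)$ and the bracketed factor is increasing in $x_i^2$ and $y_j^2$, it is bounded above uniformly in $i,j$ by $C := 1 - [(\tfrac{\lambda}{4}\Lambda_{max}(X)^2+1)(\tfrac{\lambda}{4}\Lambda_{max}(Y)^2+1)]^{-1/2}$ (recall $k = 4/\lambda$, so $x_i^2/k = \tfrac{\lambda}{4}x_i^2$). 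Pulling this constant out of the weighted sum gives $\norm{A(X)-A(Y)}_F^2 \leq C\,\norm{X-Y}_F^2$.

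Finally I would reconcile the constant. Setting $s := [(\tfrac{\lambda}{4}\Lambda_{max}(X)^2+1)(\tfrac{\lambda}{4}\Lambda_{max}(Y)^2+1)]^{-1/2} \in (0,1]$, so that $C = 1-s$ and $\alpha_\lambda = 1 - s/2$, the elementary inequality $\sqrt{1-s} \leq 1 - s/2$ (equivalent to $0 \leq s^2/4$) yields $\sqrt{C} \leq \alpha_\lambda$, hence $\norm{A(X)-A(Y)}_F \leq \alpha_\lambda \norm{X-Y}_F$; the bounds $0 < \alpha_\lambda < 1$ follow immediately from $s \in (0,1]$. I expect the main obstacle to be the second paragraph: recognizing that the \emph{same} overlap weights $|\langle u_i,v_j\rangle|^2$ appear in the spectral expansions of both $\norm{A(X)-A(Y)}_F^2$ and $\norm{X-Y}_F^2$, which is what converts the operator inequality into a weighted family of independent scalar inequalities handled by Lemma~\ref{lemma:ineq_xyk}. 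The factor-of-two slack between $C$ and $\alpha_\lambda^2$ is then absorbed by the final elementary estimate.
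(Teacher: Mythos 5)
Your proof is correct and follows essentially the same route as the paper's: both reduce the matrix inequality to the scalar bound of Lemma~\ref{lemma:ineq_xyk} by expanding $\norm{A(X)-A(Y)}_F^2$ and $\norm{X-Y}_F^2$ over the same overlap weights $|\langle u_i,v_j\rangle|^2$ (the paper's $Q_{ji}^2$ with $Q=Q_YQ_X^\top$), then pass to the maximum eigenvalues and absorb the square root via $\sqrt{1-s}\leq 1-s/2$. Your derivation of the weighted expansion via $\norm{M}_F^2=\sum_{i,j}|\langle u_i,Mv_j\rangle|^2$ is a cleaner way to reach the identity the paper obtains by direct computation, but it is the same argument.
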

%\vspace{-6mm}
\begin{proof}
First we factorize $X$ using eigen decomposition, $X = Q_X^\top D_X Q_X$, where $Q_X$ and $D_X$ are orthogonal matrix and diagonal matrix, respectively. Then we have,
\begin{equation}
    A(X)
    =\sqrt{Q_X^\top D_X^2 Q_X + \frac{4}{\lambda} I}
    =\sqrt{Q_X^\top (D_X^2 + \frac{4}{\lambda} I) Q_X}
    = Q_X^\top \sqrt{D_X^2 + \frac{4}{\lambda} I} Q_X.
\label{eq:A_fact}
\end{equation}
Similarly, the above equation holds for $Y$. Therefore,
\begin{align}
    \norm{ A(X) - A(Y) }_F
    &= \norm{
        Q_X^\top \sqrt{D_X^2 + \frac{4}{\lambda} I} Q_X - 
        Q_Y^\top \sqrt{D_Y^2 + \frac{4}{\lambda} I} Q_Y
    }_F \\
    &= \norm{
        Q_X
        (Q_X^\top \sqrt{D_X^2 + \frac{4}{\lambda} I} Q_X - 
        Q_Y^\top \sqrt{D_Y^2 + \frac{4}{\lambda} I} Q_Y)
        Q_X^\top
    }_F \\
    &= \norm{
        \sqrt{D_X^2 + \frac{4}{\lambda} I} - 
        Q_X Q_Y^\top \sqrt{D_Y^2 + \frac{4}{\lambda} I} Q_Y Q_X^\top
    }_F \\
    &= \norm{
        \sqrt{D_X^2 + \frac{4}{\lambda} I} - 
        Q^\top \sqrt{D_Y^2 + \frac{4}{\lambda} I} Q
    }_F, \label{eq:LHS_fact_1}
\end{align}
where we define $ Q:= Q_Y Q_X^\top $. Similarly, we have,
\begin{align}
    \norm{ X - Y }_F
    &= \norm{
        Q_X^\top D_X Q_X - 
        Q_Y^\top D_Y Q_Y
    }_F \\
    &= \norm{
        D_X - 
        Q_X Q_Y^\top D_Y Q_Y Q_X^\top
    }_F \\
    &= \norm{
        D_X - 
        Q^\top D_Y Q
    }_F. \label{eq:RHS_fact_1}
\end{align}
Then the $i$-th entry on the diagonal of $Q^\top D_Y Q$ is $\sum_{j=1}^d D_{Yjj} Q_{ji}^2$. Using the fact that $D_X$ and $D_Y$ are diagonal, we have,
\begin{align}
    \norm{ X - Y }_F^2
    &= \norm{ D_X - Q^\top D_Y Q }_F^2 \\
    &= \norm{ Q^\top D_Y Q }_F^2 - 
        \sum_{i=1}^d (\sum_{j=1}^d D_{Yjj} Q_{ji}^2)^2 + 
        \sum_{i=1}^d (D_{Xii} - \sum_{j=1}^d D_{Yjj} Q_{ji}^2)^2 \\
    &= \norm{ D_Y }_F^2 + \sum_{i=1}^d D_{Xii} (
            D_{Xii} - 2 \sum_{j=1}^d D_{Yjj} Q_{ji}^2
        ) \\
    &= \sum_{i=1}^d (D_{Xii}^2 + D_{Yii}^2) 
        - 2 \sum_{i=1}^d \sum_{j=1}^d D_{Xii} D_{Yjj} Q_{ji}^2 \\
    &= \sum_{i=1}^d \sum_{j=1}^d Q_{ji}^2 (D_{Xii} - D_{Yjj})^2.
    \label{eq:RHS_fact_2}
\end{align}
The last step makes use of $\sum_{i=1}^d Q_{ji}^2 = 1$ and $\sum_{j=1}^d Q_{ji}^2 = 1$. Similarly, using (\ref{eq:LHS_fact_1}), we have,
\begin{align}
    \norm{ A(X) - A(Y) }_F^2
    &= \norm{
        \sqrt{D_X^2 + \frac{4}{\lambda} I} - 
        Q^\top \sqrt{D_Y^2 + \frac{4}{\lambda} I} Q
    }_F^2 \\
    &= \sum_{i=1}^d \sum_{j=1}^d Q_{ji}^2 (\sqrt{D_{Xii}^2 + \frac{4}{\lambda}} - \sqrt{D_{Yjj}^2 + \frac{4}{\lambda}})^2
    \label{eq:LHS_fact_2}
\end{align}
Assuming $\norm{X-Y}_F > 0$ (otherwise (\ref{eq:lemma_contraction}) trivially holds), using (\ref{eq:LHS_fact_2}) and (\ref{eq:RHS_fact_2}), we have,
\begin{align}
    \frac{ \norm{ A(X) - A(Y) }_F^2 }{ \norm{ X - Y }_F^2 }
    &= \frac{
        \sum_{i=1}^d \sum_{j=1}^d Q_{ji}^2 (\sqrt{D_{Xii}^2 + \frac{4}{\lambda}} - \sqrt{D_{Yjj}^2 + \frac{4}{\lambda}})^2
    }{
        \sum_{i=1}^d \sum_{j=1}^d Q_{ji}^2 (D_{Xii} - D_{Yjj})^2
    } \\
    &\leq \max_{i,j=1,\cdots,d,~D_{Xii}\neq D_{Yjj}} \frac{
        (\sqrt{D_{Xii}^2 + \frac{4}{\lambda}} - \sqrt{D_{Yjj}^2 + \frac{4}{\lambda}})^2
    }{
        (D_{Xii} - D_{Yjj})^2
    }
\end{align}
Using lemma (\ref{lemma:ineq_xyk}), we have,
\begin{align}
    \frac{ \norm{ A(X) - A(Y) }_F^2 }{ \norm{ X - Y }_F^2 }
    &\leq \max_{i,j=1,\cdots,d,~D_{Xii}\neq D_{Yjj}} \frac{
        (\sqrt{D_{Xii}^2 + \frac{4}{\lambda}} - \sqrt{D_{Yjj}^2 + \frac{4}{\lambda}})^2
    }{
        (D_{Xii} - D_{Yjj})^2
    } \\
    &\leq \max_{i,j=1,\cdots,d,~D_{Xii}\neq D_{Yjj}} 1 - \frac{1}{
        \sqrt{
            (\frac{D_{Xii}^2}{\frac{4}{\lambda}} + 1) (\frac{D_{Yjj}^2}{\frac{4}{\lambda}} + 1)
        }
    } \\
    &\leq 1 - \frac{1}{
        \sqrt{
            (\frac{\lambda}{4} \max_{i} D_{Xii}^2 + 1) (\frac{\lambda}{4} \max_{j} D_{Yjj}^2 + 1)
        }
    } \\
    &= 1 - (\frac{\lambda}{4} \Lambda_{max}(X)^2 + 1)^{-1/2}
            (\frac{\lambda}{4} \Lambda_{max}(Y)^2 + 1)^{-1/2}. 
\end{align}
Therefore, 
\begin{align}
    \frac{ \norm{ A(X) - A(Y) }_F }{ \norm{ X - Y }_F }
    &\leq \sqrt{1 - (\frac{\lambda}{4} \Lambda_{max}(X)^2 + 1)^{-1/2}
        (\frac{\lambda}{4} \Lambda_{max}(Y)^2 + 1)^{-1/2} }\\
    &\leq 1 - \frac{1}{2} 
        (\frac{\lambda}{4} \Lambda_{max}(X)^2 + 1)^{-1/2}
        (\frac{\lambda}{4} \Lambda_{max}(Y)^2 + 1)^{-1/2}.
\end{align}

\end{proof}

\begin{lemma}
\label{lm:linear}
Under assumption~(\ref{ass:theta-true2}), the output of the $k$-th and $(k+1)$-th AM step $Z_{k}^{\text{AM}}$, $Z_{k+1}^{\text{AM}}$ and $\Theta_{k+1}^{\text{AM}}$ satisfy the following inequality,
\begin{align}
    \norm{
        Z_{k+1}^{\text{AM}} - \widehat{Z}_{\lambda}
    }_F
    \leq \norm{
        \Theta_{k+1}^{\text{AM}} - \widehat{\Theta}_{\lambda}
    }_F
    \leq C_\lambda \norm{ 
        Z_{k}^{\text{AM}} - \widehat{Z}_{\lambda}
    }_F,
\label{eq:contraction}
\end{align}
where $0 < C_\lambda < 1$ is a constant depending on $\lambda$.
\end{lemma}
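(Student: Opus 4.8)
The plan is to exploit the fact that the optimum $(\widehat{\Theta}_{\lambda},\widehat{Z}_{\lambda})$ is a fixed point of the AM map. Since $f(\Theta,Z;\rho,\lambda)$ is jointly convex, at the joint minimizer both partial first-order conditions hold simultaneously, so $\widehat{\Theta}_{\lambda}$ and $\widehat{Z}_{\lambda}$ obey the same closed forms as the updates (\ref{eq:qpenalty_optstep1-apx})--(\ref{eq:qpenalty_optstep2-apx}). Writing $A(Y):=\sqrt{Y^\top Y+\frac{4}{\lambda}I}$, $Y_k:=\frac{1}{\lambda}\widehat{\Sigma}-Z_k^{\text{AM}}$ and $\widehat{Y}:=\frac{1}{\lambda}\widehat{\Sigma}-\widehat{Z}_{\lambda}$, this gives $\Theta_{k+1}^{\text{AM}}=\frac{1}{2}(-Y_k+A(Y_k))$ and $\widehat{\Theta}_{\lambda}=\frac{1}{2}(-\widehat{Y}+A(\widehat{Y}))$, together with $Z_{k+1}^{\text{AM}}=\eta_{\rho/\lambda}(\Theta_{k+1}^{\text{AM}})$ and $\widehat{Z}_{\lambda}=\eta_{\rho/\lambda}(\widehat{\Theta}_{\lambda})$. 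I would then prove the two inequalities separately.

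For the left inequality, I would use that $\eta_{\rho/\lambda}$ acts entrywise and each scalar soft-threshold is $1$-Lipschitz, hence $\eta_{\rho/\lambda}$ is non-expansive in Frobenius norm. Applied to $\Theta_{k+1}^{\text{AM}}$ and $\widehat{\Theta}_{\lambda}$ this gives directly $\norm{Z_{k+1}^{\text{AM}}-\widehat{Z}_{\lambda}}_F=\norm{\eta_{\rho/\lambda}(\Theta_{k+1}^{\text{AM}})-\eta_{\rho/\lambda}(\widehat{\Theta}_{\lambda})}_F\leq\norm{\Theta_{k+1}^{\text{AM}}-\widehat{\Theta}_{\lambda}}_F$.

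For the right inequality, I would subtract the two closed forms for $\Theta$ and use $Y_k-\widehat{Y}=-(Z_k^{\text{AM}}-\widehat{Z}_{\lambda})$ to obtain $\Theta_{k+1}^{\text{AM}}-\widehat{\Theta}_{\lambda}=\frac{1}{2}\big[(Z_k^{\text{AM}}-\widehat{Z}_{\lambda})+(A(Y_k)-A(\widehat{Y}))\big]$. The triangle inequality, Lemma~\ref{lemma:contraction} applied to $A$, and the identity $\norm{Y_k-\widehat{Y}}_F=\norm{Z_k^{\text{AM}}-\widehat{Z}_{\lambda}}_F$ then yield $\norm{\Theta_{k+1}^{\text{AM}}-\widehat{\Theta}_{\lambda}}_F\leq\frac{1}{2}(1+\alpha_\lambda)\norm{Z_k^{\text{AM}}-\widehat{Z}_{\lambda}}_F$, so the claimed constant is $C_\lambda=\frac{1}{2}(1+\alpha_\lambda)<1$.

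The main obstacle is that Lemma~\ref{lemma:contraction} produces an $\alpha_\lambda$ that depends on $\Lambda_{\max}(Y_k)$, so to assert a single constant $C_\lambda$ valid for every $k$ I must bound $\norm{Y_k}_2$ uniformly in $k$. I would resolve this with a self-consistent induction: fix $R\geq\norm{Z_0^{\text{AM}}-\widehat{Z}_{\lambda}}_F$ and observe that, whenever the $k$-th iterate lies in the ball of radius $R$ about $\widehat{Z}_{\lambda}$, one has $\norm{Y_k}_2\leq\norm{Y_k}_F\leq\norm{\widehat{Y}}_F+\norm{Z_k^{\text{AM}}-\widehat{Z}_{\lambda}}_F\leq\norm{\widehat{Y}}_F+R$, where $\norm{\widehat{Y}}_F$ is a finite fixed quantity controlled by Assumption~\ref{ass:theta-true2} through $\norm{\widehat{\Sigma}}_2\leq c_{\widehat{\Sigma}}$ and the boundedness of the optimum. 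With this uniform eigenvalue bound, $\alpha_\lambda$ and hence $C_\lambda$ are strictly below $1$ and independent of $k$; combining the two inequalities gives $\norm{Z_{k+1}^{\text{AM}}-\widehat{Z}_{\lambda}}_F\leq C_\lambda R\leq R$, so the next iterate stays in the ball and the induction closes. This simultaneously establishes boundedness of the iterates and the uniform contraction asserted in the lemma.
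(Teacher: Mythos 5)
Your proposal is correct and follows essentially the same route as the paper's proof: the non-expansiveness of the entrywise soft-threshold gives the left inequality, and the decomposition $\Theta_{k+1}^{\text{AM}}-\widehat{\Theta}_{\lambda}=\frac{1}{2}\big[(Z_k^{\text{AM}}-\widehat{Z}_{\lambda})+(A(Y_k)-A(\widehat{Y}))\big]$ combined with Lemma~\ref{lemma:contraction} gives the right inequality with the same constant $C_\lambda=\frac{1}{2}(1+\alpha_\lambda)$. Your explicit self-consistent induction to obtain a $k$-uniform bound on $\Lambda_{max}(Y_k)$ is a slightly tidier rendering of the step where the paper first invokes the contraction to get $\norm{Z_k^{\text{AM}}-\widehat{Z}_{\lambda}}_F\leq\norm{Z_0^{\text{AM}}-\widehat{Z}_{\lambda}}_F$ and then feeds that back into the bound on $\Lambda_{max}(Y_{k+1})$, but the underlying argument is the same.
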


\begin{proof}
The first part is easy to show, if we observe that in the second update step of AM (\ref{eq:qpenalty_optstep2}), $\eta_{\rho/\lambda}$ is a contraction under metric $d(X,Y) = \norm{X-Y}_F$. Therefore we have,
\begin{equation}
    \norm{
        Z_{k+1}^{\text{AM}} - \widehat{Z}_{\lambda}
    }_F
    = \norm{
        \eta_{\rho/\lambda}(\Theta_{k+1}^{\text{AM}}) 
        - \eta_{\rho/\lambda}(\widehat{\Theta}_{\lambda})
    }_F
    \leq \norm{
        \Theta_{k+1}^{\text{AM}} 
        - \widehat{\Theta}_{\lambda}
    }_F.
\label{eq:left_hand_side}
\end{equation}

Next we will prove the second part. To simplify notation, we let $A(X) = \sqrt{X^\top X + \frac{4}{\lambda} I}$. Using the first update step of AM (\ref{eq:qpenalty_optstep1}), we have,
\begin{align}
    \norm{
        \Theta_{k+1}^{\text{AM}} 
        - \widehat{\Theta}_{\lambda}
    }_F &= \norm{
        \frac{1}{2} \big(
            -Y_{k+1} + \sqrt{Y_{k+1}^\top Y_{k+1} + \frac{4}{\lambda} I}
        \big)
        - \frac{1}{2} \big(
            -Y_{\lambda} + \sqrt{Y_{\lambda}^\top Y_{\lambda} + \frac{4}{\lambda} I}
        \big)
    }_F \\
    &= \frac{1}{2} \norm{
        - (Y_{k+1} - Y_{\lambda}) + (\sqrt{Y_{k+1}^\top Y_{k+1} + \frac{4}{\lambda} I} - \sqrt{Y_{\lambda}^\top Y_{\lambda} + \frac{4}{\lambda} I})
    }_F \\
    &= \frac{1}{2} \norm{
        - (Y_{k+1} - Y_{\lambda}) + (A(Y_{k+1}) - A(Y_{\lambda}))
    }_F \\
    &\leq \frac{1}{2} \norm{
        Y_{k+1} - Y_{\lambda}
    }_F + \frac{1}{2} \norm{
        A(Y_{k+1}) - A(Y_{\lambda})
    }_F,
\end{align}
where $Y_{k+1} = \frac{1}{\lambda}\widehat{\Sigma} - Z_k^{\text{AM}}$ and $Y_\lambda = \frac{1}{\lambda}\widehat{\Sigma} - \widehat{Z}_{\lambda}$. The last derivation step makes use of the triangle inequality. Using lemma (\ref{lemma:contraction}), we have,
\begin{align}
    \norm{
        \Theta_{k+1}^{\text{AM}} 
        - \widehat{\Theta}_{\lambda}
    }_F 
    &\leq \frac{1}{2} \norm{
        Y_{k+1} - Y_{\lambda}
    }_F + \frac{1}{2} \alpha_\lambda \norm{
        Y_{k+1} - Y_{\lambda}
    }_F.
\end{align}
Therefore
\begin{align}
    \norm{
        \Theta_{k+1}^{\text{AM}} 
        - \widehat{\Theta}_{\lambda}
    }_F \leq C_\lambda \norm{
        Y_{k+1} - Y_{\lambda}
    }_F = C_\lambda \norm{
        Z_k^{\text{AM}} - \widehat{Z}_{\lambda}
    }_F,
\label{eq:right_hand_side}
\end{align}
where
\begin{align}
    C_\lambda = \frac{1}{2} + \frac{1}{2} \alpha_\lambda
    &= 1 - \frac{1}{4}
        (\frac{\lambda}{4} \Lambda_{max}(Y_{k+1})^2 + 1)^{-1/2}
        (\frac{\lambda}{4} \Lambda_{max}(Y_{\lambda})^2 + 1)^{-1/2} \\
    &= 1 - 
        (\lambda \Lambda_{max}(Y_{k+1})^2 + 4)^{-1/2}
        (\lambda \Lambda_{max}(Y_{\lambda})^2 + 4)^{-1/2}\leq 1,
\label{eq:c_lambda_def}
\end{align}
$\Lambda_{max}(X)$ is the largest eigenvalue of $X$ in absolute value. The rest is to show that both $\Lambda_{max}(Y_{\lambda})$ and $\Lambda_{max}(Y_{k+1})$ are bounded using assumption~(\ref{ass:theta-true2}). For $\Lambda_{max}(Y_{k+1})$, we have,
\begin{align}
    \Lambda_{max}(Y_{k+1}) &= \norm{Y_{k+1}}_2
    = \norm{\frac{1}{\lambda}\widehat{\Sigma} - Z_k^{\text{AM}}}_2 \\
    &\leq \norm{\frac{1}{\lambda}\widehat{\Sigma}}_2 + \norm{Z_k^{\text{AM}}}_2 \\
    &\leq \frac{1}{\lambda}c_{\widehat{\Sigma}} + \norm{
        Z_k^{\text{AM}} - \widehat{Z}_{\lambda}
    }_F + \norm{
        \widehat{Z}_{\lambda}
    }_F.
\label{eq:Y_k}
\end{align}

Combining (\ref{eq:left_hand_side}) and (\ref{eq:right_hand_side}), we have,
\begin{align}
    \norm{
        Z_{k+1}^{\text{AM}} - \widehat{Z}_{\lambda}
    }_F
    \leq \norm{
        \Theta_{k+1}^{\text{AM}} - \widehat{\Theta}_{\lambda}
    }_F
    \leq C_\lambda \norm{ 
        Z_{k}^{\text{AM}} - \widehat{Z}_{\lambda}
    }_F.
\end{align}
Therefore,
\begin{align}
    \norm{
        Z_{k}^{\text{AM}} - \widehat{Z}_{\lambda}
    }_F
    \leq \norm{ 
        Z_{k-1}^{\text{AM}} - \widehat{Z}_{\lambda}
    }_F
    \leq \cdots
    \leq \norm{ 
        Z_{0}^{\text{AM}} - \widehat{Z}_{\lambda}
    }_F.
\end{align}
Continuing with (\ref{eq:Y_k}), we have,
\begin{align}
    \Lambda_{max}(Y_{k+1})
    &\leq \frac{1}{\lambda}c_{\widehat{\Sigma}} + \norm{
        Z_k^{\text{AM}} - \widehat{Z}_{\lambda}
    }_F + \norm{
        \widehat{Z}_{\lambda}
    }_F \\
    &\leq \frac{1}{\lambda}c_{\widehat{\Sigma}} + \norm{
        Z_0^{\text{AM}} - \widehat{Z}_{\lambda}
    }_F + \norm{
        \widehat{Z}_{\lambda}
    }_F \\
    &\leq \frac{1}{\lambda}c_{\widehat{\Sigma}} + \norm{
        Z_0^{\text{AM}}
    }_F + 2\norm{
        \widehat{Z}_{\lambda}
    }_F.
\end{align}
Since $\widehat{Z}_{\lambda}$ is the minimizer of a strongly convex function, its norm is bounded. And we also have $\norm{Z_0^{\text{AM}}}_F$ bounded in Algorithm (\ref{algo:glad}), so $\Lambda_{max}(Y_{k+1})$ is bounded above whenever $\lambda<\infty $. For $\Lambda_{max}(Y_{\lambda})$, we have,
\begin{align}
    \Lambda_{max}(Y_{\lambda}) 
    &= \norm{\frac{1}{\lambda}\widehat{\Sigma} - \widehat{Z}_{\lambda}}_2 \\
    &\leq \norm{\frac{1}{\lambda}\widehat{\Sigma}}_2 + \norm{\widehat{Z}_{\lambda}}_2 \\
    &\leq \frac{1}{\lambda}c_{\widehat{\Sigma}} + \norm{\widehat{Z}_{\lambda}}_2.
\end{align}
Therefore both $\Lambda_{max}(Y_{\lambda})$ and $\Lambda_{max}(Y_{k+1})$ are bounded in (\ref{eq:c_lambda_def}), ~\ie ~$0 < C_\lambda < 1$ is a constant only depending on $\lambda$.

\end{proof}

\thm*

\begin{proof}

\underline{\it (1)  Error between $\widehat{\Theta}_{\lambda}$ and $\widehat{\Theta}_{\gG}$  }

Combining the following two equations:
\begin{align*}
&f(\widehat{\Theta}_{\lambda},\widehat{Z}_{\lambda};\rho,\lambda)=\min_{\Theta,Z}f(\Theta,Z;\rho,\lambda)\leqslant\min_\Theta f(\Theta,Z=\Theta;\rho,\lambda)=\min_\Theta\Gcal(\Theta;\rho)=\Gcal(\widehat{\Theta}_\Gcal;\rho),\\
    &f(\Theta,Z;\rho,\lambda) = \gG(\Theta;\rho) + \rho (\norm{Z}_1 - \norm{\Theta}_1) + \frac{1}{2}\lambda \norm{Z-\Theta}_F^2,
\end{align*}
we have:
\begin{align*}
    0\leqslant\Gcal(\widehat{\Theta}_{\lambda};\rho)-\Gcal(\widehat{\Theta}_\Gcal;\rho)\leqslant\rho(\norm{\widehat{\Theta}_{\lambda}}_1-\norm{\widehat{Z}_{\lambda}}_1). %- \frac{1}{2}\lambda \norm{\widehat{Z}_{\lambda}-\widehat{\Theta}_{\lambda}}_F^2.
\end{align*}
Note that by the optimality condition, $\nabla_z f(\widehat{\Theta}_{\lambda},\widehat{Z}_{\lambda},\rho, \lambda) = 0$, we have the fixed point equation
\[\widehat{Z}_{\lambda}=\eta_{\rho/\lambda}(\widehat{\Theta}_{\lambda}).
\]
Therefore, $\norm{\widehat{\Theta}_{\lambda}}_1-\norm{\widehat{Z}_{\lambda}}_1\leqslant \frac{\rho d^2}{\lambda}$ and we have:
\begin{align}
    0\leqslant\Gcal(\widehat{\Theta}_{\lambda};\rho)-\Gcal(\widehat{\Theta}_\Gcal;\rho) \leqslant \frac{\rho^2 d^2}{\lambda}.
\end{align}
Since $\Gcal$ is $\sigma_{\gG}$-strongly convex, where $\sigma_\mathcal{G}$ is independent of the sample covariance matrix $\widehat{\Sigma^*}$ as the hessian of $\mathcal{G}$ is independent of $\widehat{\Sigma}^*$.
\begin{align}
\frac{\sigma_{\gG}}{2}\norm{\widehat{\Theta}_{\lambda} - \widehat{\Theta}_\Gcal}_F^2 + \langle \nabla\gG(\widehat{\Theta}_\Gcal;\rho) ,\widehat{\Theta}_{\lambda} - \widehat{\Theta}_\Gcal \rangle \leqslant  \Gcal(\widehat{\Theta}_{\lambda};\rho)-\Gcal(\widehat{\Theta}_\Gcal;\rho).
\end{align}
Therefore,
\begin{align}
	\norm{\widehat{\Theta}_{\lambda} - \widehat{\Theta}_\Gcal}_F \leqslant \sqrt{\frac{2\rho^2d^2}{\lambda\sigma_{\gG}}}= \rho d\sqrt{\frac{ 2}{\lambda\sigma_{\gG}}} = O\left(\rho d \sqrt{\frac{1}{\lambda}}\right)
\end{align}
\end{proof}

\begin{proof}

\underline{\it (2)  Error between $\widehat{\Theta}_{\mathcal{G}}$ and $\Theta^*$ }

\begin{corollary}[Theorem 1. of \cite{rothman2008sparse}]\label{cor:statistical}
    Let $\widehat{\Theta}_{\mathcal{G}}$ be the minimizer for the optimization objective $\mathcal{G}(\Theta; \rho)$. Under Assumptions \ref{ass:theta-true1} \& \ref{ass:theta-true2}, if $\rho\asymp \sqrt{\frac{\log d}{m}}$, 
    \begin{equation}
        \norm{\widehat{\Theta}_{\mathcal{G}} - \Theta^*}_F = \mathcal{O}_{\mathbb{P}}\left(\sqrt{\frac{(d+s)\log d}{m}}\right)
    \end{equation}
\end{corollary}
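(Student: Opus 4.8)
The statement is Theorem~1 of \cite{rothman2008sparse} transcribed into our notation, so my plan is to check that our hypotheses match theirs, invoke the result, and, for insight, outline the argument it rests on.

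\emph{Matching the hypotheses.} Rothman et al. analyze exactly the minimizer of the $\ell_1$-penalized log-determinant objective $\mathcal{G}(\Theta;\rho)$, under the conditions that the population covariance has spectrum bounded away from $0$ and $\infty$ and that the true precision matrix has a bounded number of nonzero off-diagonal entries. Assumption~\ref{ass:theta-true2} supplies $0<\epsilon_1\leq\Lambda_{\min}(\Sigma^*)$ and $\Lambda_{\max}(\Sigma^*)\leq\epsilon_2$, which is their spectral condition, while Assumption~\ref{ass:theta-true1} supplies $\mathrm{card}(S)\leq s$, their sparsity condition. The prescribed scaling $\rho\asymp\sqrt{(\log d)/m}$ is their choice of regularization parameter, so the theorem applies and delivers the stated $\mathcal{O}_{\mathbb{P}}(\sqrt{(d+s)(\log d)/m})$ Frobenius rate.

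\emph{Outline of the underlying argument.} Write $\Delta=\widehat{\Theta}_{\mathcal{G}}-\Theta^*$. Since $\widehat{\Theta}_{\mathcal{G}}$ minimizes $\mathcal{G}$, the basic inequality $\mathcal{G}(\widehat{\Theta}_{\mathcal{G}};\rho)\leq\mathcal{G}(\Theta^*;\rho)$ holds. A second-order Taylor expansion of $-\log\det(\cdot)$ about $\Theta^*$ produces a quadratic form in $\Delta$ whose Hessian is $\Sigma^*\otimes\Sigma^*$ up to a remainder; the eigenvalue bounds of Assumption~\ref{ass:theta-true2} give a strong-convexity lower bound $c\,\norm{\Delta}_F^2$ with $c>0$ depending only on $\epsilon_1,\epsilon_2$, valid as long as the segment from $\Theta^*$ to $\widehat{\Theta}_{\mathcal{G}}$ stays inside the positive-definite cone. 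The linear term is governed by the deviation $\langle\widehat{\Sigma}-\Sigma^*,\Delta\rangle$, and Gaussian concentration yields $\norm{\widehat{\Sigma}-\Sigma^*}_{\max}=\mathcal{O}_{\mathbb{P}}(\sqrt{(\log d)/m})$. Choosing $\rho\asymp\sqrt{(\log d)/m}$ makes the off-diagonal penalty dominate this fluctuation, confining $\Delta$ to a cone in which $\norm{\Delta_{S^c}}_1$ is controlled by $\norm{\Delta_S}_1$. Splitting the error into the $s$ penalized off-diagonal support entries and the $d$ unpenalized diagonal entries produces the $(d+s)$ factor, and balancing the quadratic lower bound against the linear-plus-penalty upper bound gives $\norm{\Delta}_F=\mathcal{O}_{\mathbb{P}}(\sqrt{(d+s)(\log d)/m})$.

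The main obstacle is the restricted strong convexity step: one must show the Taylor remainder of $-\log\det$ does not spoil the quadratic lower bound over the relevant deviation set, which requires controlling $\Lambda_{\max}(\widehat{\Theta}_{\mathcal{G}})$ and using $\Lambda_{\min}(\Sigma^*)\geq\epsilon_1$ to keep the interpolating matrices positive definite. This, together with the sub-Gaussian tail bound for $\widehat{\Sigma}-\Sigma^*$, is the technical core of \cite{rothman2008sparse} that we inherit rather than reprove.
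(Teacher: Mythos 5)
Your proposal matches the paper exactly: the paper likewise treats this statement as a direct transcription of Theorem~1 of \cite{rothman2008sparse}, offering no independent proof beyond the citation, and your hypothesis-matching (spectral bounds from Assumption~\ref{ass:theta-true2}, sparsity from Assumption~\ref{ass:theta-true1}, the $\rho\asymp\sqrt{(\log d)/m}$ scaling) is precisely what justifies the invocation. Your additional sketch of the restricted-strong-convexity argument is accurate but goes beyond what the paper provides; nothing in it is needed or contradicted.
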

\end{proof}

\underline{\it (3)  Error between ${\Theta}_{k}^{\text{AM}}$ and $\Theta^*$  }

Under the conditions in Corollary~\ref{cor:statistical}, we use triangle inequality to combine the above results with Corollary~\ref{cor:statistical} and Lemma~\ref{lm:linear}.
\begin{align}
\norm{\Theta_k^{\text{AM}} - \Theta^*}_F \leqslant & \norm{\Theta_k^{\text{AM}} -\widehat{\Theta}_{\lambda} }_F +\norm{\widehat{\Theta}_{\lambda}  -\widehat{\Theta}_{\gG} }_F +\norm{\widehat{\Theta}_{\gG} - \Theta^*}_F \\
\leqslant & C_{\lambda}\norm{\Theta_{k-1}^{\text{AM}} - \widehat{\Theta}_{\lambda}}_F  + O\left(\rho d \sqrt{\frac{1}{\lambda}}\right)+\mathcal{O}_{\mathbb{P}}\left(\sqrt{\frac{(d+s)\log d}{m}}\right)\\
&\leqslant  C_{\lambda}\norm{\Theta_{k-1}^{\text{AM}} - \widehat{\Theta}_{\lambda}}_F + \mathcal{O}_{\mathbb{P}}\left(\sqrt{\frac{(d+s)\log d}{m.\min(1, \frac{(d+s)\lambda}{d^2})}}\right)\\
    &\leqslant  C_{\lambda}\norm{\Theta_{k-1}^{\text{AM}} - \widehat{\Theta}_{\lambda}}_F + \mathcal{O}_{\mathbb{P}}\left(\sqrt{\frac{(\log d) /m}{\min(\frac{1}{(d+s)}, \frac{\lambda}{d^2})}}\right)
\end{align}

\section{Experimental details}\label{apx:experimental-settings}
This section contains the detailed settings used in the experimental evaluation section. 

\subsection{Synthetic Dataset generation}\label{apx:syn-dataset-explanation}
%\ref{sec:grid-search},
 For sections \ref{sec:data-driven} and \ref{sec:convergence-performance}, the synthetic data was generated based on the procedure described in \cite{rolfs2012iterative}.
 A $d$ dimensional precision matrix $\Theta$ was generated by initializing a $d\times d$ matrix with its off-diagonal entries sampled i.i.d. from a uniform distribution $\Theta_{ij}\sim\mathcal{U}(-1, 1)$. These entries were then set to zero based on the sparsity pattern of the corresponding Erdos-Renyi random graph with a certain probability $p$. Finally, an appropriate multiple of the identity matrix was added to the current matrix, so that the resulting matrix had the smallest eigenvalue as $1$. In this way, $\Theta$ was ensured to be a well-conditioned, sparse and positive definite matrix. This matrix was then used in the multivariate Gaussian distribution $\mathcal{N}(0, \Theta^{-1})$, to obtain $M$ i.i.d samples.

\subsection{Experiment details: Benefit of data-driven gradient-based algorithm}\label{apx:benefit-data-driven}
%[\ref{fig:benefit-data-driven-nmse}, \ref{fig:benefit-data-driven-obj}]
Figure(\ref{fig:benefit-data-driven}): The plots are for the ADMM method on the Erdos-Renyi graphs (fixed sparsity $p=0.1$) with dimension $D=100$ and number of samples $M=100$. The results are averaged over $100$ test graphs with $10$ sample batches per graph. The std-err = $\sigma/\sqrt{1000}$ is shown. Refer appendix(\ref{apx:syn-dataset-explanation}) for more details on data generation process.

\subsection{Experiment details: Expensive hyperparameters tuning}\label{apx:grid-search}
Table(\ref{table:efficiency-gradient-search}) shows the final NMSE values for the ADMM method on the random graph (fixed sparsity $p=0.1$) with dimension $D=100$ and number of samples $M=100$. We fixed the initialization parameter of $\Theta_0$ as $t=0.1$ and chose appropriate update rate $\alpha$ for $\lambda$ . It is important to note that the NMSE values are very sensitive to the choice of $t$ as well. These parameter values changed substantially for a new problem setting. Refer appendix(\ref{apx:syn-dataset-explanation}) for more details on data generation process.

% \subsection{\texttt{GLAD} model: Varying number of training graphs}\label{apx-sec:varyK}
% Refer figure(\ref{apx:fig:app_varyK})
% \begin{figure}[b]
% \centering 
% \subfigure{\label{fig:r2c1}\includegraphics[width=40mm]{}}
% \subfigure{\label{fig:r2c2}\includegraphics[width=40mm]{}}
% \subfigure{\label{fig:r2c3}\includegraphics[width=40mm]{}}
% \caption{Varying the number of training graphs. The results are averaged over $1000$ test graphs. The $K$ variable is the number of training graphs. From the above observation, we see that even $K=10$ training graphs for a same class of sparsity.}
% \label{apx:fig:app_varyK}
% \end{figure}
\subsection{Experiment details: Convergence on synthetic datasets}\label{apx:conv-syn-data}
 Figure(\ref{fig:trad-vs-learned}) experiment details: Figure(\ref{fig:trad-vs-learned}) shows the NMSE comparison plots for  fixed sparsity and mixed sparsity synthetic Erdos-renyi graphs. The dimension was fixed to $D=100$ and the number of samples vary as $M=[20, 100, 500]$. 
The top row has the sparsity probability $p=0.5$ for the Erdos-Renyi random graph, whereas for the bottom row plots, the sparsity probabilities are uniformly sampled from $\sim\mathcal{U}(0.05, 0.15)$. For finetuning the traditional algorithms, a validation dataset of $10$ graphs was used. For the GLAD algorithm, $10$ training graphs were randomly chosen and the same validation set was used. 

\subsection{\texttt{GLAD}: Architecture details for Section(\ref{sec:convergence-performance})}\label{apx:glad-architecture-details}
\texttt{GLAD} parameter settings: $\rho_{nn}$ was a 4 layer neural network and $\Lambda_{nn}$ was a 2 layer neural network. Both used 3 hidden units in each layer. The non-linearity used for hidden layers was $\tanh$, while the final layer had sigmoid ($\sigma$) as the non-linearity for both, $\rho_{nn}$ and $\Lambda_{nn}$ (refer Figure~\ref{fig:nn-design}). The learnable offset parameter of initial $\Theta_0$ was set to $t=1$. It was unrolled for $L=30$ iterations. The learning rates were chosen to be around $[0.01, 0.1]$ and multi-step LR scheduler was used. The optimizer used was `adam'. The best nmse model was selected based on the validation data performance. Figure(\ref{apx:fig:app_varyL}) explores the performance of GLAD on using varying number of unrolled iterations $L$.

% \begin{figure}%[b]
% \centering 
% \includegraphics[width=40mm]{ICLR-2020/nn-architecture.PNG}
% \caption{Neural network architecture details for synthetic data experiments for section(\ref{sec:convergence-performance}).}
% \label{apx:fig:nn-design}
% \end{figure}

% \begin{figure}[h]
% \vspace{-2mm}
% \floatbox[{\capbeside\thisfloatsetup{capbesideposition={left,center},capbesidewidth=4cm}}]{figure}[0.999\FBwidth]
% %\centering 
% {
% \subfigure{\includegraphics[width=88mm]{ICLR-2020/nn-architecture.pdf}}
% }
% {\caption{Neural network architecture details for synthetic data experiments for section(\ref{sec:convergence-performance}).}\label{apx:fig:nn-design}}
% \end{figure}

% \subsection{\texttt{GLAD} model: Varying number of unrolled iterations}\label{apx-sec:varyL}
% Refer figure(\ref{apx:fig:app_varyL})
% % The figures are row-wise
\begin{figure}%[b]
\centering 
\subfigure{\label{fig:r2c1}\includegraphics[width=40mm]{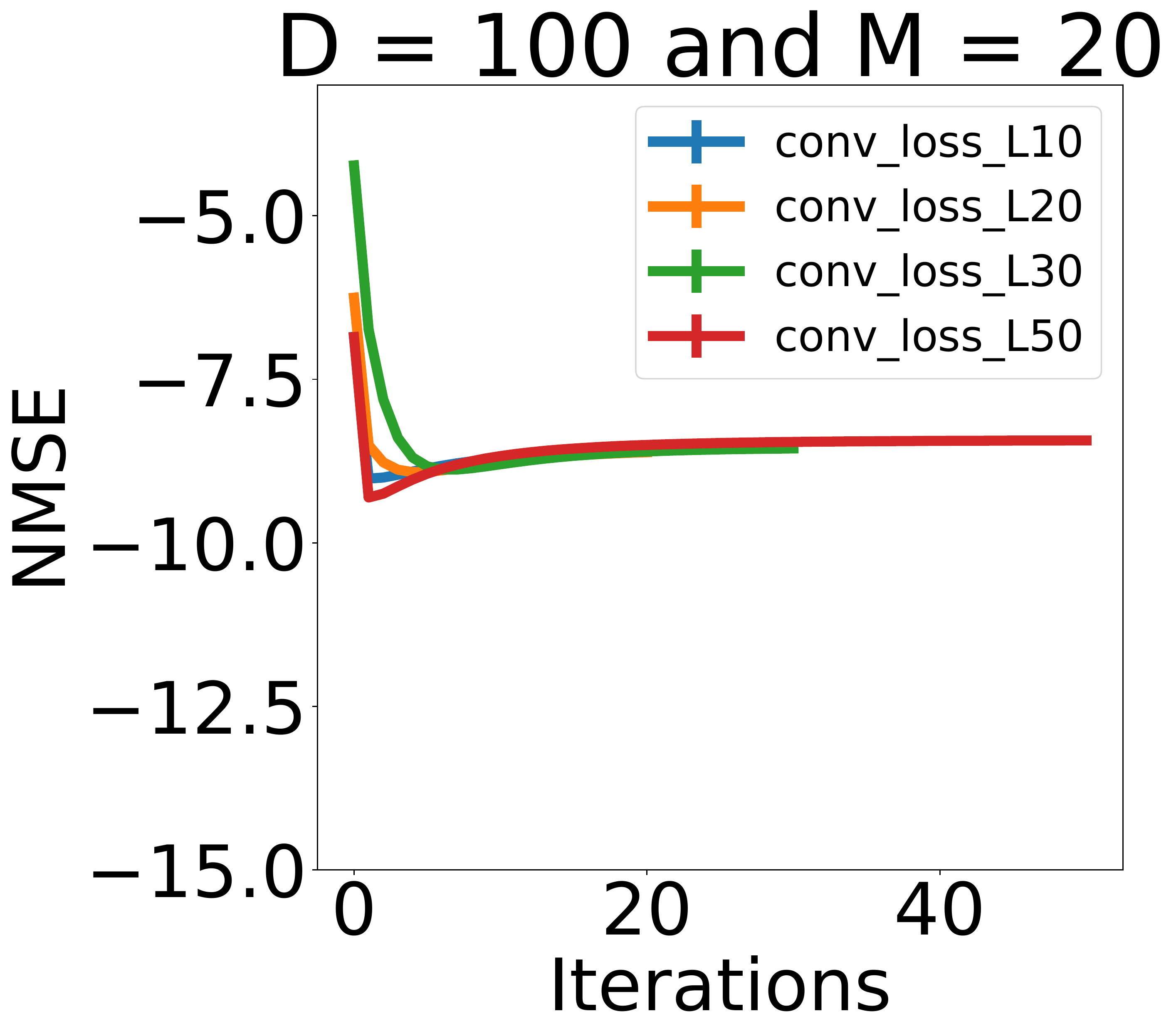}}
\subfigure{\label{fig:r2c2}\includegraphics[width=40mm]{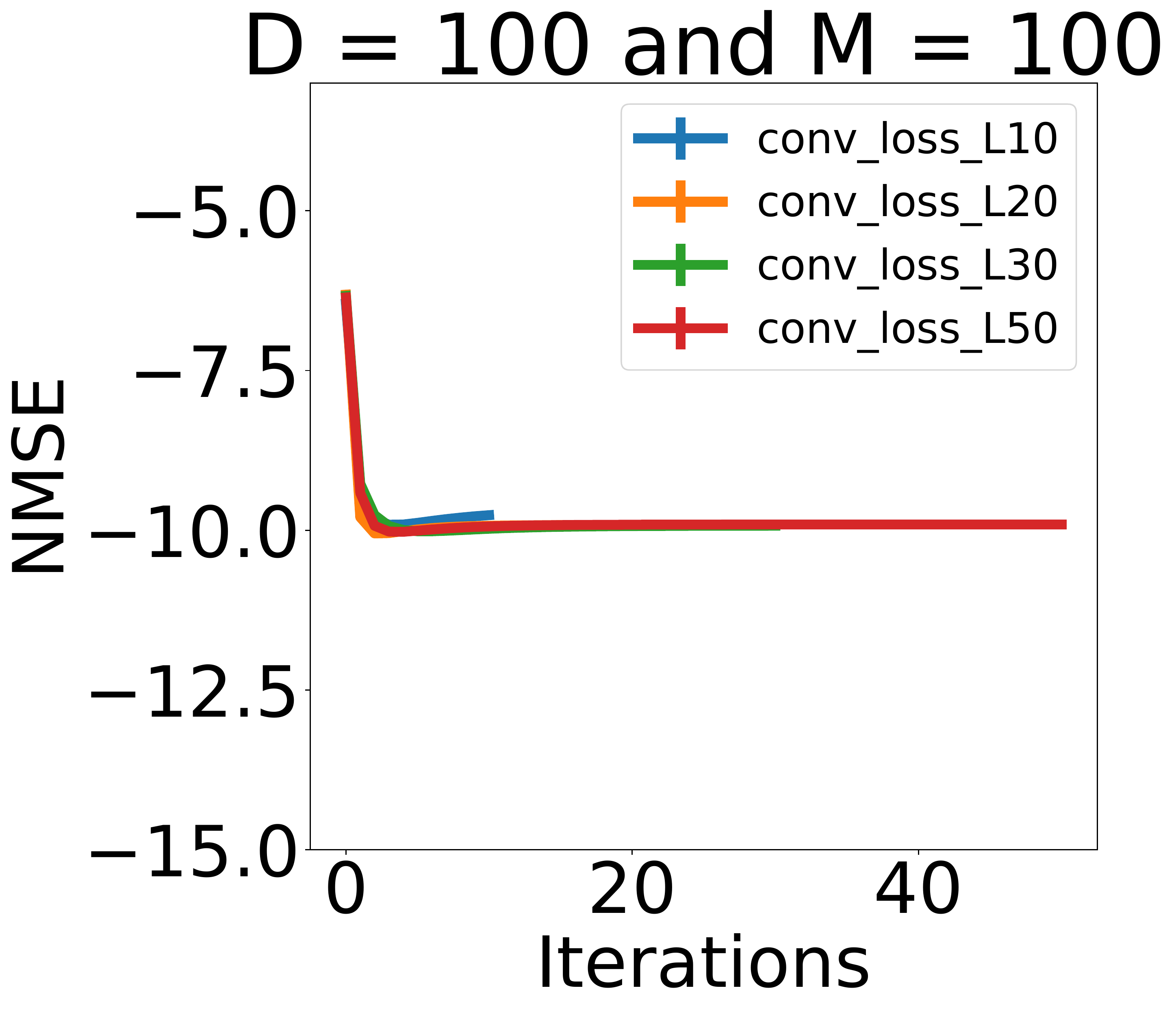}}
\subfigure{\label{fig:r2c3}\includegraphics[width=40mm]{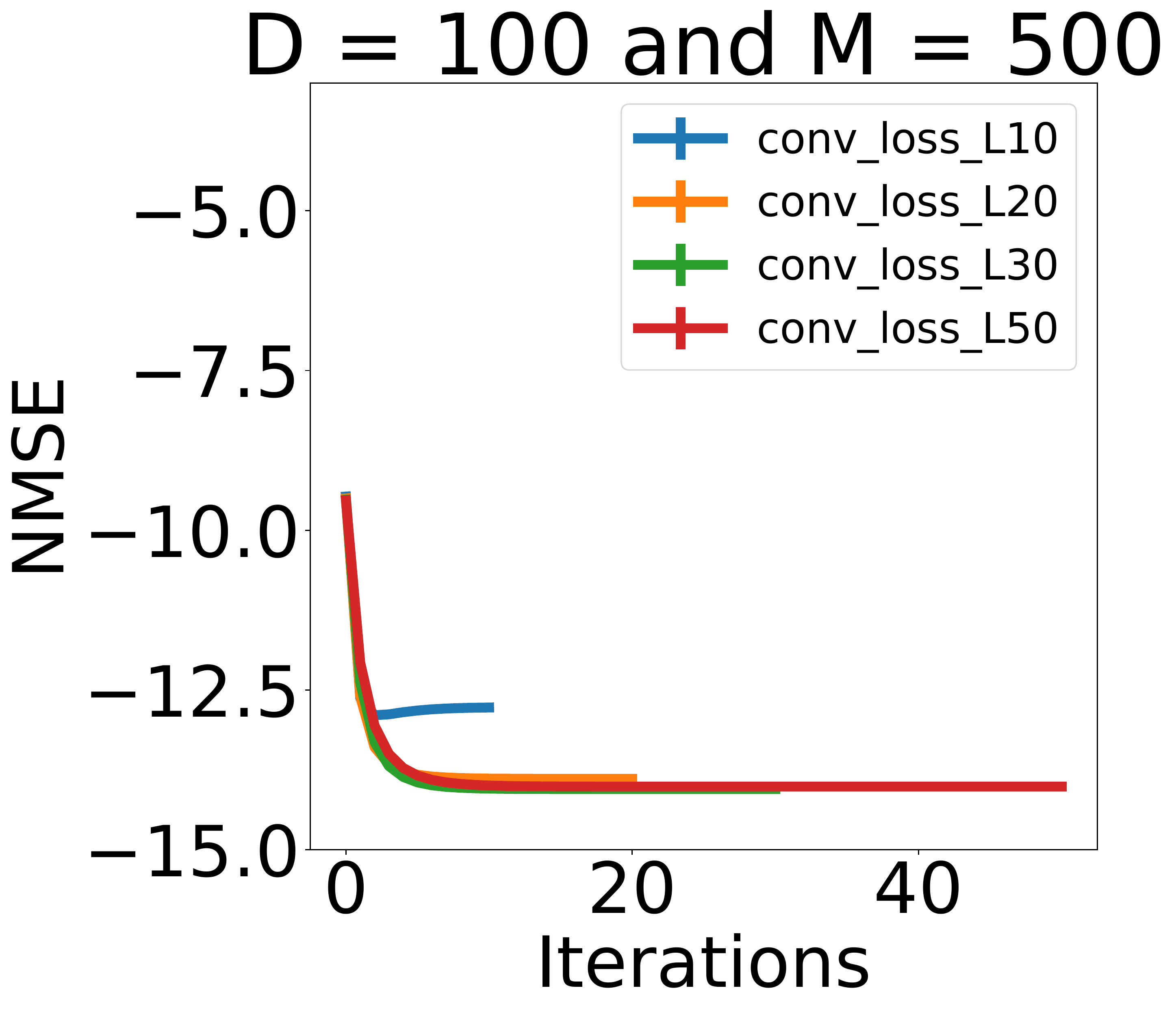}}
\caption{Varying the number of unrolled iterations. The results are averaged over $1000$ test graphs. The $L$ variable is the number of unrolled iterations. We observe that the higher number of unrolled iterations better is the performance.}
\label{apx:fig:app_varyL}
\end{figure}

\subsection{Additional note of hyper-parameter finetuning for traditional methods}\label{apx-sec: hyperparameter-tuning}
\begin{figure}%[b]
\centering 
\subfigure{\label{fig:r2c1}\includegraphics[width=40mm, height=30mm]{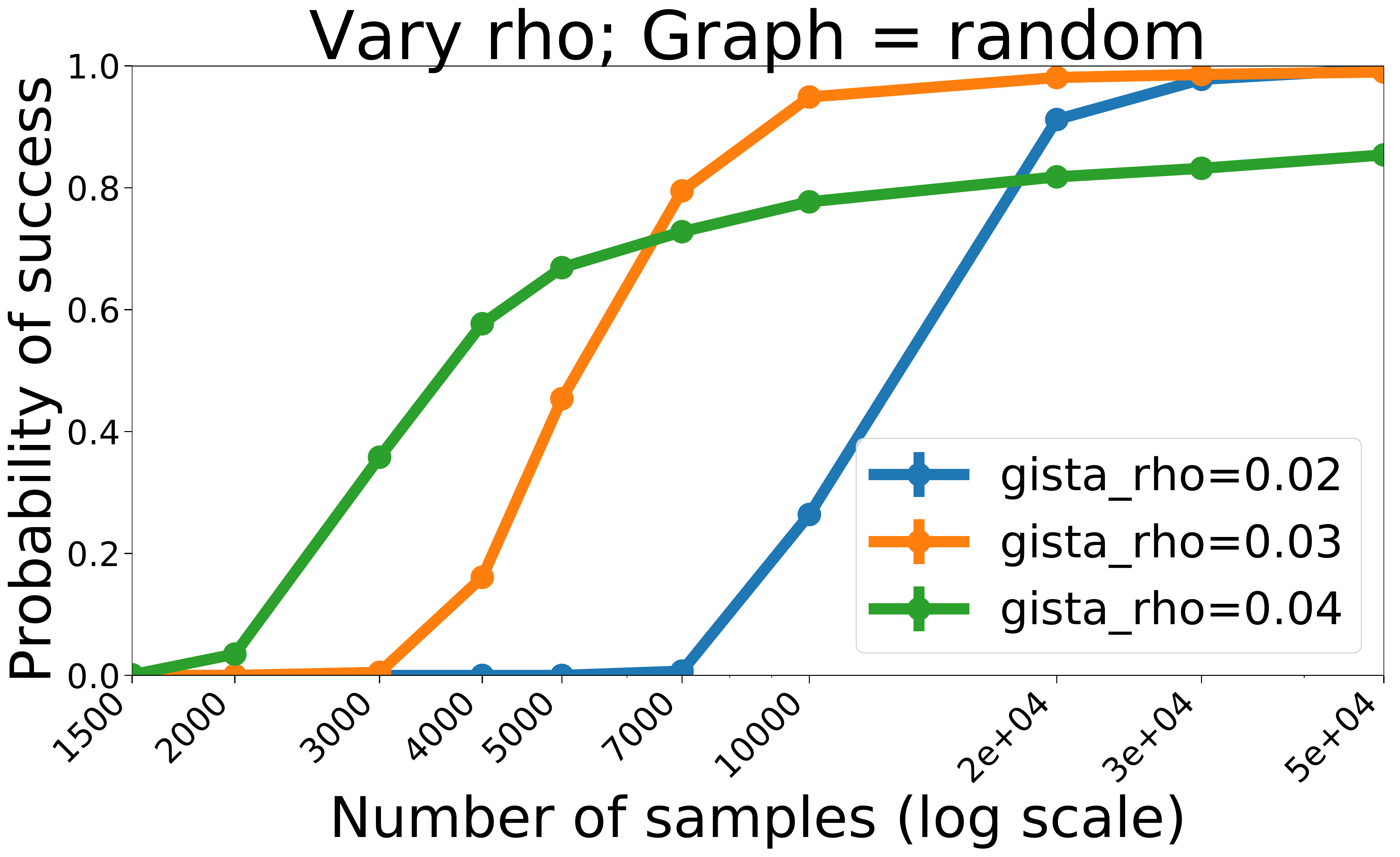}}\hspace{20pt}
\subfigure{\label{fig:r2c2}\includegraphics[width=40mm, height=30mm]{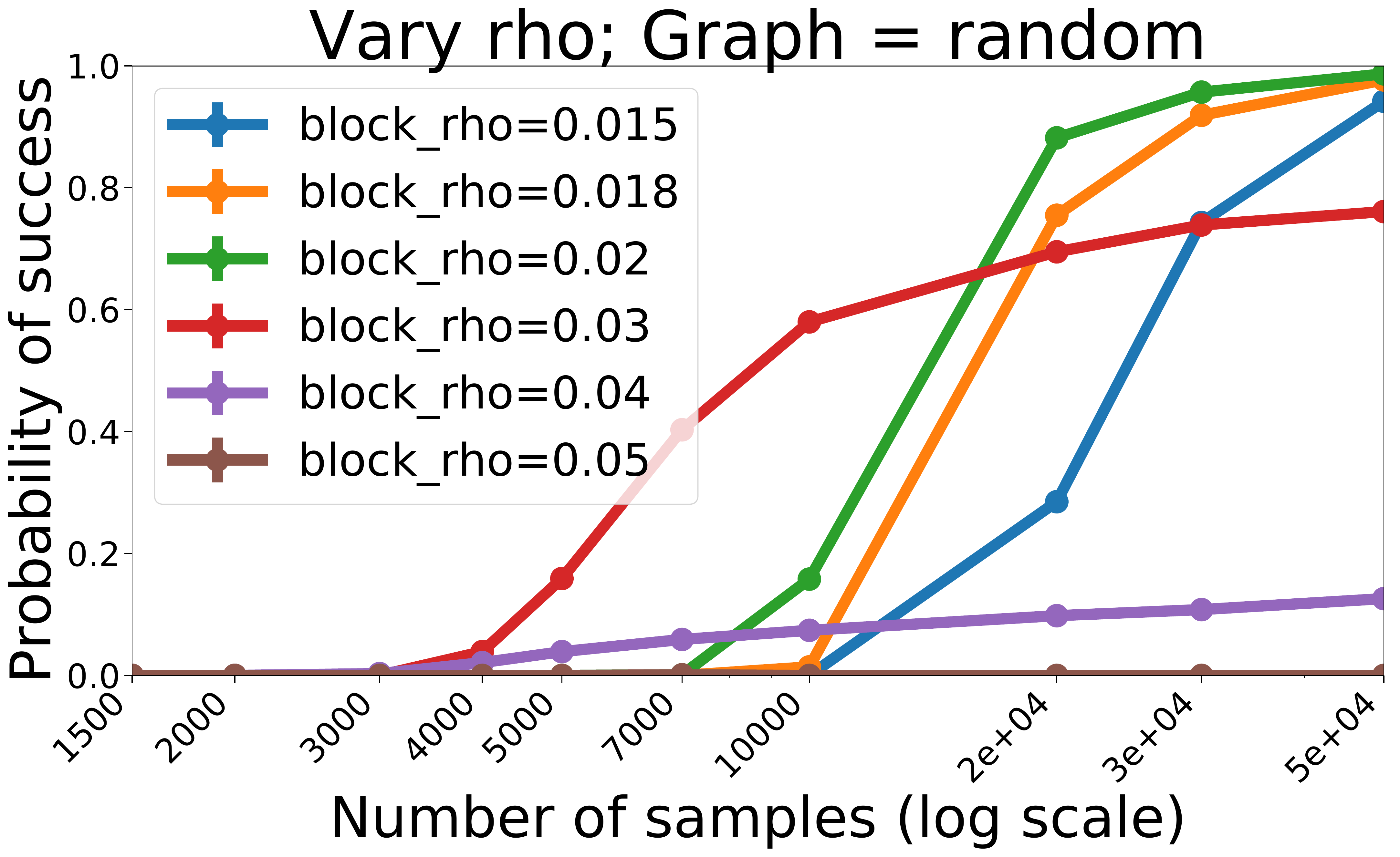}}\hspace{20pt}
\subfigure{\label{fig:r2c3}\includegraphics[width=40mm, height=30mm]{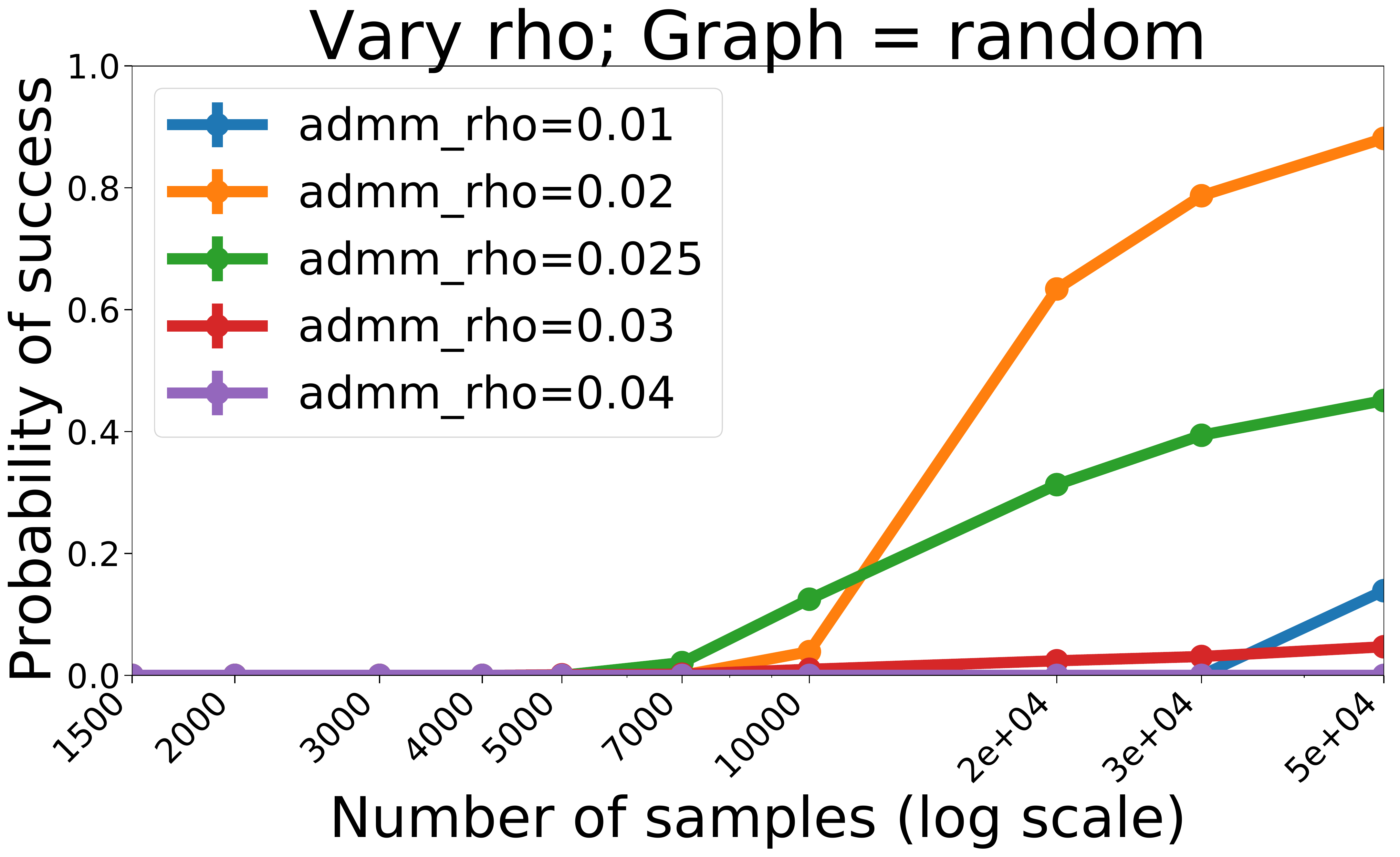}}\hspace{20pt}
\caption{We attempt to illustrate how the traditional methods are very sensitive to the hyperparameters and it is a tedious exercise to finetune them. The problem setting is same as described in section(\ref{sec:tolerance-of-noise}). For all the $3$ methods shown above, we have already tuned the algorithm specific parameters to a reasonable setting. Now, we vary the $L_1$ penalty term $\rho$ and can observe that how sensitive the probability of success is with even slight change of $\rho$ values.}
\label{apx:fig:ps_hyper_random}
\end{figure}
Figure(\ref{table:efficiency-gradient-search}) shows the average NMSE values over $100$ test graphs obtained by the ADMM algorithm on the synthetic data for dimension $D=100$ and $M=100$ samples as we vary the values of penalty parameter $\rho$ and lagrangian parameter $\lambda$. The offset parameter for $\Theta_0$ was set to $t=0.1$. 
The NMSE values are very sensitive to the choice of $t$ as well. These parameter values changes substantially for a new problem setting. G-ISTA and BCD follow similar trends.

Additional plots highlighting the hyperparameter sensitivity of the traditional methods for model selection consistency experiments. Refer figure(\ref{apx:fig:ps_hyper_random}).

\subsection{Tolerance of Noise: Experiment details}\label{apx:tolerance-expt-settings}
Details for experiments in figure(\ref{fig:ps-plots}). Two different graph types were chosen for this experiment which were inspired from~\cite{ravikumar2011high}. In the `grid' graph setting, the edge weight for different precision matrices were uniformly sampled from $w\sim\mathcal{U}(0.12, 0.25)$. The edges within a graph carried equal weights. The other setting was more general, where the graph was a random Erdos-Renyi graph with probability of an edge was $p=0.05$. The off-diagonal entries of the precision matrix were sampled uniformly from $\sim\mathcal{U}[0.1, 0.4]$. 
%For the grid graph, the off-diagonal entries of the precision matrix are sampled uniformly from $\sim\mathcal{U}[0.12, 0.25]$.
%For the random graph, each graph is a random erdos-renyi graph with probability of edge = $0.05$. The off-diagonal entries of the precision matrix are sampled uniformly from $\sim\mathcal{U}[0.1, 0.4]$. 
The parameter settings for \texttt{GLAD} were the same as described in Appendix~\ref{apx:glad-architecture-details}. The model with the best PS performance on the validation dataset was selected. train/valid/test=10/10/100 graphs were used with 10 sample batches per graph.

\subsection{\texttt{GLAD}: Comparison with other Deep Learning based methods}\label{apx-sec:dl-comparison}
Table(\ref{table:dl-comparison}) shows AUC (with std-err) comparisons with the DeepGraph model. For experiment settings, refer Table~1 of~\cite{belilovsky2017learning}. Gaussian Random graphs with sparsity $p=0.05$ were chosen and edge values sampled from $\sim\mathcal{U}(-1, 1)$. \texttt{GLAD} was trained on only $10$ graphs with $5$ sample batches per graph. The dimension of the problem is $D=39$. The architecture parameter choices of \texttt{GLAD} were the same as described in Appendix~\ref{apx:glad-architecture-details} and it performs consistently better along all the settings by a significant AUC margin. 

\subsection{SynTReN gene expression simulator details}\label{apx:syntren-simulator-details}
The SynTReN~\cite{van2006syntren} is a synthetic gene expression data generator specifically designed for analyzing the structure learning algorithms. The topological characteristics of the synthetically generated networks closely resemble the characteristics of real transcriptional networks. The generator models different types of biological interactions and produces biologically plausible synthetic gene expression data enabling the development of data-driven approaches to recover the underlying network.

The SynTReN simulator details for section(\ref{sec:gene-results}). % Describe the data generation process and mention that the underlying distribution is not gaussian
For performance evaluation, a connected Erdos-Renyi graph was generated with probability as $p=0.05$. The precision matrix entries were sampled from $\Theta_{ij}\sim\mathcal{U}(0.1, 0.2)$ and the minimum eigenvalue was adjusted to $1$ by adding an appropriate multiple of identity matrix. 
The SynTReN simulator then generated samples from these graphs by incorporating biological noises, correlation noises and other input noises. All these noise levels were sampled uniformly from $\sim\mathcal{U}(0.01, 0.1)$. The figure(\ref{fig:gene-nmse}) shows the NMSE comparisons for a fixed dimension $D=25$ and varying number of samples $M=[10, 25, 100]$. The number of training/validation graphs were set to 20/20 and the results are reported on $100$ test graphs. In these experiments, only $1$ batch of $M$ samples were taken per graph to better mimic the real world setting. 

Figure(\ref{fig:gene-ecoli}) visualizes the edge-recovery performance of the above trained \texttt{GLAD} models on a subnetwork of true Ecoli bacteria data.which contains $30$ edges and $D=43$ nodes. The Ecoli subnetwork graph was fed to the SynTReN simulator and $M$ samples were obtained. 
SynTReN's noise levels were set to $0.05$ and the precision matrix edge values were set to $w=0.15$. 
For the \texttt{GLAD} models, the training was done on the same settings as the gene-data NMSE plots with $D=25$ and on corresponding number of samples $M$. 
%Though, \texttt{GLAD} was trained on lower dimensional graphs $D=25$, it was able to generalize reasonably to higher dimensional graph $D=43$. 

\subsection{Comparison with ADMM optimization based unrolled algorithm}\label{apx-sec: compare-neuralized}
In order to find the best unrolled architecture for sparse graph recovery, we considered many different optimization techniques and came up with their equivalent unrolled neural network based deep model. In this section, we compare with the closest unrolled deep model based on ADMM optimization, (\texttt{ADMMu}), and analyze how it compares to \texttt{GLAD}. Appendix \ref{apx-sec:different-designs} lists down further such techniques for future exploration. 

\textit{Unrolled model for ADMM:} Algorithm \ref{algo:admmu} describes the unrolled model \texttt{ADMMu} updates. $\rho_{nn}$ was a 4 layer neural network and $\Lambda_{nn}$ was a 2 layer neural network. Both used 3 hidden units in each layer. The non-linearity used for hidden layers was $\tanh$, while the final layer had sigmoid ($\sigma$) as the non-linearity for both ,$\rho_{nn}$ and $\Lambda_{nn}$. The learnable offset parameter of initial $\Theta_0$ was set to $t=1$. It was unrolled for $L=30$ iterations. The learning rates were chosen to be around $[0.01, 0.1]$ and multi-step LR scheduler was used. The optimizer used was `adam'.
\begin{wrapfigure}[22]{R}{0.43\textwidth}
    %\vspace{-2mm}
    \begin{algorithm}[H]
      \DontPrintSemicolon
      \SetKwFunction{Grad}{Grad}
      \SetKwProg{Fn}{Function}{:}{}
      \SetKwFor{uFor}{For}{do}{}
      \SetKwFor{ForPar}{For all}{do in parallel}{}
      \SetKwFunction{ADMMucell}{ADMMu-cell}
      \SetKwFunction{ADMMu}{ADMMu}
        \Fn{\ADMMucell{$\widehat{\Sigma},\Theta,Z,U,\lambda$}}{
          $\lambda \gets \Lambda_{nn}(\norm{Z-\Theta}^2_F,\lambda)$\;
          $Y \gets  \lambda^{-1}\widehat{\Sigma} - Z +U$\;
          $\Theta \gets  {\textstyle\frac{1}{2}}\big(
        -Y + \sqrt{Y^\top Y + {{\textstyle\frac{4}{\lambda}}I }})$\;%\label{eq:gladcell_2}
          \uFor{all $i,j$}{
                $\rho_{ij} = \rho_{nn}(\Theta_{ij}, \widehat{\Sigma}_{ij}, Z_{ij}, \lambda)$\;
                $Z_{ij}\gets \eta_{\rho_{ij}}(\Theta_{ij}+U_{ij})$\;
            }
            $U\gets U + \Theta - Z$\;
         \KwRet $\Theta,Z,U,\lambda$
        }
        \Fn{\ADMMu{$\widehat{\Sigma}$}}{
            $\Theta_0 \gets (\widehat{\Sigma}+tI)^{-1}$, $\lambda_0\gets1$\;
            \uFor{$k=0$ to $K-1$}{
                $\Theta_{k+1},Z_{k+1},U_{k+1},\lambda_{k+1}$\; $\gets$\ADMMucell{$\widehat{\Sigma},\Theta_{k}$\\ \qquad\qquad\qquad$,Z_{k},U_{k},\lambda_{k}$}
            }
            \KwRet $\Theta_K,Z_K$
        }
    \caption{\texttt{ADMMu} }\label{algo:admmu}
    \end{algorithm}
\end{wrapfigure} 

Figure \ref{apx:fig:neural-compare} compares \texttt{GLAD} with \texttt{ADMMu} on the convergence performance with respect to synthetically generated data. The settings were kept same as described in Figure \ref{fig:trad-vs-learned}. As evident from the plots, we see that \texttt{GLAD} consistently performs better than \texttt{ADMMu}. We had similar observations for other set of experiments as well. Hence, we chose AM based unrolled algorithm over ADMM's as it works better empirically and has less parameters.

%We are not entirely confident about the reasons for AM based unrolled algorithm performing better than ADMM one. 
Although, we are not entirely confident but we hypothesize the reason for above observations as follows. In the ADMM update equations (\ref{eq:sadmm-1} \& \ref{eq:sadmm-2}), both the Lagrangian term and the penalty term are intuitively working together as a `function' to update the entries $\Theta_{ij}, Z_{ij}$. Observe that $U_k$ can be absorbed into $Z_k$ and/or $\Theta_k$ and we expect our neural networks to capture this relation. We thus expect \texttt{GLAD} to work at least as good as \texttt{ADMMu}. In our formulation of unrolled \texttt{ADMMu} (Algorithm~\ref{algo:admmu}) the update step of $U$ is not controlled by neural networks (as the number of parameters needed will be substantially larger) which might be the reason of it not performing as well as \texttt{GLAD}. 
%Therefore, we removed $U_k$ and 
Our empirical evaluations corroborate this logic that just by using the penalty term we can maintain all the desired properties and learn the problem dependent `functions' with a small neural network. 

\begin{figure}%[b]
\centering 
\subfigure{\label{fig:r2c1}\includegraphics[width=40mm]{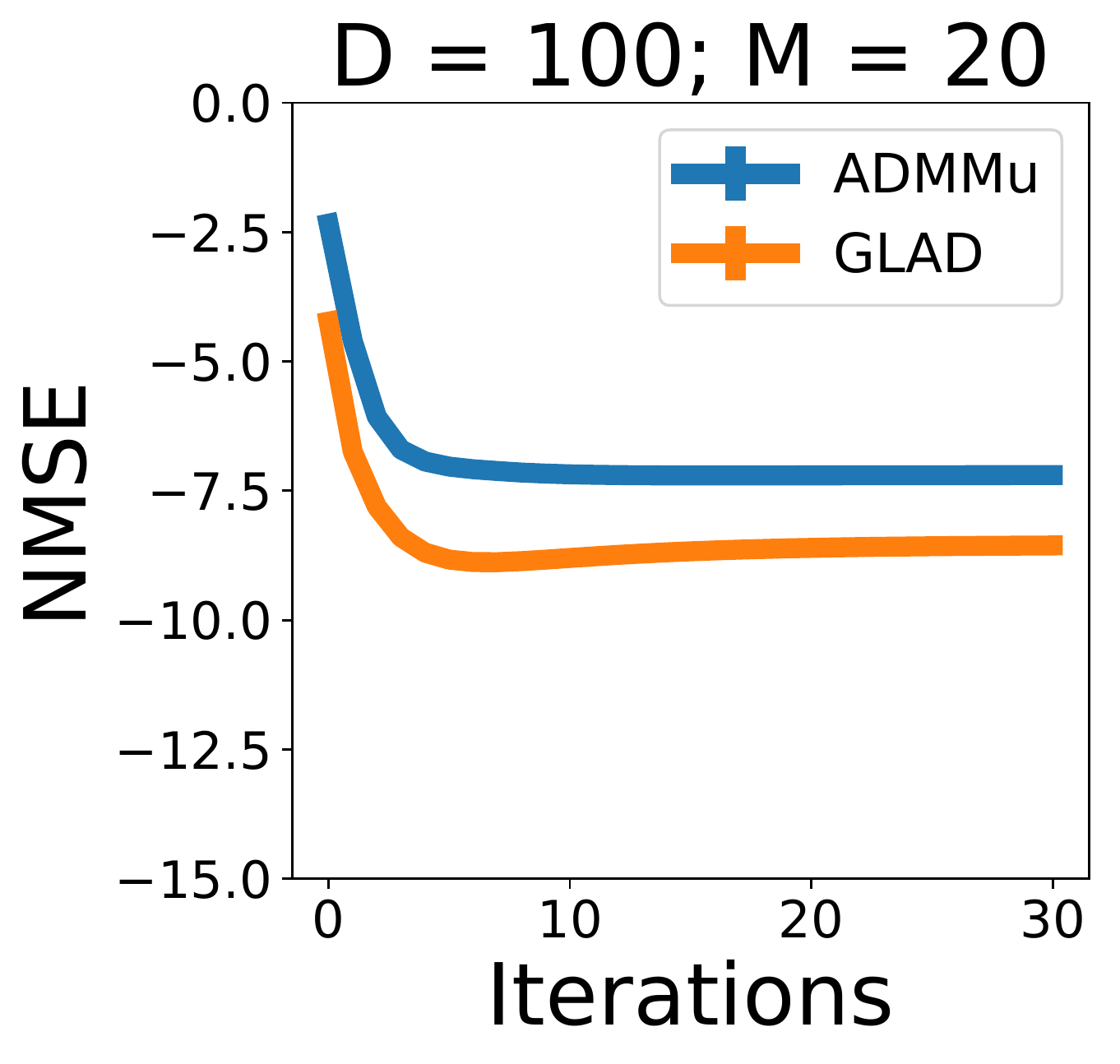}}
\subfigure{\label{fig:r2c2}\includegraphics[width=40mm]{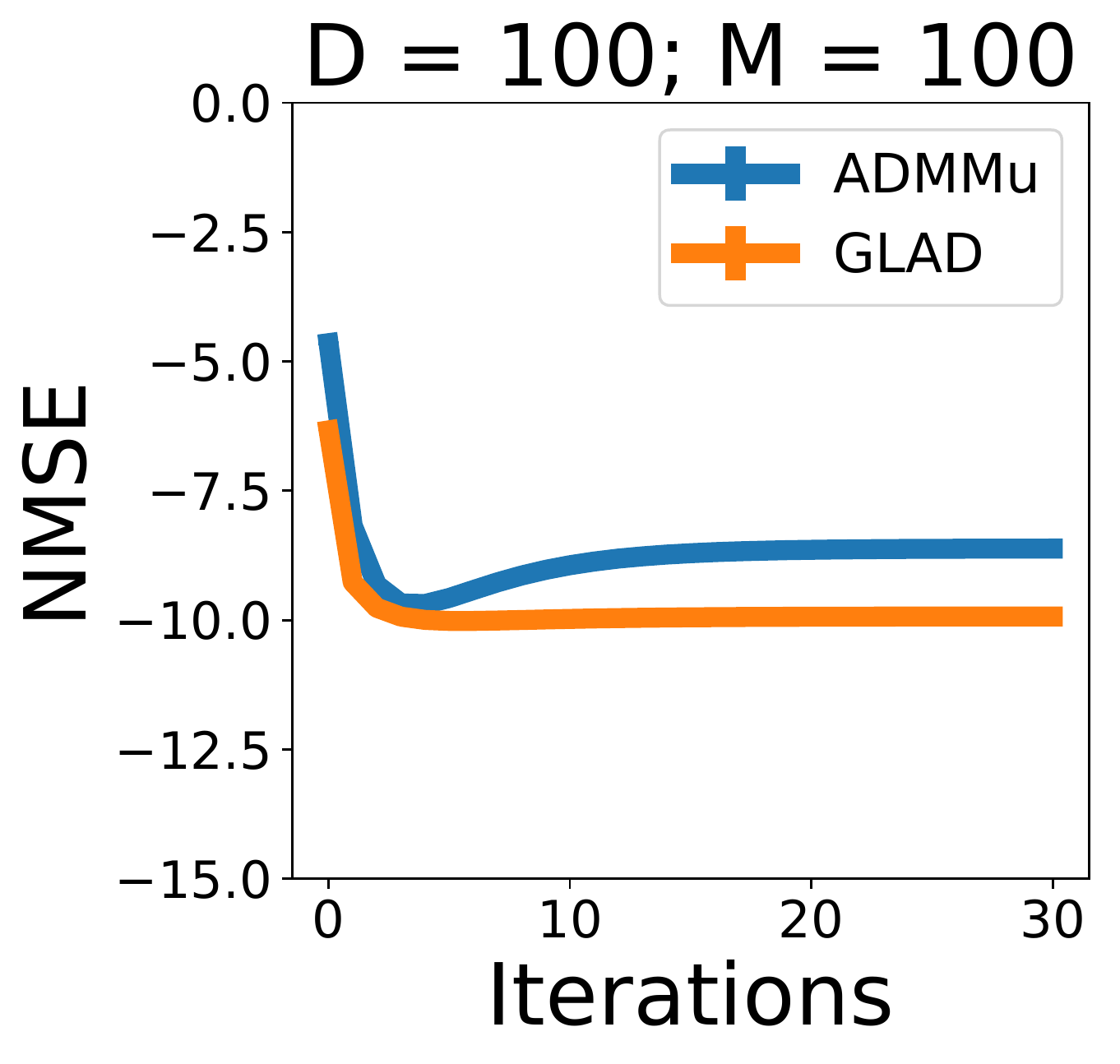}}
\subfigure{\label{fig:r2c3}\includegraphics[width=40mm]{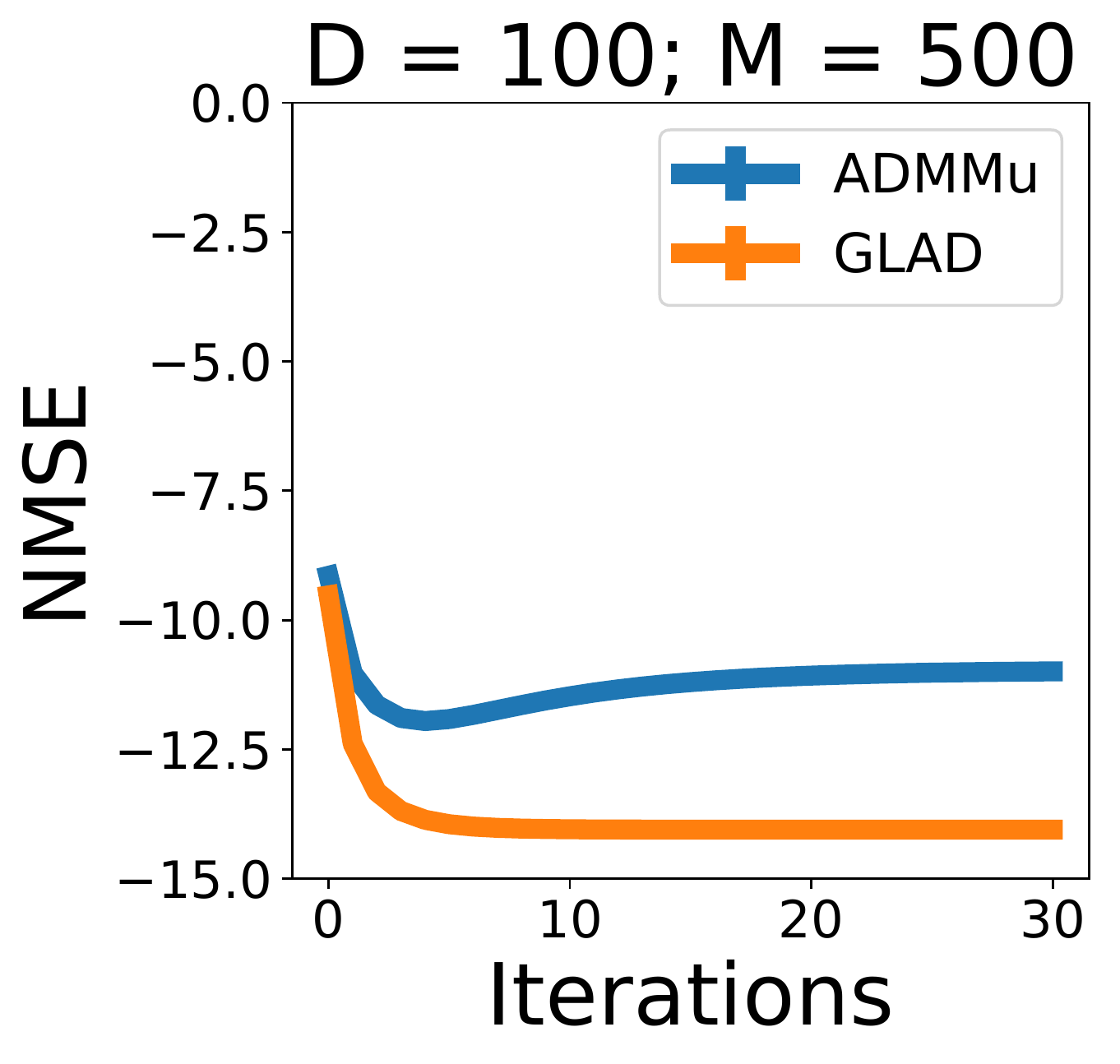}}
\caption{Convergence on Erdos-random graphs with fixed sparsity ($p=0.1$). All the settings are same as the fixed sparsity case described in Figure \ref{fig:trad-vs-learned}. We see that the AM based parameterization `\texttt{GLAD}' consistently performs better than the ADMM based unrolled architecture `\texttt{ADMMu}'.}
\label{apx:fig:neural-compare}
\end{figure}
%\textit{Unrolled model for G-ISTA:}

\subsection{Different designs tried for data-driven algorithm}\label{apx-sec:different-designs}
We tried multiple unrolled parameterizations of the optimization techniques used for solving the graphical lasso problem which worked to varying levels of success. We list here a few, in interest for helping researchers to further pursue this recent and novel approach of data-driven algorithm designing. 

\begin{enumerate}[leftmargin=*,nolistsep]
\item ADMM + ALISTA parameterization: The threshold update for $Z_{k+1}^{AM}$ can be replaced by ALISTA network~\cite{liu2018alista}. The stage I of ALISTA is determining W, which is trivial in our case as $D=I$. So, we get $W=I$. Thus, combining ALISTA updates along with AM's we get an interesting unrolled algorithm for our optimization problem.
\item G-ISTA parameterization: We parameterized the line search hyperparameter $c$ as well as replaced the next step size determination step by a problem dependent neural network of Algorithm(1) in \cite{rolfs2012iterative}. The main challenge with this parameterization is to main the PSD property of the intermediate matrices obtained. Learning appropriate parameterization of line search hyperparameter such that PSD condition is maintained remains an interesting aspect to investigate. 
    \item Mirror Descent Net: We get a similar set of update equations for the graphical lasso optimization.  We identify some learnable parameters, use neural networks to make them problem dependent and train them end-to-end.
    \item For all these methods we also tried unrolling the neural network as well. In our experience we found that the performance does not improve much but the convergence becomes unstable.
\end{enumerate}

\subsection{Results on real data}\label{apx-sec:real-data}
We use the real data from the `DREAM 5 Network Inference challenge' \citep{marbach2012wisdom}. This dataset contains 3 compendia that were obtained from microorganisms, some of which are pathogens of clinical relevance.  Each compendium consists of hundreds of microarray experiments, which include a wide range of genetic, drug, and environmental perturbations. We test our method for recovering the true E.coli network from the gene expression values recorded by doing actual microarray experiments. 

The E.coli dataset contains $4511$ genes and $805$ associated microarray experiments. The true underlying network has $2066$ discovered edges and $150214$ pairs of nodes do not have an edge between them. There is no data about the remaining edges. For our experiments, we only consider the discovered edges as the ground truth, following the challenge data settings. We remove the genes that have zero degree and then we get a subset of $1081$ genes. For our predictions, we ignore the direction of the edges and only consider retrieving the connections between genes.

We train the GLAD model using the SynTReN simulator on the similar settings as described in Appendix~\ref{apx:syntren-simulator-details}. Briefly, GLAD model was trained on D=50 node graphs sampled from Erdos-Renyi graph with sparsity probability $\sim U(0.01, 0.1)$, noise levels of SynTReN simulator sampled from $\sim U(0.01, 0.1)$ and $\Theta_{ij} \sim U(0.1, 0.2))$. The model was unrolled for 15 iterations. This experiment also evaluates GLAD's ability to generalize to different distribution from training as well as scaling ability to more number of nodes.

We report the AUC scores for E.coli network in Table~\ref{table:ecoli-compare}
%\footnote{Other Methods - \url{https://www.synapse.org/\#!Synapse:syn2787209/wiki/70352}}
. We can see that \texttt{GLAD} improves over the other competing methods in terms of Area Under the ROC curve (AUC). We understand that it is challenging to model real datasets due to the presence of many unknown latent extrinsic factors, but we do observe an advantage of using data-driven parameterized algorithm approaches.

% %\begin{wraptable}[3]{R}{\textwidth}
% \begin{table}
% \centering
% \vspace{3mm}
% \caption{\small \texttt{GLAD} vs other competing methods for the DREAM network inference challenge real E.Coli data.} 
% \resizebox{0.9\textwidth}{!}{
% \begin{tabular}{|c|c|c|c|c|c|c|c|}
% \hline
% Methods & Regression 6 & MI 1 & Correlation 2 & Bayesian 6 &Other 2&Meta 5&\textbf{\texttt{GLAD}} \\ \hline%\vspace{1pt}
% AUC     & 0.613 &0.59 & 0.58 & 	0.559 & 0.671 & 0.621 & \textbf{0.762}\\ \hline%\vspace{1pt}
% \end{tabular}}
% \label{table:ecoli-compare}
% \vspace{-1mm}
% \end{table}
% %\end{wraptable}

\begin{wraptable}[3]{R}{0.4\textwidth}
%\begin{table}
%\small
\centering
\vspace{-10mm}
\caption{\small \texttt{GLAD} vs other methods for the DREAM network inference challenge real E.Coli data.} 
\resizebox{0.9\textwidth}{!}{
\begin{tabular}{|c|c|c|c|}
\hline
Methods & BCD & GISTA &\textbf{\texttt{GLAD}} \\ \hline%\vspace{1pt}
AUC     & 0.548 & 0.541 & \textbf{0.572}\\ \hline%\vspace{1pt}
\end{tabular}}
\label{table:ecoli-compare}
\vspace{-2mm}
%\end{table}
\end{wraptable}

\subsection{Scaling for large matrices}\label{apx-sec:scaling}
We have shown in our experiments that we can train GLAD on smaller number of nodes and get reasonable results for recovering graph structure with considerably larger nodes (Appendix\ref{apx-sec:real-data}). Thus, in this section, we focus on scaling up on the inference/test part. 

With the current GPU implementation, we can can handle around 10,000 nodes for inference. For problem sizes with more than 100,000 nodes, we propose to use the randomized algorithm techniques given in \cite{kannan2017randomized}.
Kindly note that scaling up \texttt{GLAD} is our ongoing work and we just present here one of the directions that we are exploring. The approach presented below is to give some rough idea and may contain loose ends.

Randomized algorithms techniques are explained elaborately in \cite{kannan2017randomized}. Specifically, we will use some of their key results
\begin{itemize}[leftmargin=*,nolistsep]
    \item P1. (Theorem 2.1) We will use the length-squared sampling technique to come up with low-rank approximations
    \item P2. (Theorem 2.5) For any large matrix $A\in R^{m\times n}$, we can use approximate it as $A\approx CUR$ , where $C \in R^{m\times r}, U\in R^{s\times r}, R \in R^{r\times m}$.
    \item P3. (Section 2.3) For any large matrix $A\in R^{m\times n}$, we can get its approximate SVD by using the property $E(R^TR)=A^TA$ where $R$ is a matrix obtained by length-squared sampling of the rows of matrix $A$.
\end{itemize}

The steps for doing approximate AM updates, i.e. of equations(\ref{eq:qpenalty_optstep1}, \ref{eq:qpenalty_optstep2}). Using property P3, we can approximate $Y^TY\approx R^TR$.
%Say, the approximate SVD decomposition of $Y\approx U\Sigma V^T$. To calculate the matrix square root, we have
% \begin{align}
%     &\sqrt{Y^\top Y + {\textstyle\frac{4}{\lambda} } I} \approx \sqrt{V\Sigma^2 V^T+ V{\textstyle\frac{4}{\lambda} }V^T } \approx V\sqrt{\left(\Sigma^2 + {\textstyle\frac{4}{\lambda} }\right)} V^T
% \end{align}
\begin{align}
     &\sqrt{Y^\top Y + {\textstyle\frac{4}{\lambda} } I} \approx \sqrt{R^TR + {\textstyle\frac{4}{\lambda} }I } \approx V\sqrt{\left(\Sigma^2 + {\textstyle\frac{4}{\lambda} }\right)} V^T
\end{align}
where $V$ is the right singular vectors of R. Thus, we can combine this approximation with the sketch matrix approximation of $Y\approx CUR$ to calculate the update in equation(\ref{eq:qpenalty_optstep1}). Equation(\ref{eq:qpenalty_optstep2}) is just a thresholding operation and can be done efficiently with careful implementation. We are looking in to the experimental as well as theoretical aspects of this approach.

We are also exploring an efficient distributed algorithm for \texttt{GLAD}. We are investigating into parallel MPI based algorithms for this task (\url{https://stanford.edu/~boyd/admm.html} is a good reference point). We leverage the fact that the size of learned neural networks are very small, so that we can duplicate them over all the processors. This is also an interesting future research direction.

\end{document}